\documentclass[11pt]{article}
\pdfoutput=1

\usepackage[abbrvbib, preprint]{jmlr2e}

\usepackage{amsmath,amsfonts,amssymb,mathtools}
\usepackage{bm}

\usepackage[utf8]{inputenc}
\usepackage[T1]{fontenc}
\usepackage{lmodern}
\usepackage{microtype}

\usepackage{float}
\usepackage{graphicx}

\usepackage[dvipsnames]{xcolor}
\usepackage{booktabs}

\usepackage{tikz}
\usetikzlibrary{arrows.meta,positioning,calc}
\usepackage{pgfplots}
\pgfplotsset{width=10cm,compat=1.9}

\usepgfplotslibrary{external}

\usepackage{algorithm}
\usepackage{algorithmic}

\DeclareMathOperator{\Tr}{Tr}

\newcommand{\RR}{\mathbb{R}}

\newcommand{\NN}{\mathbb{N}}

\newcommand{\XX}{\mathbb{X}}
\newcommand{\UU}{\mathbb{U}}

\newcommand{\EE}{\mathbb{E}}
\newcommand{\PP}{\mathbb{P}}

\newcommand{\cB}{\mathcal{B}}
\newcommand{\cC}{\mathcal{C}}
\newcommand{\cD}{\mathcal{D}}

\newcommand{\cG}{\mathcal{G}}

\newcommand{\cI}{\mathcal{I}}

\newcommand{\cK}{\mathcal{K}}
\newcommand{\cM}{\mathcal{M}}
\newcommand{\cN}{\mathcal{N}}

\newcommand{\cO}{\mathcal{O}}
\newcommand{\cP}{\mathcal{P}}

\newcommand{\cT}{\mathcal{T}}

\newcommand{\cY}{\mathcal{Y}}

\newcommand{\fC}{\mathfrak{C}}

\newcommand{\fc}{\mathfrak{c}}

\newcommand{\inner}[2]{\langle#1, #2\rangle}

\newcommand{\e}{\mathrm{e}} 
\newcommand{\de}{\mathrm{d}} 

\newcommand{\argmin}{\operatorname{argmin}}

\title{An Optimal Control Approach to Transformer Training}

\author{\name Kağan Akman \email kagan.akman@bilkent.edu.tr \\
       \addr Department of Mathematics\\
       Bilkent University\\
       Ankara, 06800, Turkey
       \AND
       \name Naci Saldı \email naci.saldi@bilkent.edu.tr \\
       \addr Department of Mathematics\\
       Bilkent University\\
       Ankara, 06800, Turkey
       \AND
       \name Serdar Y\"{u}ksel \email yuksel@queensu.ca \\
       \addr Department of Mathematics and Statistics\\Queen's University\\Kingston K7L 3N6, ON, Canada}

\editor{}

\jmlrheading{X}{X}{X-XX}{X/XX}{X/XX}{XX-XXX}{Kağan Akman, Naci Saldı, Serdar Yüksel}
\ShortHeadings{An Optimal Control Approach to Transformer Training}{Akman, Saldı, Yüksel}

\begin{document}

\maketitle

\begin{abstract}%
In this paper, we develop a rigorous optimal control-theoretic approach to Transformer training that respects key structural constraints such as (i) realized-input-independence during execution, (ii) the ensemble control nature of the problem, and (iii) positional dependence. We model the Transformer architecture as a discrete-time controlled particle system with shared actions, exhibiting noise-free McKean-Vlasov dynamics. While the resulting dynamics is not Markovian, we show that lifting it to probability measures produces a fully-observed Markov decision process (MDP). Positional encodings are incorporated into the state space to preserve the sequence order under lifting.

Using the dynamic programming principle, we establish the existence of globally optimal policies under mild assumptions of compactness. We further prove that closed-loop policies in the lifted is equivalent to an initial-distribution dependent open-loop policy, which are realized-input-independent and compatible with standard Transformer training. 

To train a Transformer, we propose a triply quantized training procedure for the lifted MDP by quantizing the state space, the space of probability measures, and the action space, and show that any optimal policy for the triply quantized model is near-optimal for the original training problem. 

Finally, we establish stability and empirical consistency properties of the lifted model by showing that the value function is continuous with respect to the perturbations of the initial empirical measures and convergence of policies as the data size increases. This approach provides a globally optimal and robust alternative to gradient-based training without requiring smoothness or convexity.
\end{abstract}
\begin{keywords}
    Transformers, optimal control, McKean---Vlasov dynamics, open-loop policies, robustness to perturbations.
\end{keywords}

\section{Introduction}
Transformers, introduced by \cite{vaswani2017attention}, form a significant class of neural networks with a special mechanism called \emph{self-attention}. Many large language models (LLMs), such as GPT-4, use Transformers as their backbones, and they are proven to be very efficient and successful, see \cite{achiam2023gpt}'s technical paper. A Transformer is a back-to-back concatenation of so-called \emph{feed-forward layers} with \emph{self-attention blocks}. Here, a feed-forward layer is a composition of two affine functions with a sigmoidal nonlinear function in between, providing useful approximation properties where some of them can be reviewed from the work of \cite{hornik1989multilayer}. A \emph{self-attention block} is considered as a function processing a sequence of vectors that "relates" them using a kernel described by two matrices called \emph{query} and \emph{key} matrices. This process returns a probability distribution called attention scores and maps each vector to a weighted average of vectors obtained from attention scores and one more matrix called \emph{value} matrix. In applications, a Transformer is usually endowed with \emph{encoder}, \emph{decoder} blocks, and normalization between each pair of layers and blocks, which can be seen in the work of \cite{vaswani2017attention}.

Conventionally, a Transformer, like many classes of neural networks, is trained by means of gradient-descent-based models. The existence of an optimal solution is guaranteed to be achievable by gradient descent methods if the objective function is convex and sufficiently smooth \cite[see Chapter 9]{boyd2004convex}. However, the loss structure of a Transformer does not possess either of these properties in general. Therefore, gradient-descent-based approaches may only guarantee convergence to a stationary point, which might not be a global optimum, or even a local one. Despite this, there are several studies showing that, under strong conditions, such as over-parametrization \citep{allen2019learning, du2018gradient} or linearity \citep{arora2018convergence}, it is possible to reach a global minimum. 

In this paper, following \cite{geshkovski2025mathematical}'s contribution, we form an optimal control---theoretic framework for Transformers. In particular, we regard Transformers, at the particle level, as a finite-horizon, discrete-time dynamical system exhibiting McKean---Vlasov flow similar to the setup used by \cite{pham2016discrete}; yet in our case, we have shared controls applied to each particle simultaneously and therefore, this formulation can also be viewed as an ensemble control problem as in the work of \cite{li2009ensemble}. Then, we proceed with defining an optimal control problem by specifying a data set, formulating an appropriate information structure, appropriate controls, a cost structure, and the corresponding optimality notion. We observe that this dynamical system, at the particle level, exhibits a non-Markovian nature since the measure dependence of the dynamics breaks the Markovian property. 

A solution to this issue is lifting the problem to the space of probability measures, similar to the methods utilized by \cite{bauerle2023mean,carmona2023model,sanjari2025optimalitysymmetricindependentpolicies}. However, this lifting to empirical measures results in the loss of positional information. Therefore, before the lifting procedure, we incorporate positional encodings to the state space so that we restore the positional relations between particles. This is important because Transformers can approximate sequence-to-sequence functions, which is shown by \cite{yun2019transformers}. Then, we pose the corresponding dynamic programming equations for the final MDP and show the existence of a globally optimal solution that comes in the form of a \emph{closed-loop} policy, or in other words, deterministic and Markovian. Exploiting the deterministic flow of Transformers, it is possible to obtain \emph{open-loop} policies that are optimal. Using such policies, we fix the weights and consequently fix the structure of the Transformer. By fixing the weights, we establish consistency with the realized-input-independence during execution.

We propose a quantization based training algorithm for our formulation since the dynamic programming for continuous state-space MDPs is computationally expensive. By quantizing the state space, the measure space on the quantized state space, and the action space, we define a new measure-valued MDP with finite state and action spaces for which the existence of optimal controls are guaranteed. Then, we show that optimal policies for the triply quantized model are near-optimal for the lifted MDP, and therefore, for the original problem. Finally, we provide an illustrative example to demonstrate the performance of optimal policies obtained from the triply quantized problem with respect to the level of quantization.

Beyond existence and near-optimality results, our lifted formulation enjoys stability properties. Owing to the weak Feller structure and a compactness assumption, the resulting value function depends continuously on the underlying empirical measures. This ensures that as the data set becomes a better sampling of the underlying true distribution, the corresponding optimal control actions become near-optimal for the true distribution, which we establish rigorously.

\subsection{Related Work} To the best of our knowledge, this is the first paper that formulates and provides an alternative approach to train Transformers using the theory of Markov decision processes and dynamic programming. Because of its nature, this work stands on the crossroads of many research areas.
\medskip

\noindent\textbf{Control--theory approaches to neural networks.}  An important line of research focuses on interpreting deep neural networks as dynamical systems. A pioneering work in such perspective is done by \cite{weinan2017proposal} where deep neural networks are considered as discretizations of the continuous-time dynamical systems. An approach by regarding deep learning as  a parameter estimation problem for a nonlinear dynamical system is followed in the work of \cite{haber2017stable}, where they also provide stability and well-posedness results. For a PDE-based treatment of convolutional neural networks, see the study of \cite{ruthotto2020deep}. In the study of \cite{chen2018neural}, residual neural networks are seen as ordinary differential equations. A comprehensive study in control of neural ordinary differential equations is provided by \cite{ruiz2023neural}.
\medskip

\noindent\textbf{Optimal control formulations of training neural networks.} \cite{li2018maximum} develop training algorithms for deep residual neural networks using Pontryagin’s maximum principle. The authors show that the resulting algorithm coincides with the continuous-time counterpart of backpropagation (i.e. gradient descent). Consequently, the obtained controls are generally suboptimal, which is natural since the maximum principle provides only necessary conditions for optimality. The connection between deep learning and ensemble control problems -to which they refer as "mean-field"- is further explored by \cite{E2019}, where residual networks are interpreted as continuous-time control systems. They consider each input in the training data as a part of an ensemble, for which a central controller tries to pick optimal weights. Moreover, they derive the associated Hamilton–Jacobi–Bellman (HJB) equations for this continuous-time control system. A broader overview of optimal control perspectives on neural networks is surveyed by \cite{benning2021}.
\medskip

\noindent\textbf{Mean-field control and McKean---Vlasov dynamics.} Due to the attention mechanism in transformer architectures, the resulting dynamics can naturally be viewed as a McKean---Vlasov type control problem. This differs fundamentally from residual neural networks, since here the empirical distribution of the particles explicitly enters the dynamics. Such distribution-dependent dynamics were studied in the context of stochastic processes induced by stochastic differential equations by \cite{mckean1966class}. From a control-theoretic perspective, McKean---Vlasov control problems —and the closely related mean-field control or mean-field game problems — have been extensively investigated in both continuous-time and discrete-time settings by \cite{bayraktar2018randomized,carmona2015forward,carmona2013control,motte2022mean,pham2017dynamic,pham2016discrete,PY25MckeanVA}. Our primary objective is to build on but also extend this rich body of literature by adapting the machinery to the specific operational constraints of Transformers training, in order to develop algorithms which are optimal in a mathematically precise sense. In addition to optimality, we also arrive at near optimal solutions in an algorithmic and computational sense.
\medskip

\noindent\textbf{Mathematical formulation on Transformer design}. \cite{geshkovski2025mathematical} provide a rigorous formulation of Transformers by modeling them as a continuous-time \emph{uncontrolled} dynamical system of interacting particles. In contrast to our control-theoretic perspective, their focus is on analyzing the intrinsic dynamics and establishing asymptotic clustering properties, following their earlier work \citep{geshkovski2023emergence}. Although control of Transformers is mentioned there as a possible research direction, it is not developed. This direction is precisely what we pursue in the present paper.
The framework provided by \cite{geshkovski2025mathematical} is extended to broader classes of Transformer architectures with different self-attention mechanisms, and the well-posedness of the associated continuity equation is analyzed by \cite{castin2025unified}. Since these works do not formulate Transformers within an optimal control framework with shared controls—as we do here—our contribution can be viewed as complementary. 
\medskip

\noindent\textbf{Generalization problem and robustness to incorrect priors.} Recent advances in generalization problem have taken in multiple directions. \cite{xu2017information} provide an information---theoretic approach to generalization problem by controlling the error using the mutual information between the data set and output. \cite{lugosi2022generalization} take a convex-analytic path by replacing the mutual information with a general strongly convex dependence functional of the joint data–output distribution. \cite{mwigo2025generalization} derive a Transformer-specific generalization bound for shallow Transformers trained with gradient descent.

\subsection{Contributions} This paper provides a rigorous and implementation-faithful optimal control--theoretic formulation of Transformer training. Our main contributions can be summarized as follows:
\begin{itemize}
    \item[(i)] \textbf{Control theoretic formulation compatible with Transformers architecture.} We formulate Transformer architectures as discrete-time controlled dynamical systems of interacting particles evolving under shared control actions with the following salient attributes. 
    \begin{itemize}
    \item \textbf{Ensemble control.} This perspective leads to an ensemble control problem with deterministic McKean---Vlasov dynamics, distinguishing our framework from exchangeable mean-field models and classical single-agent control problems while being equivalent to the original training problem.
\item \textbf{Open-loop design and compatibility with fixed weights in Transformers design.}
    To obtain data-independent coefficients, which are inevitably open-loop, we establish and build on the following novel equivalence properties: a closed-loop policy of a lifted problem is equivalent to an initial-distribution dependent open-loop policy (owing to the deterministic and ensemble nature of the lifted problem), which is, in turn, realized-input-independent and thus a genuine open-loop policy. This procedure corresponds exactly to fixing the weights after the training phase in classical neural network training, and thereby satisfies realized input-independence during execution (Figure \ref{EquivalenceControlPol}). 
    
    \item \textbf{Explicit treatment of positional dependence and MDP lifting.} We lift the problem to the space of probability measures, which induces a measure-valued MDP with finite-horizon terminal cost criterion equivalent to the particle-level problem. However, measure lifting destroys the positional information. Therefore, we incorporate positional encodings into the state space before the lifting procedure to recover positional information.
    
     \item \textbf{Existence of globally optimal controls.} We prove that the lifting dynamics have a Markovian transition kernel satisfying weak Feller property under compactness assumptions (Corollary \ref{cor:transition-kernel}). This allows the application of dynamic programming and establishes the existence of optimal controls that are globally optimal for the lifted MDP (Theorem \ref{thm:transformers-optimality}).
    \end{itemize}

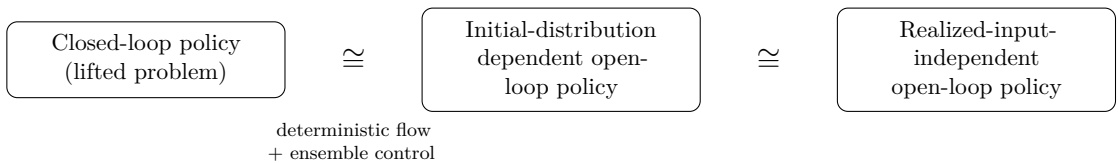
\begin{figure}[H]
\centering

        \begin{tikzpicture}[
          scale=0.8,
          transform shape,
          node distance=18mm,
          box/.style={draw, rounded corners, align=center, inner sep=6pt, text width=4.2cm},
          eq/.style={font=\Large},
          note/.style={font=\footnotesize, align=center, text width=4.8cm},
        ]
            \node[box] (cl) {Closed-loop policy\\(lifted problem)};
            \node[eq, right=8mm of cl] (e1) {$\cong$};
            
            \node[box, right=8mm of e1] (ol0) {Initial-distribution\\dependent open-loop policy};
            \node[note, below=6mm of e1] (why) {deterministic flow\\+ ensemble control};
            
            \node[eq, right=8mm of ol0] (e2) {$\cong$};
            
            \node[box, right=8mm of e2] (ol) {Realized-input-independent\\open-loop policy};
        \end{tikzpicture}
 \caption{Equivalence Properties of Control Policies for Transformer Design.}
\label{EquivalenceControlPol}
\end{figure}

    \item[(ii)] \textbf{Computational design via triply quantized learning scheme.} We propose a training procedure based on quantizing the particle state space, the measure space, and the action space. We show that this formulation gives rise to a finite state, finite action, measure-valued MDP, for which the computation of optimal policies via dynamic programming recursions is tractable. While establishing the existence of optimal policies for the triply quantized measure-valued MDP, we also provide a method for computing such policies (Theorem \ref{thm:triply-quantized-optimal-policies}) via dynamic programming recursions and then, we prove that any optimal policy for the triply quantized case is near-optimal for the original problem (Theorem \ref{thm:triply-quantized-efficiency}); please see Figure \ref{fig:model-hierarchy}. 
    \item[(iii)] \textbf{Robustness to training data, empirical consistency and asymptotic $\Gamma$-convergence to optimality.} By using a double-lifted formulation, which is a generalization of the lifted-formulation used for training, we establish the continuity of the cost functional with respect to the training data (Theorem \ref{thm:robustness}). This result provides an asymptotic answer to the generalization problem for Transformers.
\end{itemize} 
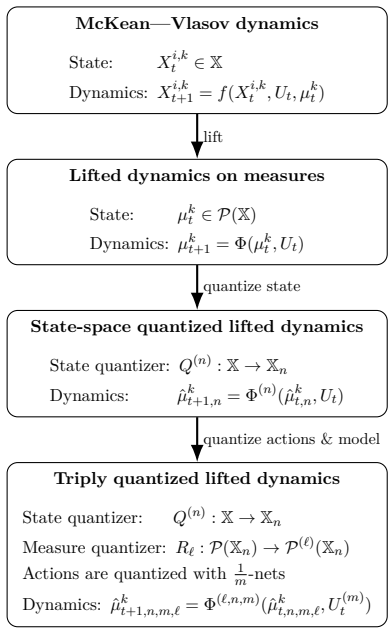
\begin{figure}[H]
\centering
\begin{tikzpicture}[
    scale=0.6,
    transform shape,
    box/.style={draw, rounded corners, align=center, inner sep=6pt, text width=8cm},
    arr/.style={-Latex, thick},
    node distance=10mm,
]

\node[box] (base) {
\textbf{McKean---Vlasov dynamics}
\begin{align*}
    &\text{State: } \hspace{2em}X_t^{i,k}\in\XX\\
    &\text{Dynamics: } X_{t+1}^{i,k}=f(X_t^{i,k},U_t,\mu_t^k)
\end{align*}
};

\node[box, below=of base] (lift) {
\textbf{Lifted dynamics on measures}
\begin{align*}
    &\text{State: }  \hspace{2em}\mu_t^k \in \mathcal{P}(\XX)\\
    &\text{Dynamics: }\mu_{t+1}^k=\Phi(\mu_t^k,U_t)
\end{align*}
};

\node[box, below=of lift] (sq) {
\textbf{State-space quantized lifted dynamics}
\begin{align*}
    &\text{State quantizer: }Q^{(n)}:\XX\to\XX_n\\
    &\text{Dynamics: } \hspace{2.25em}\hat{\mu}_{t+1,n}^k=\Phi^{(n)}(\hat{\mu}_{t,n}^k,U_t)
\end{align*}
};

\node[box, below=of sq] (tq) {
\textbf{Triply quantized lifted dynamics}
\begin{align*}
    &\text{State quantizer: } \hspace{1.25em}Q^{(n)}:\XX\to\XX_n\\
    &\text{Measure quantizer: }R_\ell:\cP(\XX_n)\to\cP^{(\ell)}(\XX_n)\\
    &\text{Actions are quantized with $\textstyle\frac{1}{m}$-nets}\\
    &\text{Dynamics: }\hat{\mu}_{t+1,n,m,\ell}^k=\Phi^{(\ell,n,m)}(\hat{\mu}_{t,n,m,\ell}^k,U_t^{(m)})
\end{align*}
};

\draw[arr] (base) -- (lift) node[midway, right] {\small lift};
\draw[arr] (lift) -- (sq)   node[midway, right] {\small quantize state};
\draw[arr] (sq)   -- (tq)   node[midway, right] {\small quantize actions \& model};

\end{tikzpicture}
\caption{Four-level hierarchy of dynamics.}
\label{fig:model-hierarchy}
\end{figure}

\section{The Proposed Optimal Control Formulation of Transformers}\label{sec:formulation}
In this section, we interpret the forward dynamics of  Transformers as a discrete-time controlled dynamical system of interacting particles and show how the training procedure can be formulated as a finite-horizon optimal control problem. Since each particle evolves through the interactions with the empirical measure of the ensemble, the resulting dynamics is not Markovian at the particle level. The Markovian property is essential in our formulation of Transformers as an optimal control problem as it allows us to decompose the training problem into tractable and smaller subproblems using the dynamic programming principle, which guarantees the existence of optimal weights, and also provides a structural description of them.

In the work of \cite{geshkovski2025mathematical}, Transformers are described via the flow map of a continuous-time dynamical system. We adopt this approach, but, rather than considering the continuous-time system, we study a discrete-time version of the resulting dynamics that models a Transformer block with a feed-forward layer and single-headed self-attention. The formulation presented in this paper can be generalized to the multi-headed self-attention case. We therefore restrict ourselves to the single-headed self-attention for brevity and simplicity of the presentation. 

More precisely, we view Transformers as sequence-to-sequence maps and, motivated by the universal approximation result of \cite{yun2019transformers}, our goal is to identify an optimal Transformer architecture that best explains the training data. We assume that the data are generated, possibly in a noisy manner, by an underlying sequence-to-sequence mapping. For a meaningful comparison between input and output sequences, the position of each element must be preserved. To this end, we equip particles with positional encodings, which allow us to recover positional information after lifting the system to the space of probability measures in order to obtain a Markovian representation. Although this lifting breaks the exchangeability property, a mean-field analysis remains possible by exploiting the underlying McKean---Vlasov structure of the particle dynamics.
\medskip

\noindent\textbf{Notation.} We list the notation used throughout this paper.
\begin{itemize}
    \item $\mathrm{S}\subset\RR^d$, for some $d\in\NN$, denotes the state space of individual particles.
    \item Let $N\in\NN$ be the number of particles on which a Transformer acts. We denote the index set of particles by $\cN := \{1,\ldots,N\}$.
    \item $\beta > 0$ is a system-specific constant appearing in the self-attention block measuring the sensitivity of the dynamics to attention values between different particles.
    \item $T\in\NN$ is the time horizon. In the neural networks literature, $T$ corresponds to the number of layers. We denote the set of all layers by $\overline{\cT} := \{0,1,\ldots,T\}$ and the set of all layers except the output layer, or terminal state, by $\cT := \{0,1,\ldots,T-1\}$. The latter can also be called \emph{decision horizon}.
    \item $\mathrm{PE}_N := \{\frac{i}{N} : i\in\cN\}$ is the set of positional encodings.
    \item For a locally compact Polish space $\Omega$, $\cC_0(\Omega)$ denotes the set of all continuous functions from $\Omega$ to $\RR$ vanishing at infinity. We endow $\cC_0(\Omega)$ with the supremum norm that turns it into an $\RR$-Banach space. Moreover, $\cC_b(\Omega)$ is the set of all continuous and bounded functions from $\Omega$ to $\RR$. With the sup norm, $\cC_b(\Omega)$ is also an $\RR$-Banach space that has $\cC_0(\Omega)$ as a closed subspace.
    \item On the same Polish space $\Omega$ equipped with Borel $\sigma$-field $\cB(\Omega)$, $\cM(\Omega)$ denotes the space of all finite signed Borel measures on $(\Omega, \cB(\Omega))$. Furthermore, $\cP(\Omega)$ denotes the set of all probability measures on $(\Omega, \cB(\Omega))$.
    \item $S_N$ is the set of all bijections from $\cN$ to $\cN$.
\end{itemize}

\subsection{Transformer dynamics as a controlled particle system}
We present below the particle level dynamics of a Transformer block. For each $i\in\cN$, we have
\begin{equation}\label{eq:dynamics}
    \begin{cases}
        \displaystyle x_{t+1}^i = W_t\sigma(A_tx_t^i + b_t) + \sum_{j=1}^N \frac{ \e^{\beta\inner{Q_tx_t^i}{K_tx_t^j}}}{\sum_{j^\prime=1}^N\e^{\beta\inner{Q_tx_t^i}{K_tx_t^{j^\prime}}}}V_tx_t^j & \text{where } t\in\cT \\
        \displaystyle x^i_0 = x^i \in \mathrm{S}.
    \end{cases}
\end{equation}
The function $\sigma:\RR^{d_1}\to\RR^{d_1}$, $d_1\in\NN$, is called an \emph{activation function}. Usually, $\sigma$ is taken as a nonlinear function of $1$-Lipschitz class adding complexity and better approximation capabilities to the model. One of the most widely used activation functions is \emph{rectified linear unit} (ReLU) defined by $\operatorname{ReLU}(x^1,\ldots,x^{d_1}) = (\max\{x^1, 0\},\ldots,\max\{x^{d_1}, 0\})$ for all $(x^1,\ldots,x^{d_1})\in\RR^{d_1}$. In this formulation, the weights $W_t,A_t,b_t,Q_t,K_t,V_t$ are common to the entire ensemble at each time $t\in\cT$, and they are specified as $W_t\in\RR^{d\times d_1}$, $A_t\in\RR^{d_1\times d}$, $b_t\in\RR^d_1$, $Q_t, K_t\in\RR^{d_2\times d}$, and $V_t\in\RR^{d\times d}$ for some $d_1,d_2\in\NN$. For brevity, we occasionally write $U_t = (W_t,A_t,b_t,Q_t,K_t,V_t)$.
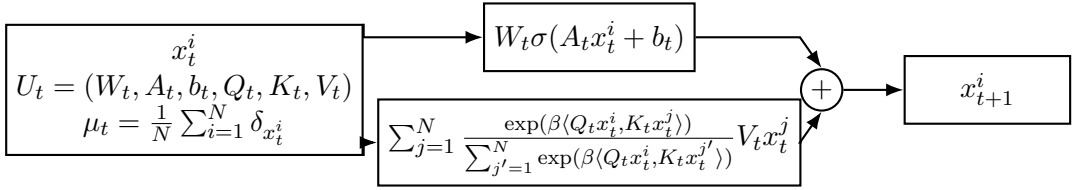
\begin{figure}[H]
    \centering
    \begin{tikzpicture}[
  >=Latex,
  arr/.style={->, thick},
  block/.style={draw, rectangle, thick, align=center, minimum height=9mm, minimum width=22mm},
  bigblock/.style={draw, rectangle, thick, align=left, inner sep=6pt, minimum width=70mm},
  sum/.style={draw, circle, thick, inner sep=0pt, minimum size=5.5mm},
  lab/.style={align=left}
]

\node[block] (ff) {$W_t\sigma(A_tx_t^{i} + b_t)$};
\node[block, yshift=-14mm] (att) {$\sum_{j=1}^N \frac{ \operatorname{exp}(\beta\langle Q_tx_t^i,K_tx_t^j\rangle)}{\sum_{j^\prime=1}^N\operatorname{exp}({\beta\langle Q_tx_t^i,K_tx_t^{j^\prime}\rangle})}V_tx_t^j$};
\node[block, left=30mm of $(ff)!0.5!(att)$] (ing)  {$x_t^{i}$\\
$U_t=(W_t,A_t,b_t,Q_t,K_t,V_t)$ \\
$\mu_t=\frac{1}{N}\sum_{i=1}^N\delta_{x_t^{i}}$};

\node[sum, right=28mm of $(ff)!0.5!(att)$] (plus) {$+$};
\node[block, right=8mm of plus] (out) {$x_{t+1}^{i}$};

\draw[arr] (ing.north east) |- (ff.west);
\draw[arr] (ing.south east) |- (att.west);

\draw[arr] (ff.east) -- (plus.west |- ff.east) -- (plus.north);
\draw[arr] (att.east) -- (plus.west |- att.east) -- (plus.south);

\draw[arr] (plus.east) -- (out.west);

\end{tikzpicture}
\caption{Decompositional illustration of the flow \eqref{eq:dynamics}.}
\label{fig:eq1-compact-flow}
\end{figure}

As mentioned earlier, we regard a Transformer as an approximation of a sequence-to-sequence function $T:\mathrm{S}^N\to\mathrm{S}^N$. Unless this function is permutation invariant, which is typically not the case, the position of each particle plays an important role. Therefore, we pair each particle $x_t^i$ with a positional encoding $i/N\in\mathrm{PE}_N$ and form superstates $X_t^i = (i/N, x_t^i)$ for each $i\in\cN$ and all $t\in\overline{\cT}$. We henceforth write $p_i = i/N$ for each $i\in\cN$. Note that the positional encodings are time-independent, i.e. they are fixed throughout the time horizon. To this extent, we define the augmented dynamics as follows.
\begin{equation}\label{eq:augdynamics}
    \begin{cases}
        \displaystyle X_{t+1}^i := (p_i,x_{t+1}^i) = \left(p_i,W_t\sigma(A_tx_t^i + b_t) + \sum_{j=1}^N \frac{ \e^{\beta\inner{Q_tx_t^i}{K_tx_t^j}}}{\sum_{j^\prime=1}^N\e^{\beta\inner{Q_tx_t^i}{K_tx_t^{j^\prime}}}}V_tx_t^j\right) & \text{where } t\in\cT \\
        \displaystyle X_0^i = (p_i,x^i_0) = (p_i,x^i) \in \mathrm{PE}_N\times\mathrm{S}\\ 
    \end{cases}
\end{equation}
For brevity, we denote the state space of particles by $\XX$, that is $\XX := \mathrm{PE}_N\times\mathrm{S}$. We note that $\mathrm{PE}_N$ can be substituted with another finite set; for instance,  in the formulation of \cite{vaswani2017attention}, sinusoidal positional encodings are utilized. In comparison with a standard neural network, \(N\) typically represents the network \emph{width}; that is, the maximum number of nodes across all layers, and the width may vary from layer to layer. In contrast, our formulation assumes a fixed width throughout.

Considering the design of control actions, i.e. weights, one may choose \(d_1 > d\); in this case, the vectors \(Q_t x_t^i\) and \(K_t x_t^j\) take values in \(\mathbb{R}^{d_1}\). Accordingly, the matrices \(Q_t\) and \(K_t\) can be interpreted as linear projections onto the so-called \emph{query} and \emph{key} spaces, respectively. 

The dynamics \eqref{eq:augdynamics} define a discrete-time controlled dynamical system of interacting particles with shared actions. In the next part, we introduce the data set and the corresponding information structure of the optimal control problem.
\subsection{Data set-induced dynamics and information structure} In the classical training of Transformers, given some data set, the goal is to minimize some cost functional over the set of weights. We now formulate the corresponding scenario as an optimal control problem. To this end, suppose that we are given a data set $\cD := \{(\mathbf{x}^k, \mathbf{y}^k): k=1,2,\ldots,K\}\subseteq\mathrm{S}^N\times\mathrm{S}^N$, for some $K\in\NN$. In some applications, the learning goal is to approximate a classifier function $T:\mathrm{S}^N\to\cY$ where $\cY$ is a finite set of certain classifications. In such problems, the output vector is mapped to a class label by a known function. In our formulation, we consider the case in which Transformer maps feature vectors to label vectors, and the corresponding classes of the latter are known. Conceptually, this is equivalent to previously mentioned sequence-to-sequence function approximation paradigm and captures most learning scenarios.

For each $k\in\cK := \{1,\ldots,K\}$, the feature vector $\mathbf{x}^k = (x^{1,k},\ldots,x^{N,k})\in\mathrm{S}^N$ gives rise to an ensemble of particles $(X_t^{1,k},\ldots,X_t^{N,k})$ that evolves with \eqref{eq:augdynamics} with initial states $X_0^{i,k} = (p_i, x^{i,k})$ for each $i\in\cN$. In our problem, we consider a \emph{centralized information structure}
\begin{equation}
    \cI_t^{i,k} = \cI_t := \{X_s^{i, k}, U_s : i\in\cN, k\in\cK, 0 \leq s \leq t-1\}\cup\{X_t^{i, k} : i\in\cN, k\in\cK\}
\end{equation}
for all $t\in\overline{\cT}\setminus\{0\}$ and all $i\in\cN$. If $t=0$, we have $\cI_0 = \{X_0^{i,k}: i\in\cN, k\in\cK\}$. The equality $\cI_t^{i,k} = \cI_t$ indicates that each particle $X_t^{i,k}$ has access to the same information structure. In optimal control problems, the actions are chosen using a \emph{policy}, which is a sequence of either functions mapping the information structure to actions over the time horizon or stochastic kernels on actions given the information structure. However, choosing an action probabilistically is not compatible with Transformers since they work with concrete fixed weights. Therefore, we need policies of the form $(\gamma_t)_{t\in\cT}$ consisting of functions of the form $\gamma_t(\cI_t) = U_t\in\UU$ for each $t\in\cT$. In Section 3, we show that the choice of actions can be done in a deterministic way if we lift our problem to the space of probability measures. However, we subsequently show that the deterministic policies based on feedback (closed-loop policies) are not compatible with Transformers either. Therefore, we introduce the initial-condition-dependent open-loop policies obtained from closed-loop policies that are optimal and fully compatible with the usual training paradigm of Transformers.

We note that the choice of actions is not performed by a single particle or a group of particles that exclude some other particles. On the contrary, the choice is done by observing all of the ensembles. This is called an \emph{ensemble control problem} since we are trying to find optimal controls that are optimal for each member of the ensemble. 

The choice of information structure is intentional since we seek conformity with the current training configurations in which similar mechanisms are used. In the classical training of Transformers, the weights are trained using the information accumulated across all tokens and all samples, the controls in our formulation have to depend on all past information as well as the current information, which naturally results in a centralized information structure. This property of the centralized information structure is called the \emph{perfect recall} property, that is, we make decisions while having all possible information available. In general, an information structure might be more complicated, see the book of \cite{yuksel2024stochastic} for a detailed analysis of information structure-dependent properties of team optimal control problems.

In the next subsection, we complete our optimal control problem formulation of the training procedure of Transformers by introducing the cost functional and giving a sensible optimality criterion.

\subsection{Cost functional and optimality criterion}
Having specified the dynamics and the information structure, we now complete our formulation of Transformer training as an optimal control problem by defining a cost functional and the associated optimality criterion. In standard training, the objective is to minimize the empirical risk over the data set; in our formulation, we minimize a cost measuring the average discrepancy between the terminal states and the target vectors.

Let $\Gamma$ be the set of all admissible policies. For any $\underline{\gamma}\in\Gamma$, we write $x_{t,\underline{\gamma}}^{i,k}$ for the particle evolving according to \eqref{eq:dynamics} where the controls are obtained by $\underline{\gamma}$ and the centralized information structure we defined earlier. For the label vectors, we write $\mathbf{y}^k = (y^{1,k},\ldots,y^{N,k})$ for all $k\in\cK$. We compare the feature and label vectors using a loss function $L:\mathrm{S}\times\mathrm{S}\to\RR_+$. The minimization task is as follows
\begin{align*}
    &\underset{\displaystyle\underline{\gamma}\in\Gamma}{\text{minimize}}\quad\frac{1}{NK}\sum_{i=1}^N\sum_{k=1}^K L(x_{T,\underline{\gamma}}^{i,k}, y^{i,k}) \\
    &\text{subject to: } x_{t,\underline{\gamma}}^{i,k} \text{ evolving according to \eqref{eq:dynamics}},\\
    &\hspace{5em} x_{0,\underline{\gamma}}^{i,k} = x^{i,k} \text{ for all } i\in\cN, k\in\cK, t\in\cT. \tag{OP}\label{prob:original}
\end{align*}

In Section \ref{sec:robustness}, we discuss the robustness properties of the value function associated to the lifted formulation presented in Section \ref{sec:lifting-section}. This is the optimal control--theoretic analogue of the generalization problem in neural network. To this end, we provide another representation of the problem expressed as a finite-horizon terminal-cost-only expected cost functional minimization. Let $P_\cD$ denote the true distribution of the data set $\cD$, that is $P_\cD \in \cP(\mathrm{S}^N\times\mathrm{S}^N)$. For convenience, we write
\begin{equation*}
    \mathbf{L}(\mathbf{x}, \mathbf{y}) := \frac{1}{N}\sum_{i=1}^N L(x^i, y^i)
\end{equation*}
for all $\mathbf{x} := (x^1,\ldots,x^N)$, $\mathbf{y} := (y^1,\ldots,y^N) \in \mathrm{S}^N$. We consider a scenario in which the central controller try to find an policy that is optimal for the underlying distribution. 
\begin{align*}
    &\hspace{-3em}\underset{\displaystyle\underline{\gamma}\in\Gamma}{\text{minimize}}\quad\EE_{P_\cD}^{\underline{\gamma}}\mathbf{L}(\mathbf{x}_{T}, \mathbf{y}). \tag{ROP}\label{prob:robustness-original}
\end{align*}
Here $\mathbf{x}_T$ denotes the terminal state of the process, whose initial distribution is the first marginal of $P_\cD$, evolving in the trajectory specified by the Transformer dynamics \eqref{eq:dynamics} and by the policy $\underline{\gamma}\in\Gamma$.

There are several benefits of the presentation of the optimization problem as if the initial data is random: 
\begin{itemize}
\item[(i)] This captures the standard setup given in (OP) above, where the data is empirically given, in which case, the distribution is the realized empirical distribution for training. 
\item[(ii)] As noted, this allows us to formulate the robustness to initial training data and asymptotic consistency as the data converges weakly or in the Wasserstein sense to a true distribution based on which data is generated. 
\item[(iii)] This formulation makes the subtle point of open-loop/closed-loop policy discussion transparent: The policy we obtain is closed loop in the distribution, but open loop in the realization during the implementation of the Transformer for any source once the Transformer is trained.
\end{itemize}

In the definition of cost functional, we only consider feature vectors and omit positional encodings. Since $(p_i,x_{T,\underline{\gamma}}^{i,k})$ and $(p_i, y^{i,k})$ have the same positional encodings for all $i\in\cN$ and all $k\in\cK$, any sensible metric on $\mathrm{PE}_N$ would provide a zero contribution to the cost structure. The reason why we introduce the positional encodings is motivated by preserving the positional information while lifting the problem to probability measures.

In this paper, we primarily will consider the quadratic distance criterion, i.e.
\begin{equation}
    L(x,y) := \|x-y\|_2^2
\end{equation}
for all $x,y\in\mathrm{S}$, where $\|\cdot\|_2$ denotes the Euclidean norm on $\RR^d$. In practice, for example when training LLMs, the cost function differs from the one we use in this paper, see the contribution of \cite{phuong2022formal}. Such variations do not disturb the analysis presented this paper as long as we have a jointly continuous loss function. However, it may result in a different cost structure which might be computed along the time-horizon instead of a terminal-cost.

Although the cost function $(\mathrm{S}^N)^K\ni(x^{i,k})_{i\in\cN,k\in\cK}\mapsto\frac{1}{NK}\sum_{i=1}^N\sum_{k=1}^K\|x^{i,k}-y^{i,k}\|^2$ is strictly convex, the constraints given by Transformer dynamics \eqref{eq:dynamics} are highly nonconvex. Therefore, applying methods requiring convexity such as gradient descent can only guarantee convergence to local minima, and in general, do not provide guarantees of global optimality. In Section 4, we propose a quantization based training method guaranteeing near-optimality, which is not dependent on convexity, but rather requiring mild assumptions on the state and action spaces. 

\begin{definition}\label{def:optimality}
    A strategy $\underline{\gamma}^*\in\Gamma$ is called \emph{globally optimal} if it attains the minimum in (\ref{prob:original}).
\end{definition}

We conclude our formulation of Transformers as an optimal control problem by noting that the particle-level dynamics are not Markovian. Indeed, the evolution of each particle depends on the empirical measure of the ensemble through the self-attention mechanism in \eqref{eq:dynamics}. Therefore, the dynamic programming principle is not directly applicable with the current formulation. In the next section, we show that the particles have a McKean---Vlasov-type dynamics that can be lifted to the space of probability measures, which restores the Markovian property.

\section{McKean---Vlasov Structure and Measure-Valued MDP}\label{sec:lifting-section}
In Section \ref{sec:formulation}, we formulated an optimal control problem modeling Transformer training and argued that, after constructing superstates, the dynamics can be lifted to the space of probability measures, giving rise to a Markov decision process. In this section, we make this lifting explicit, and we establish that the topological requirements for the application of dynamic programming hold under some compactness assumptions.

In particular, we (i) formalize the deterministic McKean---Vlasov representation of the dynamics, (ii) lift the particle-level dynamics to the space of probability measures, (iii) verify the weak Feller property of the resulting transition kernel, (iv) prove the existence of optimal closed-loop policies, and (v) relate these closed-loop policies to initial-data-dependent open-loop policies from the viewpoint of implementation and conformity with the standard training paradigm.

\subsection{Deterministic McKean---Vlasov representation of the particles.} We begin with defining the function $f:\XX\times\UU\times\cP(\XX)\to\XX$ for all $(p,x)\in\XX$, $U = (W, A, b,$ $ Q, K, V) \in\UU$, and $\mu\in\cP(\XX)$ by
\begin{equation}\label{eq:McKean---Vlasov}
    f((p, x), U, \mu) := \left(p,  W\sigma(Ax + b) + \dfrac{1}{\int_\mathrm{S} \mu_\mathrm{S}(\de z) \e^{\beta\inner{Qx}{Kz}}}\int_\mathrm{S} \mu_\mathrm{S}(\de \tilde{z}) \e^{\beta\inner{Qx}{K\tilde{z}}}V\tilde{z}\right),
\end{equation}
which models the flow of a single particle. To make this concrete, take $(p,x) = (p_i, x_t^{i,k}) = X_t^{i,k}$, $U_t\in\UU$ as in \eqref{eq:augdynamics}, and $\mu_t^k = \frac{1}{N}\sum_{i=1}^N\delta_{(p_i,x_t^{i,k})}$ for all $i\in\cN$, $k\in\cK$, and $t\in\cT$. Then 
\begin{align*}
    f(X_t^{i,k}, U_t, \mu_t^k) &= \left(p_i, W_t\sigma(A_tx_t^{i,k} + b_t) + \frac{1}{\int_\mathrm{S}\mu_t^k(\de z)\e^{\beta\langle Q_tx_t^{i,k}, K_tz\rangle}}\int_S\mu_t^k(\de z)\e^{\beta\langle Q_tx_t^{i,k}, K_tz\rangle}V_tz\right)\\
    &= \left(p_i,  W_t\sigma(A_tx_t^{i,k} + b_t) + \sum_{j=1}^N \frac{ \e^{\beta\inner{Q_tx_t^{i,k}}{K_tx_t^{j,k}}}}{\sum_{r=1}^N\e^{\beta\inner{Q_tx_t^{i,k}}{K_tx_t^{r,k}}}}V_tx_t^{j,k}\right) = X_{t+1}^{i,k}.
\end{align*}
Note that the flow of any particle does not solely depend on its previous realizations, but it depends explicitly on the empirical of the ensemble $\mu_t^k$. This shows that the particles do not have a Markovian transition regime. The positional encodings are fixed in the definition of $f$ since they are not dynamical variables over the time horizon, but merely structural labels whose purpose is to preserve the order of the particles.

\subsection{The lifted system and its continuity properties.} We continue with precisely explaining the lifting procedure. Then, we prove that the underlying dynamics are continuous under the following assumption:
\begin{assume}\label{assume:compactness}
    The particles' state space $\mathrm{S}$ and the action space $\UU$ are compact.
\end{assume}
As discussed earlier, we justify Assumption \ref{assume:compactness} by widespread usage of compactly supported data in most applications for the state space $\mathrm{S}$. For the compactness of $\UU$, it can be viewed as a regularization of actions such as substituting $\UU$ with a closed, norm-bounded subset of it. In addition, compactness of $\mathrm{S}$ implies the compactness of $\XX$ since $\mathrm{PE}_N$ is finite set, and therefore, is compact in discrete topology. This observation is crucial since we endow the space of probability measures with the weak\textsuperscript{*}-topology, that is a sequence of probability measures $(\mu_n)_{n\in\NN}$ \emph{$\text{weak}^*$-convergent} to $\mu\in\cP(\XX)$ if, for any continuous and bounded $h:\XX\to\RR$, we have 
\begin{equation*}
    \int_\XX\mu_n(\de x)h(x) \xrightarrow{n\to\infty} \int_\XX\mu(\de x)h(x).
\end{equation*} 
Under Assumption \ref{assume:compactness}, one of the nicest properties of $\text{weak}^*$-topology is that $\cP(\XX)$ is compact. Moreover, we have many other choices for a suitable metric. In particular, we consider a modified version of $p$-Wasserstein distance, $p\in[1,+\infty)$, defined by
\begin{equation}\label{eq:p-wasserstein}
    W_{p,\lambda}(P,Q) := \inf_{\pi\in\Pi(P,Q)}\left(\int_{\XX\times\XX}\pi(\de X, \de Y) \|X-Y\|_{p,\lambda}^p \right)^{\frac{1}{p}}
\end{equation} 
where $\|X-Y\|_{p,\lambda}^p = \|x-y\|_p^p + \lambda|p-q|^p$, $X=(p,x), Y = (q,y)\in\XX$, $\lambda > 0$, and 
\begin{equation*}
    \Pi(P,Q) := \{\pi\in\cP(\XX\times\XX) : \pi(\cdot\times\XX) = P(\cdot) \text{ and } \pi(\XX\times\cdot) = Q(\cdot)\}.
\end{equation*}
Here $\Pi(P,Q)$ is the set of couplings of $P$ and $Q$, i.e., all joint probability measures on $\XX\times\XX$ such that the respective marginals of each $\pi\in\Pi(P,Q)$ are $P$ and $Q$. This set is always non-empty since it contains $P\otimes Q$. Moreover, $\Pi(P,Q)$ is compact whenever $\XX$ is since it is a closed subset of $\cP(\XX\times\XX)$, a compact space. Since the functional $\Pi(P,Q)\ni \pi \mapsto \int_{\XX\times\XX} h\ \de\pi$ is continuous for all $h\in\cC(\XX\times\XX)$, by Weierstrass theorem and the fact that $h(X,Y) = \|X-Y\|_\lambda^p$ is continuous for all $p\in[1,+\infty)$, there exists a $\pi^*\in\Pi(P,Q)$ such that $\pi^*$ minimizes \eqref{eq:p-wasserstein}. Here the value of $\lambda$ is critical for enforcing the model to take the positional information into account. Since $\mathrm{S}$ is compact, then for any $x,y\in\mathrm{S}$ and $p,q\in\mathrm{PE}_N$, we have
\begin{equation*}
    \|(p,x) - (q,y)\|_{p,\lambda}^p = \|x-y\|_p^p + \lambda|p-q|^p \leq \operatorname{diam}(\mathrm{S}, \|\cdot\|_p)^p + \lambda < +\infty.
\end{equation*}
The choice of $\lambda$ affects the behavior of the minimization problem in \eqref{eq:p-wasserstein}. If $\lambda$ becomes larger, then the choice of optimal product measure $\pi^*$ will prioritize minimizing the cost contribution of positional encodings, which is what we want since it tolerates the lost positional information in the correct way. Later, we provide an explicit bound for $\lambda$ to ensure this. Considering $p$, we are more interested in the case $p=2$ since it corresponds to minimizing the square loss, which is one of the most popular cost option in deep learning problems.

In our formulation in Section \ref{sec:formulation}, we have $K$-many samples from which we construct the empirical measures $\mu_0^k = \frac{1}{N}\sum_{i=1}^N\delta_{(p_i,x^{i,k})}$ for each $k\in\cK$. Note that this measure representation not only encodes the particle state information, but also the positional information. We define the map $\Phi:\cP(\XX)\times\UU\to\cP(\XX)$ for all $\mu\in\cP(\XX)$ and $U\in\UU$ by calculating the following push-forward measure
\begin{equation}\label{eq:measure-evolution}
    \Phi(\mu, U) := \mu\circ T_{\mu, U}^{-1}
\end{equation} 
where $T_{\mu, U}:\XX\to\XX$ is given by $T_{\mu, U}(X) := f(X,U,\mu)$ for all $X\in\XX$. This can be viewed as the discrete-time analogue of the continuity equation in \citep{geshkovski2025mathematical}. Consequently, we have a concrete functional representation of the measure flow in this particular ensemble controlled McKean---Vlasov system. It follows that, by a change of variables argument, for any measurable $g:\XX\to\RR$, $k\in\cK$ and $t\in\cT$, we have
\begin{align*}
    \int_\XX \Phi(\mu_t^{k}, U_t)(\de p, \de x)g(p,x) &= \int_\XX \mu_t^k\circ T_{\mu_t^k, U_t}^{-1}(\de p, \de x) g(p, x) \\
    &= \int_\XX\frac{1}{N}\sum_{i=1}^N \delta_{(p_i,x_t^{i,k})}(T_{\mu_t^k, U_t}^{-1}(\de p, \de x)) g(p,x) \\
    &= \frac{1}{N}\sum_{i=1}^N g(T_{\mu_t^k, U_t}(p_i,x_t^{i,k})) \\
    &= \frac{1}{N}\sum_{i=1}^N g(f((p_i,x_t^{i,k}), U_t, \mu_t^k)) \\
    &= \frac{1}{N}\sum_{i=1}^N g(p_i,x_{t+1}^{i,k}) = \int_\XX\mu_{t+1}^k(\de p, \de x) g(p,x).
\end{align*}
Thus, $\Phi(\mu_t^k, U_t) = \mu_{t+1}^k$ for all $t\in\cT$ and $k\in\cK$. This indicates that each empirical measure evolves with a Markovian structure. Since our objective is to minimize a cost criterion over all training data, and consequently, all empirical measures, we consider a multi-variable generalization of $\Phi$ as $\bm{\Phi}:\cP(\XX)^K\times\UU\to\cP(\XX)^K$ is given, for all $(\mu^1,\ldots,\mu^K)\in\cP(\XX)^K$ and $U\in\UU$, by
\begin{equation}\label{eq:ensemble-dynamics}
    \bm{\Phi}(\mu^1,\ldots,\mu^K; U) = (\Phi(\mu^1, U), \ldots, \Phi(\mu^K, U)).
\end{equation}
This generalization allows us to control the minimization problem over the data set, a similar approach can be found in \citep{E2019}. On the other hand, we can now see the ensemble process $((\mu_t^1,\ldots,\mu_t^K))_{t\in\overline{\cT}}$, initialized from the data set $\cD$ and evolving with $\bm{\Phi}$, as a deterministic and measure-valued Markov decision process. This ensemble view allows us to use a cost criterion to which all points in the data set contribute. To continue, we now establish the continuity of $\Phi$. To do this, since we construct $\Phi$ using $f$, we first need the continuity of $f$ under Assumption \ref{assume:compactness}.
\begin{proposition}\label{prop:continuity-of-mv}
    Under Assumption \ref{assume:compactness}, the function $f$ in \eqref{eq:McKean---Vlasov} is jointly continuous on $\XX\times\UU\times\cP(\XX)$ where $\cP(\XX)$ is endowed with $\text{weak}^*$-topology. 
\end{proposition}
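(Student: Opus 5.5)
Since the first coordinate of $f$ is simply $p$ (a projection, hence continuous), it suffices to prove joint continuity of the second coordinate. I will split it as $F_1(x,U) := W\sigma(Ax+b)$ plus the attention term $F_2(x,U,\mu) := N(x,U,\mu)/D(x,U,\mu)$, where
\begin{equation*}
    D(x,U,\mu) := \int_\XX \mu(\de q,\de z)\, \e^{\beta\inner{Qx}{Kz}}, \qquad N(x,U,\mu) := \int_\XX \mu(\de q,\de z)\, \e^{\beta\inner{Qx}{Kz}}\, Vz .
\end{equation*}
This reads $\mu_\mathrm{S}$ as the $\mathrm{S}$-marginal of $\mu$, so the integrands are functions on $\XX$ that ignore $q$. $F_1$ is continuous as a composition of continuous maps, since $\sigma$ is Lipschitz. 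The work is therefore in $N$ and $D$, together with the observation that $D$ is bounded away from zero.

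\textbf{Key step (joint continuity of integrals).} Let $g:\XX\times(\mathrm{S}\times\UU)\to\RR$ be continuous. I claim that $(x,U,\mu)\mapsto \int g(X;x,U)\,\mu(\de X)$ is jointly continuous. Take $(x_n,U_n,\mu_n)\to(x,U,\mu)$ and split
\begin{equation*}
    \Big|\int g(\cdot;x_n,U_n)\,\de\mu_n - \int g(\cdot;x,U)\,\de\mu\Big| \le \sup_{X\in\XX}|g(X;x_n,U_n)-g(X;x,U)| + \Big|\int g(\cdot;x,U)\,\de\mu_n - \int g(\cdot;x,U)\,\de\mu\Big|.
\end{equation*}
The second term vanishes by weak$^*$ convergence, because $g(\cdot;x,U)\in\cC_b(\XX)$. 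The first term vanishes because $g$ is uniformly continuous on the compact set $\XX\times\mathrm{S}\times\UU$; compactness comes from Assumption \ref{assume:compactness} and the finiteness of $\mathrm{PE}_N$. This uniform-in-$X$ control is the step I regard as the main point. It is exactly where compactness of $\mathrm{S}$ and $\UU$ is needed: without it, weak$^*$ convergence alone does not handle moving integrands. I apply the claim to $g=\e^{\beta\inner{Qx}{Kz}}$ and to each component of $\e^{\beta\inner{Qx}{Kz}}Vz$; both are continuous in $(z,x,U)$ since they are polynomial expressions inside $\exp$. This gives continuity of $D$ and $N$.

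\textbf{Nonvanishing denominator.} By compactness there is $M<\infty$ with $|\inner{Qx}{Kz}|\le M$ for all $x,z\in\mathrm{S}$ and $U\in\UU$. Since $\mu$ is a probability measure, $D\ge \e^{-\beta M}>0$ uniformly. Hence $F_2=N/D$ is continuous as a quotient of continuous functions with denominator bounded away from $0$. Summing $F_1+F_2$ and pairing the result with the continuous first coordinate $p$ gives joint continuity of $f$ on $\XX\times\UU\times\cP(\XX)$. The topology on the product is the product topology, and since $\cP(\XX)$ is metrizable (e.g.\ by $W_{p,\lambda}$), sequential arguments suffice.
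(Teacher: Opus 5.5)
Your proof is correct. It differs from the paper's mainly in where the uniformity needed for joint continuity is placed.

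The paper argues in three steps:
\begin{itemize}
\item It proves continuity in the measure for fixed $(x,U)$ via weak$^*$ convergence, and asserts without proof that this convergence is uniform over the compact action space.
\item It proves continuity in $(x,Q,K)$ for a fixed measure via dominated convergence, using the Bochner version for the numerator.
\item It combines these through ``uniform convergence in each separate argument,'' and handles the quotient by appealing to the algebra of continuous functions.
\end{itemize}

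You instead split $\bigl|\int g(\cdot;x_n,U_n)\,\de\mu_n-\int g(\cdot;x,U)\,\de\mu\bigr|$ so that the parameter variation is controlled uniformly in the integration variable. That control comes from uniform continuity of $g$ on the compact set $\XX\times\mathrm{S}\times\UU$, and what remains needs only the definition of weak$^*$ convergence for a fixed integrand.

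Your route is more self-contained. The paper's uniform-in-$U$ weak$^*$ convergence is true, but it needs an extra equicontinuity (Arzel\`a--Ascoli type) argument. Your uniformity is immediate, and you need no dominated convergence at all.

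You also make explicit two points the paper glosses over in this proof. First, you show joint continuity of the feed-forward term $W\sigma(Ax+b)$ in $U$, not just in $x$. Second, you show that the denominator $D$ stays away from zero (your bound $D\ge \e^{-\beta M}$), which justifies the division. The paper introduces such a bound, $C_{\min}$, only later in the appendix.
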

\begin{proof}
    It is straightforward to see that $\mathrm{S}\ni x \mapsto W\sigma(Ax+b)$ is continuous. Since continuous functions form a vector space, it suffices to analyze the remaining term. Since $\mathrm{S}$ and $\UU$ are compact, then for any $x\in\XX$ and $U = (W, A, b, Q, K, V)\in\UU$, the function $\mathrm{S}\ni z\mapsto\e^{\beta\inner{Qx}{K\tilde{z}}}$ is bounded. Also, it is continuous. Therefore, for any sequence $(P_n)_{n\in\NN}$ in $\cP(\mathrm{S})$ $\mathrm{w}^*$-converging to $P^*\in\cP(\mathrm{S})$ we have
    \begin{equation*}
        \int_\mathrm{S} P_n(\de\tilde{z})\e^{\beta\inner{Qx}{K\tilde{z}}} \xrightarrow{n\to\infty}\int_\mathrm{S} P^*(\de\tilde{z})\e^{\beta\inner{Qx}{K\tilde{z}}}.
    \end{equation*}
    This convergence is uniform on compact subsets of the action space, which suffices for the entire action space under Assumption \ref{assume:compactness}. Of course, this statement is meaningful if we remind that if a sequence of probability measures $(\mu_n)_{n\in\NN}$ in $\cP(\XX) $ is $w^*$-convergent to $\mu^*\in\cP(\XX)$, then the marginal of each $\mu_n$ on $\mathrm{S}$ converges to the marginal of $\mu^*$ on $\mathrm{S}$ in $\text{weak}^*$-topology since the projection map is continuous \cite[Theorem 2.7]{billingsley2013convergence}. On the other hand, since $\mathrm{S}\ni z\mapsto\e^{\beta\inner{Qx}{K\tilde{z}}}$ is bounded and we are working on finite measures, dominated convergence theorem \cite[Theorem 2.4.5]{cohn2013measure} is applicable. Therefore, for any $Q^{(n)}\to Q$ in $\RR^{d\times k}$, $K_n\to K$ in $\RR^{d\times k}$, $x_n\to x$ in $\mathrm{S}$, we have
    \begin{equation*}
        \int_\mathrm{S}P(\de\tilde{z})\e^{\beta\inner{Q^{(n)}x_n}{K_n\tilde{z}}} \xrightarrow{n\to\infty}\int_\mathrm{S} P(\de\tilde{z})\e^{\beta\inner{Qx}{K\tilde{z}}}. 
    \end{equation*} 
    Note that this convergence is not dependent on the choice of measure $P$. Since we have uniform convergence in each separate argument, we have joint continuity.

    Next, we examine the continuity of $(\mu, Q, K, V, x) \mapsto \int_\mathrm{S} \mu_{\mathrm{S}}(\de \tilde{z}) \e^{\beta\inner{Qx}{K\tilde{z}}}V\tilde{z}$. It is important to see that the integration here is in the sense of Bochner integration in finite dimensions. The strong measurability of $(Q, K, V, x)\mapsto\e^{\beta\inner{Qx}{K\tilde{z}}}V\tilde{z}$ is routine and the continuity of the same function is an application of dominated convergence theorem for Bochner integration \cite[Theorem E.6]{cohn2013measure}. Since the continuous functions form an algebra, $f$ is (uniformly) continuous in the control argument $U$.
    
    Finally, it is immediate that $f$ is uniformly continuous in $p$. Therefore, $f$ is jointly continuous.
\end{proof}

As a consequent of Proposition \ref{prop:continuity-of-mv}, we establish the continuity of $\Phi$ in the next result.
\begin{proposition}\label{prop:phi-continuity}
    Under Assumption \ref{assume:compactness}, $\Phi$ defined in \eqref{eq:measure-evolution} is jointly continuous on $\cP(\XX)\times\UU$.
\end{proposition}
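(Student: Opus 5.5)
We will prove sequential continuity, which suffices because $\cP(\XX)$ with the weak$^*$-topology is metrizable (e.g.\ by $W_{p,\lambda}$ in \eqref{eq:p-wasserstein}) and $\UU$ is a compact subset of a Euclidean space. Fix $(\mu_n,U_n)\to(\mu,U)$ in $\cP(\XX)\times\UU$. We must show that $\Phi(\mu_n,U_n)\to\Phi(\mu,U)$ weak$^*$. By the change of variables in \eqref{eq:measure-evolution}, this means that for every $g\in\cC_b(\XX)$,
\begin{equation*}
\int_\XX \mu_n(\de X)\, g\bigl(f(X,U_n,\mu_n)\bigr) \longrightarrow \int_\XX \mu(\de X)\, g\bigl(f(X,U,\mu)\bigr).
\end{equation*}

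We split the difference into two terms. The first is
\begin{equation*}
\int_\XX \mu_n(\de X)\bigl[g(f(X,U_n,\mu_n)) - g(f(X,U,\mu))\bigr],
\end{equation*}
and the second is
\begin{equation*}
\int_\XX \mu_n(\de X)\, g(f(X,U,\mu)) - \int_\XX \mu(\de X)\, g(f(X,U,\mu)).
\end{equation*}
The second term tends to zero directly from weak$^*$ convergence $\mu_n\to\mu$. Indeed, $X\mapsto g(f(X,U,\mu))$ is continuous by Proposition \ref{prop:continuity-of-mv}, and it is bounded because $g$ is bounded.

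The main obstacle is the first term, since both the integrand and the integrating measure depend on $n$. We control it by the supremum
\begin{equation*}
\sup_{X\in\XX}\bigl|g(f(X,U_n,\mu_n)) - g(f(X,U,\mu))\bigr|,
\end{equation*}
and show that this tends to zero. By Proposition \ref{prop:continuity-of-mv}, $f$ is jointly continuous on $\XX\times\UU\times\cP(\XX)$, and this space is compact: $\XX$ is compact, $\UU$ is compact by Assumption \ref{assume:compactness}, and $\cP(\XX)$ is compact by Prokhorov's theorem. Hence $f$ is uniformly continuous on it, and so is $g\circ f$, because $f$ takes values in a compact set on which $g$ is uniformly continuous. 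Here we use the compact metrizable structure and the metric $W_{p,\lambda}$ on $\cP(\XX)$.

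Alternatively, we can argue by contradiction. Suppose there are points $X_n$ along which the difference stays above some $\varepsilon>0$. Extract a subsequence with $X_n\to X^*$. Joint continuity then forces both $g(f(X_n,U_n,\mu_n))$ and $g(f(X_n,U,\mu))$ to converge to $g(f(X^*,U,\mu))$, which is a contradiction. Either way the first term vanishes, which gives joint continuity of $\Phi$.

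One point needs checking: the image of $f$ must stay in a compact set, or else $g$ should be taken on the range. The range $f(\XX\times\UU\times\cP(\XX))$ is compact as a continuous image of a compact set, so it suffices to work with $g$ restricted to it, and the argument goes through.
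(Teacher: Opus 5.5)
Your proof is correct. It uses the same add-and-subtract of $\int\mu_n(\de X)\,g(f(X,U,\mu))$ as the paper, but it is organized differently in two respects.

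\textbf{Joint versus separate continuity.} The paper treats the two arguments separately. For $\mu_n\to\mu$ with $U$ fixed, it tests against $1$-Lipschitz $g$, bounds one piece by $\sup_X|f(X,U,\mu_n)-f(X,U,\mu)|$ and the other by $\|g\|_\infty W_{1,\lambda}(\mu_n,\mu)$. It does the same for $U_n\to U$ with $\mu$ fixed. It then infers joint continuity from ``uniform continuity in each argument'', which implicitly needs the convergence in $\mu$ to be uniform over $U$. You take a joint sequence $(\mu_n,U_n)\to(\mu,U)$ from the start, so no separate-to-joint step is needed. The first term then vanishes uniformly in $X$, either by uniform continuity of $g\circ f$ on the compact set $\XX\times\UU\times\cP(\XX)$ or by your subsequence argument, which needs only compactness of $\XX$ and joint continuity of $f$.

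\textbf{The second term.} Your treatment here is the sound one. The paper's bound
$\int|\mu_n-\mu|(\de X)\,|g(T_{\mu,U}(X))|\le\|g\|_\infty W_{1,\lambda}(\mu_n,\mu)$
controls a total-variation quantity by a Wasserstein distance, and this fails in general. For instance, take $\mu_n=\delta_{X_n}$ and $\mu=\delta_X$ with $X_n\to X$, $X_n\neq X$: the left side tends to $2|g(T_{\mu,U}(X))|$ while the right side tends to $0$. You instead note that $g\circ T_{\mu,U}$ is a fixed element of $\cC_b(\XX)$ and apply the definition of weak$^*$ convergence directly.

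Your closing concern about the range of $f$ is unnecessary. The function $g\circ f$ is continuous on a compact domain, hence uniformly continuous wherever $f$ lands.

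What your qualitative argument does not give is a quantitative Lipschitz modulus for $\Phi$. The paper obtains this only later, in the appendix (Lemma \ref{lemma:f-lipschitz}, Corollary \ref{cor:phi-lipschitz}), and it is what the quantization estimates actually use.
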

\begin{proof}
    First, we show the uniform convergence in each separate argument. Let $(\mu_n)_{n\in\NN}$ be a sequence in $\cP(\XX)$ that is $W_{1,\lambda}$-convergent to $\mu\in\cP(\XX)$. The topology generated by $W_{1,\lambda}$ is equivalent with $W_{2,\lambda}$ for compact state space. For simplicity, write $X$ for the generic element of $\XX$. Then, for any $g:\XX\to\RR$ with minimal Lipschitz constant $1$ and $U\in\UU$, we have
    \begin{align*}
        \bigg|\int_\XX (\Phi(\mu_n, U)-\Phi(\mu, U))(\de p, \de x) g(p,x)\bigg| &=  \bigg|\int_\XX \mu_n(\de p, \de x) g(T_{\mu_n, U}(p,x)) \\
        &\hspace{5em}-\int_\XX \mu(\de p, \de x)g(T_{\mu, U}(p, x))\bigg|\\
        &\leq \bigg|\int_\XX\mu_n(\de p, \de x) \big(g(T_{\mu_n, U}(p,x)) \\
        &\hspace{10em}- g(T_{\mu,U}(p,x))\big)\bigg| \\
        &\quad+ \bigg|\int_\XX (\mu_n-\mu)(\de p,\de x)g(T_{\mu, U}(p,x))\bigg|\\
        &\leq \int_\XX \mu_n(\de p, \de x)|g(f(p,x; U, \mu_n))\\
        &\hspace{10em}-g(f(p,x; U, \mu))| \\
        &\quad+\int_\XX|\mu_n-\mu|(\de p, \de x)|g(f(p,x; U, \mu))|\\
        &\leq \sup_{(p,x)\in\XX}|f(p,x; U, \mu_n) - f(p,x; U, \mu)|\\
        &\qquad+ \|g\|_{\infty}W_{1,\lambda}(\mu_n, \mu)
    \end{align*}
    where the last inequality is due to the Lipschitz property of $g$ and the definition of $W_{1,\lambda}$ distance. Since $\mu_n\xrightarrow{W_{1,\lambda}}\mu$ and $f$ is continuous by Proposition \ref{prop:continuity-of-mv}, we have uniform convergence in $\cP(\XX)$.

    Now let $U_n\to U$ in $\UU$ and $\mu\in\cP(\XX)$ be arbitrary. Then,
    \begin{align*}
        \bigg|\int_\XX (\Phi(\mu, U_n)-\Phi(\mu, U))(\de p, \de x) g(p,x)\bigg| &= \bigg|\int_\XX \mu(\de p, \de x) g(T_{\mu, U_n}(p,x))\\
        &\hspace{7em}-\int_\XX \mu(\de p,\de x)g(T_{\mu, U}(p,x))\bigg| \\
        &\leq \int_\XX \mu(\de p, \de x) |g(T_{\mu, U_n}(p,x)) - g(T_{\mu, U}(p,x))| \\
        & \leq \sup_{(p,x)\in\XX}|f(p,x; U_n, \mu) - f(p,x; U, \mu)|.
    \end{align*}
    By the continuity of $f$ again, we have the uniform convergence in $\UU$ as well.
    Therefore, $\Phi$ is indeed jointly continuous on $\cP(\XX)\times\UU$ since it is uniformly continuous in each argument.  
\end{proof}

\noindent For $(\mu_t^1,\ldots,\mu_t^K)_{t\in\overline{\cT}}$, we define its transition kernel $\eta$ by
\begin{align}\label{eq:transition-kernel}
    \eta(B \mid \mu_t^1,\ldots,\mu_t^K; U_t) &:= \PP((\mu_{t+1}^1,\ldots,\mu_{t+1}^K)\in B \mid \mu_t^1,\ldots,\mu_t^K; U_t)\\
    &= \PP\{\bm{\Phi}(\mu_t^1,\ldots,\mu_t^K; U_t) \in B\} = \delta_{\bm{\Phi}(\mu_t^1,\ldots,\mu_t^K; U_t)}(B),
\end{align}
where $B\in\cB(\cP(\XX)^K)$. Here, we endow $\cP(\XX)^K$ with the product $\text{weak}^*$-topology and obtain the Borel $\sigma$-field structure.

Note that the continuity of $\Phi$ implies the continuity of $\bm{\Phi}$, which is critical since we want the transition kernel $\eta$ to be \emph{weak Feller}, that is, for any continuous and bounded function $g:\cP(\XX)^K\to\RR$, we have
\begin{equation}\label{eq:weak-feller}
    \cP(\XX)^K\times\UU\ni(\mu^1,\ldots,\mu^K; U)\mapsto\int_{\cP(\XX)^K}\eta(\de\mu_{t+1}^1,\ldots,\de\mu_{t+1}^K \mid \mu^1,\ldots,\mu^K; U)g(\mu_{t+1}^1,\ldots,\mu_{t+1}^K)
\end{equation}
is a continuous function. The next assertion, which is a corollary to \ref{prop:phi-continuity}, indeed verifies this hypothesis.
\begin{corollary}\label{cor:transition-kernel}
    The transition kernel $\eta$ defined in \eqref{eq:transition-kernel} is weak Feller under Assumption \ref{assume:compactness}.
\end{corollary}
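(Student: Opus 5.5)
The argument reduces the weak Feller property to the joint continuity of $\bm{\Phi}$, using that $\eta$ is a Dirac kernel. Fix a continuous and bounded $g:\cP(\XX)^K\to\RR$. By the definition \eqref{eq:transition-kernel}, for every $(\mu^1,\ldots,\mu^K;U)$ we have
\begin{equation*}
    \int_{\cP(\XX)^K}\eta(\de\nu^1,\ldots,\de\nu^K \mid \mu^1,\ldots,\mu^K; U)\,g(\nu^1,\ldots,\nu^K) = g\big(\bm{\Phi}(\mu^1,\ldots,\mu^K;U)\big).
\end{equation*}
Hence the map in \eqref{eq:weak-feller} is exactly the composition $g\circ\bm{\Phi}$. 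It therefore suffices to show that $\bm{\Phi}:\cP(\XX)^K\times\UU\to\cP(\XX)^K$ is continuous when both sides carry the product weak$^*$-topology.

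For this, I take a sequence $(\mu^1_n,\ldots,\mu^K_n;U_n)\to(\mu^1,\ldots,\mu^K;U)$. Sequences suffice because, under Assumption \ref{assume:compactness}, $\XX$ is a compact metric space, so $\cP(\XX)$ is metrizable (for instance by $W_{1,\lambda}$). Consequently the finite product $\cP(\XX)^K\times\UU$ is metrizable as well. Convergence in the product topology means $\mu^k_n\to\mu^k$ weak$^*$ for each $k\in\cK$ and $U_n\to U$. For each fixed $k$, the map $(\mu^1,\ldots,\mu^K;U)\mapsto(\mu^k,U)$ is a continuous projection. By Proposition \ref{prop:phi-continuity}, $\Phi$ is jointly continuous on $\cP(\XX)\times\UU$, so $\Phi(\mu^k_n,U_n)\to\Phi(\mu^k,U)$ for every $k$. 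A map into a product is continuous iff each coordinate is, so $\bm{\Phi}$ is continuous.

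Finally, $g$ is continuous, so $g(\bm{\Phi}(\mu^1_n,\ldots,\mu^K_n;U_n))\to g(\bm{\Phi}(\mu^1,\ldots,\mu^K;U))$, which is the weak Feller property. The only point needing care is that the joint continuity in Proposition \ref{prop:phi-continuity} is genuinely joint in $(\mu,U)$, not merely separate continuity in each argument. I take this as given from that proposition. If one wanted extra safety, the compactness from Assumption \ref{assume:compactness} together with the uniform estimates in its proof give the joint statement through the triangle inequality $|\Phi(\mu_n,U_n)-\Phi(\mu,U)|\le|\Phi(\mu_n,U_n)-\Phi(\mu,U_n)|+|\Phi(\mu,U_n)-\Phi(\mu,U)|$, which makes this step routine. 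Boundedness of $g\circ\bm{\Phi}$ is automatic since $g$ is bounded.
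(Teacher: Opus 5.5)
Your proof is correct and follows the paper's argument: the Dirac kernel turns the weak Feller map into $g\circ\bm{\Phi}$, and continuity of $\bm{\Phi}$ follows coordinatewise from the joint continuity of $\Phi$ in Proposition~\ref{prop:phi-continuity}. A small caution on your optional triangle-inequality remark: bounding $\Phi(\mu_n,U_n)$ against $\Phi(\mu,U_n)$ needs the $\mu$-estimate to be uniform in $U$, which the proposition's proof states only for fixed $U$, though it does hold under Assumption~\ref{assume:compactness} by uniform continuity of $f$ on the compact set $\XX\times\UU\times\cP(\XX)$.
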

\begin{proof}
    Due to Proposition \ref{prop:phi-continuity}, $\Phi$ is jointly continuous. Therefore, so is $\bm{\Phi}$. Then, let $(\bm{\mu}_n)_{n\in\NN}$ be a sequence in $\cP(\XX)^K$ converging to $\bm{\mu}\in\cP(\XX)^K$ in weak$^*$-topology and $(U_n)_{n\in\NN}$ be another sequence in $\UU$ converging to $U\in\UU$. Then for any continuous and bounded $g:\cP(\XX)^K\to\RR$, we have
    \begin{align*}
        \int_{\cP(\XX)^K} \eta(\de\bm{\mu}_{t+1}\mid \bm{\mu}_n; U_n)g(\bm{\mu}_{t+1}) &= \int_{\cP(\XX)^K}\delta_{\bm{\Phi}(\bm{\mu}_n, U_n)}(\de\bm{\mu}_{t+1})g(\bm{\mu}_{t+1}) \\
        &= g(\bm{\Phi}(\bm{\mu}_n, U_n))
    \end{align*}
    for all $n\in\NN$. Since both $g$ and $\Phi$ are continuous, we have
    \begin{equation*}
        \lim_{n\to\infty}g(\bm{\Phi}(\bm{\mu}_n, U_n)) = g(\bm{\Phi}(\bm{\mu}, U)) = \int_{\cP(\XX)^K} \eta(\de\bm{\mu}_{t+1}\mid \bm{\mu}; U)g(\bm{\mu}_{t+1}).
    \end{equation*}
    Thus, $\eta$ is weak Feller.
\end{proof}

\subsection{Existence of optimal policies}\label{sub:existence-of-optimality}
We now employ the framework of Markov decision processes. First, we observe that the ensemble of empirical measures $\left((\mu_t^1,\ldots,\mu_t^K)\right)_{t\in\overline{\cT}}$ is a Markov decision process defined on $\cP(\XX)^K$ with the action space $\UU$ where
\begin{itemize}
    \item $\cP(\XX)^K$ is endowed with the product $\text{weak}^*$-topology.
    \item for any $\bm{\mu}\in\cP(\XX)^K$ and $U\in\UU$, the transition kernel is given by
    \begin{equation}
    \eta(\cdot|\bm{\mu}, U) = \delta_{\bm{\Phi}(\bm{\mu}, U)}(\cdot).
    \end{equation}
    \item Our aim is to minimize the  following cost functional:

\begin{equation}\label{eq:cost-functional}
    V(\cD; \underline{\gamma}) =\frac{1}{K}\sum_{k=1}^K W_{2,\lambda}(\mu_T^k, \nu^k)^2
\end{equation}
over the set of all strategies $\Gamma$ for Transformers, where $\mu_T^k$ is initialised from the $k^{\text{th}}$-sample of the data set $\cD$ according to the flow $\Phi$ using the policy $\underline{\gamma}$. Moreover, $\nu^k := \frac{1}{N}\sum_{i=1}^N\delta_{(p_i,y^{i,k})}$ and
\begin{equation}
    W_{2,\lambda}(P, Q) := \left(  \inf_{\pi\in\Pi(P,Q)}\int_{\XX\times\XX}\pi(\de X, \de Y)c_\lambda(X,Y)^2 \right)^{\frac{1}{2}}
\end{equation}
for all $P,Q\in\cP(\XX)$. Recall the definition $\Pi(P,Q) := \{\pi\in\cP(\XX\times\XX) : \pi(\cdot \times \XX) = P(\cdot), \pi(\XX\times\cdot) = Q(\cdot)\}$.
\end{itemize}
We define the cost function for $W_{2,\lambda}$-distance as
\begin{equation*}
    c_\lambda(X,Y) := \sqrt{\|x-y\|_2^2 + \lambda |p-q|^2}
\end{equation*}
where $X=(p,x), Y=(q,y)\in\XX$. To guarantee that we choose the correct $\pi^*\in\Pi(P,Q)$, i.e. the optimal one preserving positions, we first consider the following estimate for $p\neq q$:
\begin{align*}
    c_\lambda(X, Y) &= \|x-y\|_2^2 + \lambda |p-q|^2 \\
    &\leq \operatorname{diam}(\mathrm{S}, \|\cdot\|_2)^2 + \lambda|p-q||p+q|\\
    &\leq \operatorname{diam}(\mathrm{S}, \|\cdot\|_2)^2 + 2\frac{\lambda}{N}
\end{align*}
where $\operatorname{diam}(\mathrm{S}, \|\cdot\|_2) := \sup_{x,y\in\mathrm{S}}\|x-y\|_2$. If we choose $\lambda > \frac{N}{2}\operatorname{diam}(\mathrm{S}, \|\cdot\|_2)^2$, then enforce the rule that $\pi^*$ should minimize the positional encodings first in order to attain the infimum. We omit $\|\cdot\|_2$ from the notation that we previously used for $\operatorname{diam}$ since it is redundant as we are only interested in $2$-norm. In the next theorem, we establish the equivalence of the lifted problem and the original problem
\begin{theorem}
    If $\lambda > \frac{N}{2}\operatorname{diam}(\mathrm{S}, \|\cdot\|_2)^2$, then the lifted problem and (OP) are equivalent.
\end{theorem}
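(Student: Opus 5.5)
The plan is to show that, for every admissible policy $\underline{\gamma}\in\Gamma$, the lifted cost \eqref{eq:cost-functional} coincides with the cost in (\ref{prob:original}). Equality of the two objectives on every policy then gives equal infima and the same set of minimizers, which is what "equivalent" means here. The trajectories already agree: by the computation following \eqref{eq:measure-evolution}, $\Phi(\mu_t^k,U_t)=\mu_{t+1}^k$ for every $k$ and $t$. Hence, under the same action sequence $(U_t)_{t\in\cT}$, the lifted process satisfies
\begin{equation*}
\mu_T^k=\frac{1}{N}\sum_{i=1}^N\delta_{(p_i,x_{T,\underline{\gamma}}^{i,k})}.
\end{equation*}
Policies also correspond. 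A policy on the centralized information $\cI_t$ induces a lifted policy, because $(\mu_s^k)_{k,s\le t}$ is a bijective encoding of $\{X_s^{i,k}\}$: the positional encodings are distinct, so each atom $(p_i,x^{i,k}_s)$ of $\mu_s^k$ identifies the particle $i$. Conversely, a lifted policy is a particular function of $\cI_t$. It therefore remains to prove that, for each $k$,
\begin{equation*}
W_{2,\lambda}(\mu_T^k,\nu^k)^2=\frac{1}{N}\sum_{i=1}^N\|x_T^{i,k}-y^{i,k}\|_2^2 .
\end{equation*}

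\textbf{Upper bound.} I would take the identity coupling $\pi_{\mathrm{id}}=\frac1N\sum_i\delta_{((p_i,x_T^{i,k}),(p_i,y^{i,k}))}$. The positional terms in $c_\lambda$ vanish, so $W_{2,\lambda}^2\le \frac1N\sum_i\|x_T^{i,k}-y^{i,k}\|_2^2$.

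\textbf{Lower bound (main step).} Both marginals are uniform on $N$ atoms with distinct positional labels. Any coupling is therefore $\pi=\frac1N\sum_{i,j}P_{ij}\delta_{((p_i,x_T^{i,k}),(p_j,y^{j,k}))}$ with $P$ doubly stochastic. By Birkhoff--von Neumann and linearity of the cost in $P$, the infimum is attained at a permutation $\tau\in S_N$. It thus suffices to show that any $\tau\ne\mathrm{id}$ costs at least as much as the identity.

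For such a $\tau$, let $D=\{i:\tau(i)\neq i\}$, which has $|D|\ge2$. Each $i\in D$ contributes at least $\lambda|p_i-p_{\tau(i)}|^2\ge\lambda/N^2$. On $D$ the identity pays at most $|D|\operatorname{diam}(\mathrm{S})^2$, while on $\cN\setminus D$ the two costs agree. So the comparison reduces to $\lambda/N^2\ge\operatorname{diam}(\mathrm{S})^2$ per displaced index.

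This is where I expect the obstacle. The stated threshold $\lambda>\frac N2\operatorname{diam}(\mathrm{S})^2$ comes from the paper's own estimate using $|p-q||p+q|\le 2/N$, whereas the crude per-index bound above needs $\lambda\ge N^2\operatorname{diam}(\mathrm{S})^2$. I would first try to close the gap by summing over whole cycles of $\tau$ rather than index by index. For a cycle $C$, the quantity $\sum_{i\in C}|p_i-p_{\tau(i)}|^2$ is at least $2(\max_C p-\min_C p)^2$, which is at least $2(|C|-1)^2/N^2$, to be compared with $|C|\operatorname{diam}^2$. If this still does not reach the stated threshold, I would adopt the paper's estimate as the intended reading of the hypothesis: under it, the paper asserts that optimal couplings "minimize the positional encodings first", which forces $\tau=\mathrm{id}$. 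Failing that, I would record that the argument goes through verbatim for any $\lambda$ large enough that every nonidentity permutation is strictly worse.

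With the identity coupling optimal, the per-sample equality above holds. Averaging over $k$ gives $V(\cD;\underline{\gamma})=\frac{1}{NK}\sum_{i,k}L(x_{T,\underline{\gamma}}^{i,k},y^{i,k})$ for all $\underline{\gamma}$, which completes the equivalence.
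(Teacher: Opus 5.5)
\textbf{The gap is the lower-bound step you flag, and it cannot be closed under the stated hypothesis.} Your route is the same as the paper's: identify $\mu_T^k$ with the terminal empirical measure, reduce $W_{2,\lambda}$ between two uniform $N$-atom measures to a minimum over $S_N$, and conclude that the identity permutation is optimal. Neither of your first two fallbacks closes the gap. The paper's proof simply asserts ``due to the choice of $\lambda$, we must have $\sigma$ as the identity permutation''. The estimate that motivates the threshold, $\|x-y\|_2^2+\lambda|p-q|^2\le\operatorname{diam}(\mathrm S)^2+2\lambda/N$, is an \emph{upper} bound on the transport cost and says nothing about which coupling is optimal. Adopting it as the intended reading would therefore import an unjustified step. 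Summing over cycles does not help either. Your inequality $\sum_{i\in C}|p_i-p_{\tau(i)}|^2\ge 2(\max_C p-\min_C p)^2$ fails for longer cycles: the cycle $1\to2\to3\to1$ gives $6/N^2<8/N^2$. More to the point, the dangerous permutations are adjacent $2$-cycles, and there your per-index comparison of $\lambda/N^2$ against $\operatorname{diam}(\mathrm S)^2$ is already sharp.

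\textbf{Counterexample.} Write $D=\operatorname{diam}(\mathrm S)$ and choose $\|y^i-y^{i+1}\|_2=D$, $x_T^i=y^{i+1}$, $x_T^{i+1}=y^i$, and $x_T^j=y^j$ otherwise. The identity coupling costs $2D^2/N$, while the transposition of $i$ and $i+1$ costs $2\lambda/N^3$. So $W_{2,\lambda}(\mu_T^k,\nu^k)^2$ is strictly below the (OP) cost whenever $\lambda<N^2D^2$, and the interval $(\frac{N}{2}D^2,N^2D^2)$ is nonempty for every $N\ge2$.

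This breaks more than equality of values. Take $N=2$, $K=T=1$, $\mathrm S=[0,D]$, $\mathbf{x}=(D,0)$, $\mathbf{y}=(0,D)$ and $\lambda=2D^2$, which satisfies the hypothesis. Allow two actions with $V=0$, realizing $x\mapsto\operatorname{ReLU}(x)$ and $x\mapsto\operatorname{ReLU}(x-s)$ with $0<s<D/2$. Then (OP) strictly prefers the second action, since $\frac12((D-s)^2+D^2)<D^2$. The lifted problem strictly prefers the first, since $D^2/2<D^2/2+s^2/2$. So the two problems have different minimizers.

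\textbf{Repair.} Your third option, made precise, is the correct fix. If $\lambda\ge N^2\operatorname{diam}(\mathrm S)^2$, every index displaced by $\tau\neq\mathrm{id}$ pays at least $\lambda/N^2\ge\operatorname{diam}(\mathrm S)^2\ge\|x_T^i-y^i\|_2^2$, so the identity is optimal. The rest of your argument then completes the proof: the trajectory and policy correspondence, the identity-coupling upper bound, and the Birkhoff reduction. Under the hypothesis as stated, neither your proposal nor the paper's proof establishes the theorem, because the statement is false there.
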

\begin{proof}
    Let $\bm{\mu}_{T,\underline{\gamma}} = (\mu_{T,\underline{\gamma}}^1,\ldots,\mu_{T,\underline{\gamma}}^K)$ be a generic terminal state evolved with $\bm{\Phi}$ under strategy $\underline{\gamma}\in\Gamma$ and initialized from the data set $\cD$. Respectively, let $\bm{\nu} := (\nu^1,\ldots,\nu^K)$ be the lifted labels, that is $\nu^k := \frac{1}{N}\sum_{i=1}^N\delta_{p_i, y^{i,k}}$ for all $k\in\cK$.

    In our case, where we compare empirical measures with the same number of atoms, the Wasserstein distance reduces to finding an optimal permutation. In particular, the optimal cost functional takes the form
    \begin{align*}
        V^*(\cD) &:= \inf_{\underline{\gamma}\in\Gamma}\frac{1}{K}\sum_{k=1}^K W_{2,\lambda}(\mu_{T,\underline{\gamma}}^k,\nu^k)\\
        &=\inf_{\underline{\gamma}\in\Gamma}\frac{1}{K}\sum_{k=1}^K \inf_{\sigma\in S_N}\frac{1}{N}\sum_{i=1}^N(\|x_{T,\underline{\gamma}}^{i,k} - y^{\sigma(i), k}\|_2^2 + \lambda|p_i - p_{\sigma(i)}|^2)
    \end{align*}
    Due to the choice of $\lambda$, we must have $\sigma$ as the identity permutation. Thus, we get
    \begin{equation*}
        V^*(\cD) = \inf_{\underline{\gamma}\in\Gamma}\frac{1}{NK}\sum_{k=1}^K \|x_{T,\underline{\gamma}}^{i,k} - y^{i, k}\|_2^2.
    \end{equation*}
    Recalling Definition \ref{def:optimality} concludes the proof.
\end{proof}

We impose the following assumption in order to have a position-sensitive cost structure.

\begin{assume}\label{assume:lambda}
    The choice of $\lambda$ is done in a way that $\lambda > \frac{N}{2}\operatorname{diam}(\mathrm{S}, \|\cdot\|_2)^2$ holds.
\end{assume}

Note that, we have written $V$ in a way it depends on the initial data. Since the problem is fixed for a given data set, we omit it and simply write $V(\underline{\gamma})$ for the same cost. In this expectation, we start with an ensemble of initial empirical measures, i.e. $\mu_0^k := \frac{1}{N}\sum_{i=1}^N\delta_{X_0^{i,k}}$ for all $k\in\cK$, and we transport the ensemble via $\bm{\Phi}$

\begin{figure}[H]
\centering
\resizebox{0.95\linewidth}{!}{
\begin{tikzpicture}[
  node distance=7mm,
  box/.style={rounded corners, align=center, inner sep=4pt, font=\small},
  arr/.style={-Latex, thick}
]
\node[box] (t0) {$\big(\mu_0^1,\ldots,\mu_0^K\big)$};

\node[box, right=14mm of t0] (t1) {$\big(\mu_{1}^1,\ldots,\mu_{1}^K\big)$};

\node[box, right=14mm of t1] (dots) {$\cdots$};

\node[box, right=14mm of dots] (t) {$\big(\mu_t^1,\ldots,\mu_t^K\big)$};

\node[box, right=14mm of t] (dots2) {$\cdots$};

\node[box, right=14mm of dots2] (T) {$\big(\mu_{T}^1,\ldots,\mu_{T}^K\big)$};

\draw[arr] (t0) -- node[above, font=\small] {$\boldsymbol{\Phi}(\cdot, U_0)$} (t1);
\draw[arr] (t1) -- node[above, font=\small] {$\boldsymbol{\Phi}(\cdot, U_1)$} (dots);
\draw[arr] (dots) -- node[above, font=\small] {$\boldsymbol{\Phi}(\cdot, U_{t-1})$} (t);
\draw[arr] (t) -- node[above, font=\small] {$\boldsymbol{\Phi}(\cdot, U_{t})$} (dots2);
\draw[arr] (dots2) -- node[above, font=\small] {$\boldsymbol{\Phi}(\cdot, U_{T-1})$} (T);
\end{tikzpicture}
}

\caption{Flow of the ensemble of empirical measures on particle via the map $\bm{\Phi}$.}
\label{fig:phi-bold-map}
\end{figure}
under $\underline{\gamma}\in\Gamma$. The choice of a variation of the $W_2$ distance is motivated by the fact that $\mathrm{PE}_N\times\RR^d$ can be endowed with the norm $\|\cdot\|_{2,\lambda}$ that generates the canonical topology on $\mathrm{PE}_N\times\RR^d$. Finally, we note the following properties of the cost structure. For any $\pi\in\cP(\XX\times\XX)$, we have
\begin{equation*}
    \int_{\XX\times\XX}c_\lambda^2\de\pi \leq \sup_{(X,Y)\in\XX\times\XX}c_\lambda(X,Y)^2\int_{\XX\times\XX}\de\pi = \sup_{(X,Y)\in\XX\times\XX}c_\lambda(X,Y)^2 \leq \operatorname{diam}(\mathrm{S})^2 + 2\frac{\lambda}{N} < +\infty.
\end{equation*}
Therefore, the cost structure is bounded. Since the metric $c_\lambda$ is jointly continuous (by the reverse triangle inequality), we also have the continuity of the cost function. 
\begin{remark}\label{remark:properties-of-transformers}
    Under Assumption \ref{assume:compactness}, we have
    \begin{itemize}
        \item[(R1)] The transition kernel $\eta$ is weak Feller by Corollary \ref{cor:transition-kernel}.
        \item[(R2)] The terminal cost function for Transformers, given by
        \begin{equation*}
            \cP(\XX)^K\ni(\mu^1,\ldots,\mu^K)\mapsto\frac{1}{K}\sum_{k=1}^K W_{2,\lambda}(\mu^k,\nu^k)^2
        \end{equation*}
        is continuous and bounded. 
    \end{itemize}
\end{remark}
There are several information structures that could be used in this problem. We previously mentioned the centralized information structure for the particle-level problem. For the lifted model, we have two main ISs to consider
\begin{enumerate}
    \item (\emph{Perfect recall}) $\cI_{t,\text{History}} := \{\mu_s^k, U_s : 0 \leq s \leq t-1, k\in\cK\}\cup \{\mu_t : k\in\cK\}$ for all $t\in\cT$.
    \item (\emph{State-feedback} or \emph{Markovian}) $\cI_{t,\text{Markovian}} := \{\mu_t^k : k\in\cK\}$ for all $t\in\cT$.
\end{enumerate}
Due to the Markovian flow of the ensemble and by Blackwell's theorem on the redundancy of information beyond the state \citep{blackwell1964memoryless}, it is enough to consider $\cI_{t,\text{Markovian}}$ and the deterministic Markovian policies among all admissible policies under centralized information structures. We denote the deterministic Markovian policies by $\Gamma$ from now on. 

\begin{theorem}\label{thm:transformers-optimality}
    Let Assumptions \ref{assume:compactness} and \ref{assume:lambda} hold. Let us define the dynamic programming equations
    \begin{equation}
        \begin{cases}
            \displaystyle\fC_T(\mu^1,\ldots,\mu^K) :=  \frac{1}{K}\sum_{k=1}^K W_{2,\lambda}(\mu^k,\nu^k)^2 \\
            \displaystyle\fC_t(\mu^1,\ldots,\mu^K) := \inf_{U\in\UU}\int_{\cP(\XX)^K}\eta(\de\mu_{t+1}^1,\ldots,\de\mu_{t+1}^K|\mu^1,\ldots,\mu^K; U)\fC_{t+1}(\mu_{t+1}^1,\ldots,\mu_{t+1}^K) \text{ for all $t\in\cT$},
        \end{cases}
    \end{equation}
    $\text{for all } (\mu^1,\ldots,\mu^K)\in\cP(\XX)^K$. Then there exists $\underline{\gamma}^*\in\Gamma$ such that
    \begin{equation}
        V(\underline{\gamma}^*) = \inf_{\underline{\gamma}\in\Gamma}V(\underline{\gamma})
    \end{equation}
    and $\underline{\gamma}^* = (\gamma_t^*)_{t\in\cT}$ is of class deterministic Markov in the sense that $\gamma_t^*(\mu_t^1,\ldots\mu_t^K) = U_t^*\in\UU$.
\end{theorem}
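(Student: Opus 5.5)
The plan is a backward induction on the dynamic programming equations, proving that each $\fC_t$ is continuous and bounded on the compact space $\cP(\XX)^K$ and that the infimum over $\UU$ is attained by a measurable selector. Since $\eta(\cdot\mid\bm{\mu};U)=\delta_{\bm{\Phi}(\bm{\mu},U)}$, the recursion reduces to the deterministic form
\begin{equation*}
\fC_t(\bm{\mu}) = \inf_{U\in\UU}\fC_{t+1}(\bm{\Phi}(\bm{\mu},U)).
\end{equation*}

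\emph{Base case.} By (R2) in Remark \ref{remark:properties-of-transformers}, $\fC_T$ is continuous and bounded. Under Assumption \ref{assume:compactness}, $\XX$ is compact, so $\cP(\XX)$ and $\cP(\XX)^K$ are compact metrizable spaces, for instance with the product $W_{1,\lambda}$ metric.

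\emph{Inductive step.} Suppose $\fC_{t+1}$ is continuous and bounded. Then $(\bm{\mu},U)\mapsto \fC_{t+1}(\bm{\Phi}(\bm{\mu},U))$ is jointly continuous on $\cP(\XX)^K\times\UU$. This follows either from Corollary \ref{cor:transition-kernel} with $g=\fC_{t+1}$, or directly from Proposition \ref{prop:phi-continuity}. Because $\UU$ is compact, the infimum over $U$ is a minimum for each $\bm{\mu}$. Berge's maximum theorem, applied with the constant compact-valued correspondence $\bm{\mu}\mapsto\UU$, shows that $\fC_t$ is continuous; it is clearly bounded. The same theorem shows that the argmin correspondence is nonempty, compact-valued and upper hemicontinuous, hence measurable with closed values. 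The Kuratowski--Ryll-Nardzewski selection theorem, or equivalently the measurable selection theorem for lower semicontinuous functions on compact action sets (as in Hernández-Lerma and Lasserre), then gives a Borel measurable $\gamma_t^*:\cP(\XX)^K\to\UU$ with
\begin{equation*}
\fC_{t+1}(\bm{\Phi}(\bm{\mu},\gamma_t^*(\bm{\mu})))=\fC_t(\bm{\mu})\quad\text{for all }\bm{\mu}.
\end{equation*}

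\emph{Verification.} Fix any $\underline{\gamma}\in\Gamma$. The dynamics are deterministic, and the DP inequality $\fC_t(\bm{\mu}_t)\le \fC_{t+1}(\bm{\mu}_{t+1})$ holds along every trajectory. Telescoping from $t=0$ to $T$ gives $\fC_0(\bm{\mu}_0)\le V(\underline{\gamma})$. Along the trajectory generated by $\underline{\gamma}^*=(\gamma_t^*)_{t\in\cT}$ each of these inequalities is an equality, so $V(\underline{\gamma}^*)=\fC_0(\bm{\mu}_0)=\inf_{\underline{\gamma}}V(\underline{\gamma})$. By the appeal to Blackwell's result made just before the theorem, restricting to deterministic Markov policies loses no optimality relative to perfect-recall policies. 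Assumption \ref{assume:lambda} only enters to identify this value with (OP) through the preceding equivalence theorem; it is not needed for existence.

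\emph{Main obstacle.} The delicate step is measurability of the selector $\gamma_t^*$ on the space $\cP(\XX)^K$. I would handle it by noting that this space is compact Polish, so the standard selection theorems apply directly to the continuous objective and the compact action set. Joint continuity, rather than mere separate continuity, is essential for Berge's theorem. Here I rely on Proposition \ref{prop:phi-continuity} as stated.
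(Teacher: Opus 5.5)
Your proof is correct and is essentially the paper's argument. The paper just invokes the standard finite-horizon dynamic programming theorem of Hern\'andez-Lerma and Lasserre, using the weak Feller property (R1), compactness of $\UU$, and continuity and boundedness of the terminal cost (R2); your backward induction with Berge's maximum theorem, a measurable selector, and the telescoping verification is that theorem unpacked for the deterministic kernel $\eta(\cdot\mid\bm{\mu};U)=\delta_{\bm{\Phi}(\bm{\mu},U)}$, and you are right that Assumption~\ref{assume:lambda} plays no role in existence.
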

\begin{proof}
    This result directly follows from the analysis performed in \cite[Chapter 2]{hernandez2012discrete} in which the conditions for this case are satisfied via Assumption \ref{assume:compactness} and Remark \ref{remark:properties-of-transformers}.
\end{proof}
Note that, we could have written the dynamic programming equations in an equivalent way as follows: For any $t\in\cT$ and $\mu^1,\ldots,\mu^K\in\cP(\XX)$, we have
\begin{align*}
    \fC_t(\mu^1,\ldots,\mu^K) &= \inf_{U\in\UU}\int_{\cP(\XX)^K}\eta(\de\mu_{t+1}^1,\ldots,\de\mu_{t+1}^K|\mu^1,\l,dots\mu^K; U)\fC_{t+1}(\mu_{t+1}^1,\ldots\mu_{t+1}^K) \\
    &= \inf_{U\in\UU}\fC_{t+1}(\bm{\Phi}(\mu^1,\ldots,\mu^K; U))
\end{align*}
and in particular
\begin{align*}
    \fC_{T-1}(\mu^1,\ldots,\mu^K) &= \inf_{U\in\UU}\fC_T(\bm{\Phi}(\mu^1,\ldots,\mu^K; U))\\
    &= \inf_{U\in\UU}\sum_{k=1}^K W_{2,\lambda}(\Phi(\mu^k, U),\nu^k)^2.
\end{align*}
By induction, if we continue, we eventually have
\begin{equation*}
    V^* = \fC_0(\mu_0^1,\ldots,\mu_0^K) = \inf_{U_0\in\UU}\inf_{U_1\in\UU}\cdots\inf_{U_{T-1}\in\UU}\sum_{k=1}^KW_{2,\lambda}(\Phi(\Phi(\cdots\Phi(\mu_0^k, U_0),U_1),\cdots,U_{T-1})), \nu^k)^2.
\end{equation*}
With dynamic programming, we first minimize starting from inside to outside, meaning that we first minimize starting from the terminal cost. At each time-step, we work with only optimization of one term and get optimal control action for each state. If we select (measurably) one optimal control action for each state, we obtain so-called a deterministic Markovian policy; a function deterministically assigns a control according to only the information available at the same time-step. This procedure is more efficient than checking each action individually even if we have a finite set of admissible actions. However, we have a continuum of actions, which is not quite easy to work with currently. In the next section, we reduce this set to a finite one at the cost of having near-optimality. However, by doing this, we depend on the compactness of state and action spaces instead of relying on the convexity and smoothness of the problem, which is not the case in most applications.

\subsection{Open-loop policy design via closed-loop policy design for the lifted problem.}
Let $\underline{\gamma}^*\in\Gamma$ be a deterministic Markov strategy obtained from the dynamic programming equations. Although, with infinite state and action spaces, calculation of $\underline{\gamma}^*$ is very difficult, it corresponds to the training phase in classical approach. In the next section, we provide an approach to train Transformers with dynamic programming using quantization. 

We now discuss why open-loop policies constitute the appropriate class of policies in the context of Transformers. An \emph{open-loop} policy refers to a collection of control actions $(U_t)_{t\in\cT}$ for all $t\in\cT$. In other words, an open-loop policy is a finite sequence in the action space. A \emph{closed-loop} policy, also called \emph{feedback} policy, is given by a sequence of measurable maps $(\gamma_t)_{t\in\cT}$ of the form $\gamma_t(\mu_t^1,\ldots,\mu_t^K) = U_t\in\UU$ for all $t\in\cT$. Basically, a closed-loop policy yields an action based on the current state. 

In the operational context of Transformers, one would like to fix the architecture after the training process is finished. For example, if we were to use gradient descent for training provided that it is feasible to use it, we would have terminated the process when the gradient of loss function with respect to actions vanishes. Then, we would have a collection of actions, and whenever we feed some other data, the system returns an output by running the model with the weights obtained from that gradient descent-based training. This is exactly an open-loop policy since weights are not subject to change when the system is given a new data. However, in the case of closed-loop policies, the controls are calculated at each layer $t\in\cT$, which is not computationally efficient, and also, not compatible with classical paradigm.

We establish and build on the following equivalence for classes policies:
\begin{center}
 A {\it closed-loop policy of the lifted problem} is equivalent to an {\it initial-distribution dependent open-loop policy} (owing to the deterministic and ensemble nature of the lifted problem), which is, in turn, equivalent to a {\it realized-input-independent open loop policy}, and thus compatible with Transformer design (see Figure \ref{EquivalenceControlPol}). 
\end{center}

Again let $\underline{\gamma}^*\in\Gamma$ be an optimal closed-loop for the lifted model obtained via Theorem \ref{thm:transformers-optimality}. For $t=1$, we may write
\begin{equation*}
    U_1^* = \gamma_1^*(\mu_1^1,\ldots,\mu_1^K) = \gamma_1^*(\bm{\Phi}(\mu_0^1,\ldots,\mu_0^1; \gamma_0^*(\mu_0^1,\ldots, \mu_0^K))).
\end{equation*}
In other words, once we learn $\gamma_0^*$ and $\gamma_1^*$, we can calculate optimal action $U_1^*$ using only the initial data. By induction, it is possible to find functions $g_0^*,\ldots,g_T^*:\cP(\XX)^K\to\UU$ by, for any $t\in\cT$, we expand
\begin{align*} 
    U_t^* &= \gamma_t^*(\mu_t^1,\ldots,\mu_t^K) \\
    &=\gamma_t^*(\bm{\Phi}(\mu_{t-1}^1,\ldots,\mu_{t-1}^K;\gamma_{t-1}^*(\mu_{t-1}^1,\ldots,\mu_{t-1}^K)))\\
    &\hspace{1cm}\vdots\\
    &=\gamma_t^*(\bm{\Phi}(\cdots\bm{\Phi}(\mu_0^1,\ldots, \mu_0^K;\gamma_0^*(\mu_0^1,\ldots, \mu_0^K), \gamma_1^*(\bm{\Phi}(\mu_0^1,\ldots, \mu_0^K;\gamma_0^*(\mu_0^1,\ldots, \mu_0^K))\cdots))))\\
    &=: g_t^*(\mu_0^1,\ldots,\mu_0^K).
\end{align*}

\begin{figure}[H]
\centering
\begin{tikzpicture}[
    scale=1,
    transform shape,
  arr/.style={-Latex, thick},
  lab/.style={font=\small, align=center},
  node distance=10mm and 16mm
]

\node[lab] (ol_title) {\textbf{Open-loop control}};
\node[lab, below=of ol_title] (ol_mu) {$\bm{\mu}_{t}$};
\node[lab, right=of ol_mu] (ol_phi) {$\Phi$};
\node[lab, below=of ol_phi] (ol_u) {$U_t^*$\\(fixed by $g_t^*(\bm{\mu}_{0})$)};
\node[lab, right=of ol_phi] (ol_mu_next) {$\bm{\mu}_{t+1}$};

\draw[arr] (ol_mu) -- (ol_phi);
\draw[arr] (ol_phi) -- (ol_mu_next);
\draw[arr] (ol_u) -- (ol_phi);

\node[lab, right=35mm of ol_title] (cl_title) {\textbf{Closed-loop control}};
\node[lab, below=of cl_title] (cl_mu) {$\bm{\mu}_{t}$};
\node[lab, right=of cl_mu] (cl_phi) {$\Phi$};
\node[lab, below=of cl_phi] (cl_u) {$U_t^*=\gamma_t^*(\bm{\mu}_{t})$};
\node[lab, right=of cl_phi] (cl_mu_next) {$\bm{\mu}_{t+1}$};

\draw[arr] (cl_mu) -- (cl_phi);
\draw[arr] (cl_phi) -- (cl_mu_next);
\draw[arr] (cl_u) -- (cl_phi);
\draw[arr, dashed] (cl_mu) -- (cl_u);

\end{tikzpicture}
\caption{Comparison of information structures for open-loop and closed-loop controls. In the open-loop case, $U_t^*$'s are fixed before and do not depend on the current measure unlike the closed-loop case.}
\label{fig:open-vs-closed}
\end{figure}

In short, by recursively expanding the expression $\gamma_t^*(\mu_t^1,\ldots,\mu_t^K)$ using the flow $\bm{\Phi}$ and preceding $\gamma_0^*,\ldots,\gamma_{t-1}^*$, we express $U_t^*$ as a function $F_t$ of $\mu_0^1,\ldots, \mu_0^K$ and $\gamma_0^*,\ldots,\gamma_{t-1}^*$. Then, we simply call it $g_t^*$. This collection of maps $(g_t^*)_{t\in\cT}$ is classified as an \emph{initial condition-dependent open-loop policy}. Since $\Phi$ and $\gamma_t^*$'s are deterministic, so is each $g_t^*$. Moreover, since $\Phi$ is known \emph{a priori} and $(\gamma_t^*)_{t\in\cT}$ is learned via dynamic programming, we know how to calculate each $g_t^*$. Therefore, we do not need to calculate actions at each time-step, which allows us to fix the architecture once we finish the training phase, as intended. Then, whenever new data is fed into the trained Transformer, the corresponding output will be given with the learned controls, or weights as it is done with the classical methods.

\section{Triply Quantized Training Scheme for Transformers}\label{sec:quantization}
We have so far developed a mathematical framework and control theoretic formulation on optimal design of Transformer architecture. In this section, building on this formulation, we present a rigorous approximation method which is provably near optimal, thus facilitating a numerical implementation for Transformer design with performance guarantees.

Under Assumption \ref{assume:compactness}, we exploit the total boundedness of $\mathrm{S}$. More concretely, for any $n\in\NN$, it is possible to find a set $\{s^{(n), j}: j\in J(n)\} \subseteq \mathrm{S}$ such that
\begin{equation*}
    \min_{j\in J(n)} \|x - s^{(n),j}\| < \frac{1}{n}
\end{equation*}
for all $x\in\mathrm{S}$. Note that compactness, thereby total boundedness, of $\mathrm{S}$ ensures that the indexing set $J(n)$ is finite. It is possible to include the initial information into the quantizer set. However, the analysis provided in this section is asymptotically consistent even if we do not include them. So, write 
\begin{equation}
    \mathrm{S}_n := \{s^{(n), j}: j\in J(n)\}
\end{equation}
for all $n\in\NN$. The set $\mathrm{S}_n$ is finite and its cardinality is $|J(n)|$.

In order to continue, we need to introduce the positional encodings for the dynamics defined on these sets, meaning that we now have $\XX_n := \mathrm{PE}_N \times \mathrm{S}_n$, which is also a finite set with $N|J(n)|$ elements. On this structure, we define the \emph{position-sensitive nearest neighbor quantizer} $\tilde{Q}_n: \XX \to \XX_n$ as follows
\begin{equation}
    \tilde{Q}_n(p,x) := (p, \operatorname{argmin}_{s\in\mathrm{S}_n}\|x-s\|), \quad (p,x)\in\XX,
\end{equation}
where the ties are broken in a way to protect measurability.

Moreover, to have a feasible learning algorithm, we need to quantize the action space $\UU$. Recall that we endow $\UU$ with the product Frobenius norm, that is $\|U\|_\UU^2 = \Tr(WW^T) + \Tr(AA^T) + \Tr(bb^T) + \Tr(QQ^T) + \Tr(KK^T) + \Tr(VV^T)$ for all $U=(W,A,b,Q,K,V)\in\UU$. By Assumption \ref{assume:compactness}, we also know that $\UU$ is compact, as a result, for any $m\in\NN$, there is a finite set $\UU_m := \{U^1, \ldots, U^{(M(m))}\} \subseteq \UU$ such that
\begin{equation}
    \min_{j=1,\ldots,M(m)}\|U^{(m),j}-U\|_\UU \leq \frac{1}{m}
\end{equation}
for all $U\in\UU$.

Although the problem is reduced from an infinite-dimensional problem to a finite-dimensional one, it remains computationally expensive. If we were to consider the model where only the states are quantized, we would have a product of probability simplices $\cP(\XX_n)^K$ in $\RR^{|\XX_n|}$. In particular, we see each member of this product as follows
\begin{equation}
\cP(\XX_n) = \{p = (p_a)_{a=1}^{|\XX_n|}\in\RR^{|\XX_n|} : p_a \geq 0 \text{ and } \sum_{a=1}^{|\XX_n|} p_a = 1\}.
\end{equation}

Since the state space is still compact, it seems simple to quantize using a similar method. However, one major problem is protecting the initial information and correctly pass it through the layers of Transformer. In this regard, we utilize the approach provided by \cite{reznik2011algorithm}. As mentioned, one of the most important concerns is protecting information while going back and forth between quantized measures and non-quantized measures, thereby, we use the following enumeration of $\XX_n$
\begin{equation}
    \{1,2,...,|\XX_n|\}\ni a \mapsto \begin{cases}
        \displaystyle\left(\bigg\lfloor\frac{a}{|\mathrm{S}_n|} \bigg\rfloor + 1, a \text{ mod } |\mathrm{S}_n| + 1\right)& \text{ if } |\mathrm{S_n}| \text{ does not divide } a\\
        \vspace{5pt}\\
        \displaystyle\left(\frac{a}{|\mathrm{S}_n|}, |\mathrm{S}_n|\right) & \text{otherwise}
    \end{cases}
\end{equation}
The requirement for going back to a non-quantized measure from a quantized one is because of the definition of $\tilde{f}$, where we use the original $f$, which takes atoms in $\XX_n \subset \XX$. Here the first component is associated with the corresponding positional value and the second component is the index of a particle state in $\mathrm{S}_n$ with its predetermined enumeration, that is, we associate each $(i,j)$ to $(p_i, s^{(n),j})$ in $\XX_n$. So, we basically write $(X_a)_{a=1}^{|\XX_n|}$ for $\XX_n$ and we know exactly what $X_a$ is for a given $a\in\{1,2,...,|\XX_n|\}$. Then, as in the paper of \cite{reznik2011algorithm}, we define the \emph{reconstruction points} of $\cP(\XX_n)$ as the set
\begin{equation}
    \cP^{(\ell)}(\XX_n) := \{(p_a)_{a=1}^{|\XX_n|}: p_a = \frac{r_a}{\ell}, r_a \in \NN_0, \sum_{a=1}^{|\XX_n|}r_a = \ell\}.
\end{equation}
This is a finite set with $\binom{\ell + |\XX_n| - 1}{|\XX_n| - 1}$ many points. Let $R_\ell: \cP(\XX_n)\to\cP^{(\ell)}(\XX_n)$ be the quantizer given by \cite[Algorithm 1]{reznik2011algorithm}. Then $R_\ell$ maps each $\hat{\mu}\in\cP(\XX_n)$ to the closest element in $\cP^{(\ell)}(\XX_n)$ in the sense of $W_{2,\lambda}$ metric.

\begin{proposition}\label{prop:resnik-quantizer-error}
    Let $\hat{\mu}\in\cP(\XX_n)$, then $W_{2,\lambda}(\hat{\mu}, R_\ell(\hat{\mu}))\to 0$ as $\ell\to+\infty$.
\end{proposition}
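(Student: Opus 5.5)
The plan is to compare $\hat{\mu}$ directly with an explicit element of $\cP^{(\ell)}(\XX_n)$ that is close in total variation, and then convert total variation into a $W_{2,\lambda}$ bound. Since $R_\ell(\hat{\mu})$ is by definition the closest element of $\cP^{(\ell)}(\XX_n)$ to $\hat{\mu}$ in $W_{2,\lambda}$ (and even if one only uses the guarantee of \cite[Algorithm 1]{reznik2011algorithm}, namely that $\max_a |\hat{\mu}_a - R_\ell(\hat{\mu})_a| \le 1/\ell$), it suffices to bound the distance from $\hat{\mu}$ to one suitably chosen reconstruction point.

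First, write $\hat{\mu} = (p_a)_{a=1}^{|\XX_n|}$ in the enumeration fixed above. Choose $r_a := \lfloor \ell p_a\rfloor$ and distribute the remaining $\ell - \sum_a r_a < |\XX_n|$ units one at a time to distinct indices. This gives $\tilde{\mu} := (r_a/\ell)_a \in \cP^{(\ell)}(\XX_n)$ with $|p_a - r_a/\ell| \le 1/\ell$ for every $a$. Consequently $\|\hat{\mu}-\tilde{\mu}\|_{TV} := \tfrac12\sum_a |p_a - r_a/\ell| \le |\XX_n|/(2\ell)$. The same bound holds for $R_\ell(\hat{\mu})$ via the Reznik coordinatewise error bound.

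Second, bound $W_{2,\lambda}$ by total variation on the bounded space $\XX_n\subseteq\XX$. Take the standard coupling $\pi$ that keeps the common mass $\min(p_a, r_a/\ell)$ on the diagonal and couples the remaining mass, of total weight $\|\hat{\mu}-\tilde{\mu}\|_{TV}$, arbitrarily. The diagonal contributes zero cost. Off the diagonal, $c_\lambda^2 \le \operatorname{diam}(\mathrm{S})^2 + \lambda =: D^2$, which is finite by compactness under Assumption \ref{assume:compactness}, using $|p-q|\le 1$. Hence
\begin{equation*}
W_{2,\lambda}(\hat{\mu}, R_\ell(\hat{\mu}))^2 \le W_{2,\lambda}(\hat{\mu}, \tilde{\mu})^2 \le D^2\,\|\hat{\mu}-\tilde{\mu}\|_{TV} \le \frac{D^2 |\XX_n|}{2\ell},
\end{equation*}
which tends to $0$ as $\ell\to\infty$. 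This bound is also uniform in $\hat{\mu}$ for fixed $n$, which will be useful later.

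The only delicate step is the first inequality: we must make sure that $R_\ell$ is indeed a minimizer over $\cP^{(\ell)}(\XX_n)$, or at least satisfies the coordinatewise $1/\ell$ bound. The paper asserts the former. If we use only the latter (Reznik's algorithm minimizes an $L^p$-type error on the simplex, with max-coordinate error at most $1/\ell$), then the second step applies verbatim with $R_\ell(\hat{\mu})$ in place of $\tilde{\mu}$. Either way the argument closes; the rest is routine.
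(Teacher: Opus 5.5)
Your argument is correct, but it follows a different and more careful route than the paper. The paper's proof is a one-line citation of Reznik's Proposition~2, which bounds the Euclidean distance between a probability vector and its quantized version by $\frac{1}{\ell}\sqrt{\lfloor|\XX_n|/2\rfloor(|\XX_n|-\lfloor|\XX_n|/2\rfloor)/|\XX_n|}$. It then states that the same bound holds for $W_{2,\lambda}$ because the two are ``equivalent on $\XX_n$''.

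You supply the comparison this step actually needs. The diagonal coupling gives $W_{2,\lambda}^2\le(\operatorname{diam}(\mathrm{S})^2+\lambda)\,\|\cdot\|_{TV}$. The coordinatewise $1/\ell$ error, from your explicit rounding or from Reznik's sup-norm guarantee, gives $\|\cdot\|_{TV}\le|\XX_n|/(2\ell)$. The result is a justified rate that is uniform in $\hat{\mu}$. It also holds whether $R_\ell$ is a $W_{2,\lambda}$-nearest-point map, as the paper asserts, or an $\ell^p$-nearest-point map on the simplex, which is what Reznik's algorithm is.

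Your rate $\ell^{-1/2}$ is the correct order, and the paper's displayed $\cO(1/\ell)$ bound does not follow. Let $\delta>0$ be the smallest $c_\lambda$-distance between distinct points of $\XX_n$. Every coupling must put at least $\|\cdot\|_{TV}$ mass off the diagonal, so $W_{2,\lambda}\ge\delta\sqrt{\|\cdot\|_{TV}}$. Take $\hat{\mu}$ with masses $\frac{1}{2\ell}$ and $1-\frac{1}{2\ell}$ on two atoms. Then every point of $\cP^{(\ell)}(\XX_n)$ is at $W_{2,\lambda}$-distance at least $\delta/\sqrt{2\ell}$ from $\hat{\mu}$, which exceeds the paper's bound once $\ell$ is large.

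The statement itself remains true: on the compact simplex the identity map from the Euclidean metric to $W_{2,\lambda}$ is uniformly continuous, so the qualitative convergence survives. But only your argument yields a usable quantitative $\rho_\ell$. Making it uniform in $n$ requires $|\XX_n|/\ell\to0$, so Assumption~\ref{assume:growth-assumptions} must be read as a lower bound on the growth of $\ell$.
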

\begin{proof}
    This is a simple corollary of \cite[Proposition 2]{reznik2011algorithm} in which we replace $d_2$ with the cost structure of $W_{2,\lambda}$ metric and this is equivalent on $\XX_n$. Then we have a bound on the difference
    \begin{equation*}
        W_{2,\lambda}(\hat{\mu}, R_\ell(\hat{\mu})) \leq \frac{1}{\ell} \sqrt{\frac{\lfloor|\XX_n|/2\rfloor(|\XX_n| - \lfloor|\XX_n|/2\rfloor)}{|\XX_n|}},
    \end{equation*}
    from which the desired result follows.
\end{proof}

Note that the bound in the proof of Proposition \ref{prop:resnik-quantizer-error} is not stable with respect to $n$, i.e. cardinality of the state quantizer space $|\XX_n|$ increase faster than $\cO(n^2)$. To fix this issue, we impose the following assumption.
\begin{assume}\label{assume:growth-assumptions}
    The state quantization is done with balls with radius $\frac{1}{n}$ and the measure quantization index $\ell$ is a function of $n$ of order $\cO(n^{d+1})$.
\end{assume}
Covering a compact set in $\RR^d$ with $\frac{1}{n}$-nets is, in the worst case, polynomial $\cO(n^d)$. Therefore, with the growth assumption on $\ell$, we guarantee that the state quantization level will not interfere with measure quantization in an undesired way.

Returning to the original problem, consider the data set $\cD = \{(\bm{x}^k, \bm{y}^k): k\in\cK\}$. We define the quantized initial empirical measures
\begin{equation*}
    \hat{\mu}_0^k := R_\ell(\frac{1}{N}\sum_{i=1}^N\delta_{Q^{(n)}(p_i, x_t^{i,k})}), \quad k\in\cK
\end{equation*}
To capture the dynamics, define the map $f_n:\XX_n\times\UU\times\cP(\XX_n) \to \XX_n$ in the following way
\begin{equation}
    f_n((p,s); U, \hat{\mu}) := Q^{(n)}(f(p,s;U,\hat{\mu}))
\end{equation}
for all $(p,s)\in\XX_n$, $U\in\UU$ and $\hat{\mu}\in\cP(\XX)$. Also, we define the flow map $\Phi^{(n)}:\cP(\XX)\times\UU\to\cP(\XX)$ by
\begin{equation}
    \Phi^{(n)}(\hat{\mu}, U) := \hat{\mu}\circ f_n^{-1}(\cdot; U, \hat{\mu})
\end{equation}
for all $\hat{\mu}\in\cP(\XX)$ and $U\in\UU$. Although, $f_n$ and $\Phi^{(n)}$ quantize the dynamics in \eqref{eq:ensemble-dynamics}, they are not of McKean---Vlasov type anymore, only particle and measure level approximations of it, respectively. However, it is possible to lift this problem to an MDP as well, which is plainly an approximation to its counterpart. As an intermediate step, we define the transition kernel as
\begin{equation}
    \eta^{(n)}(\cdot|\hat{\mu}^1,\ldots,\hat{\mu}^K; U) := \delta_{\bm{\Phi}^{(n)}(\hat{\mu}^1,\ldots,\hat{\mu}^K; U)}(\cdot)
\end{equation}
for all $\hat{\mu}^1,\ldots,\hat{\mu}^K\in\cP(\XX)$ and $U\in\UU$. Here, $\bm{\Phi}^{(n)}$ is defined in the same way as in the original problem, in other words,
\begin{equation*}
    \bm{\Phi}^{(n)}(\hat{\mu}^1,\ldots,\hat{\mu}^K; U) := (\Phi^{(n)}(\hat{\mu}^1, U),\ldots,\Phi^{(n)}(\hat{\mu}^K,U)).
\end{equation*}

Now, we define a new MDP, denoted $\mathsf{MDP}^{(\ell,n,m)}$, whose state space is $\cP^{(\ell)}(\XX_n)^K$ and its action space is $\UU_m$. We define the corresponding dynamics via
\begin{equation}
    \Phi^{(\ell,n,m)}(\hat{\mu}, U^{(m)}) := R_\ell(\Phi^{(n)}(\hat{\mu}, U^{(m)}))
\end{equation}
for all $\hat{\mu}\in\cP^{(\ell)}(\XX_n)$ and $U^{(m)}\in\UU_m$. Like we have done before, its multivariable version
\begin{equation}
    \bm{\Phi}^{(\ell,n,m)}(\hat{\mu}^1,\ldots,\hat{\mu}^K; U^{(m)})= (\Phi^{(\ell,n,m)}(\hat{\mu}^1, U^{(m)}),\ldots,\Phi^{(\ell,n,m)}(\hat{\mu}^K, U^{(m)}))
\end{equation} 
allows us to define the transition kernel
\begin{equation}\label{eq:doubly-quantized-transition}
    \eta^{(\ell,n,m)}(\cdot | \hat{\mu}^1,\ldots,\hat{\mu}^K; U^{(m)}) := \delta_{\bm{\Phi}^{(\ell,n,m)}(\hat{\mu}^1,\ldots,\hat{\mu}^K; U^{(m)})}(\cdot).
\end{equation}
We end the specification of $\mathsf{MDP}^{(\ell,n,m)}$ by indicating that we use a similar cost structure given by an $|\XX_n|\times|\XX_n|$-matrix,
\begin{equation*}
    \mathbf{C}_{a,b} := \|X_a - X_b\|_{2,\lambda}^2
\end{equation*}
for all $X_a,X_b\in\XX_n$ and $a,b\in\{1,\ldots,|\XX_n|\}$, which induces the measure-level cost
\begin{equation*}
    \hat{W}_{2,\lambda}(\hat{\mu},\hat{\nu}) := \min_{\bm{\hat{\pi}}\in\hat{\Pi}(\hat{\mu},\hat{\nu})} \sum_{a=1}^{|\XX_n|}\sum_{b=1}^{|\XX_n|} \mathbf{C}_{a,b}\bm{\hat{\pi}}_{a,b}
\end{equation*}
where $\hat{\Pi}(\hat{\mu},\hat{\nu}) := \{\hat{\pi}\in\RR^{|\XX_n|\times|\XX_n|} : \bm{\hat{\pi}}_{a,b} \geq 0,\  \sum_{a=1}^{|\XX_n|}\bm{\hat{\pi}}_{a,b} = \hat{\nu}_b,\  \sum_{b=1}^{|\XX_n|}\bm{\hat{\pi}}_{a,b} = \hat{\mu}_a\}$.

For this MDP, invoking Blackwell's theorem (see \citep{blackwell1964memoryless}), it is sufficient to use the Markovian information structure, which is given by
\begin{equation*}
    \cI_{t}^{(\ell,n,m)} := (\hat{\mu}_{t}^{k})_{k\in\cK}
\end{equation*}
for all $t\in\overline{\cT}$. We denote the collection of deterministic Markovian (feed-back) strategies by $\Gamma_{\mathrm{DS}}^{(\ell,n,m)}$ whose elements are finite sequences $(\hat{\gamma}_{t}^{(\ell,n,m)})_{t\in\cT}$ of measurable functions such that
\begin{equation*}
    \hat{\gamma}_{t}^{(\ell,n,m)}(\cI_{t}^{(\ell,n,m)}) = U_t^{(m)}\in\UU_m
\end{equation*}
for each $t\in\cT$. Note that we denote the terminal cost functional for $\mathsf{MDP}^{(\ell,n,m)}$ as follows
\begin{equation}
    \hat{V}(\underline{\hat{\gamma}}^{(\ell,n,m)}) := \frac{1}{K}\sum_{k=1}^K \hat{W}_{2,\lambda}(\hat{\mu}_T^{\underline{\hat{\gamma}}^{(\ell,n,m)},k}, \hat{\nu}^k)^2
\end{equation}
for all $\underline{\hat{\gamma}}^{(\ell,n,m)}\in\Gamma_{\mathrm{DS}}^{(\ell,n,m)}$. Here we have $\hat{\nu}^k := R_\ell(\nu^k)$ and $\hat{\mu}_T^{\underline{\hat{\gamma}}^{(\ell,n,m)},k}$ is the quantized terminal measure-state initialized from $\hat{\mu}_0^k$ evolving with $\Phi^{(\ell,n,m)}$ under $\underline{\hat{\gamma}}$ for each $k\in\cK$.
\begin{remark}\label{remark:mdp_ell_n_m}
    Since both $\cP^{(\ell)}(\XX_n)$ and $\UU_m$ are finite, the appropriate topology on $\cP^{(\ell)}(\XX_n)^K\times\UU_m$ is discrete topology. That makes $\bm{\Phi}^{(\ell,n,m)}$ jointly continuous on its domain. Recalling the proof of Corollary \ref{cor:transition-kernel}, we know that $\eta^{(\ell,n,m)}$ is weak Feller. Moreover, finiteness of $\cP^{(\ell)}(\XX_n)^K\times\UU_m$ automatically ensures continuity and boundedness of the cost structure.
\end{remark}

\begin{theorem}\label{thm:triply-quantized-optimal-policies}
    Let Assumptions \ref{assume:compactness} and \ref{assume:lambda} hold. We pose the dynamic programming equations for $\mathsf{MDP}^{(\ell,n,m)}$ as follows
    \begin{equation}
        \begin{cases}
            \displaystyle\fC_{\ell,n,m,T}(\hat{\mu}^1,\ldots,\hat{\mu}^K) :=  \frac{1}{K}\sum_{k=1}^K \hat{W}_{2,\lambda}(\hat{\mu}^k,\hat{\nu}^k)^2 \\
            \displaystyle\fC_{\ell,n,m,t}(\hat{\mu}^1,\ldots,\hat{\mu}^K) := \inf_{U\in\UU_m}\int_{\cP^{(\ell)}(\XX_n)^K}\eta^{(\ell,n,m)}(\de\bm{\hat{\mu}}_{t+1}|\hat{\mu}^1,\ldots,\hat{\mu}^K; U)\fC_{\ell,n,m,t+1}(\bm{\hat{\mu}_{t+1}}),
        \end{cases}
    \end{equation}
    $\text{for all } (\hat{\mu}^1,\ldots,\hat{\mu}^K)\in\cP^{(\ell)}(\XX_n)^K \text{ and all } t\in\cT$. Then there is a finite sequence of measurable functions $(\hat{\gamma}^*_{\ell,n,m,t})_{t\in\cT}$ from $\cP^{(\ell)}(\XX_n)^K$ to $\UU_m$ such that $\hat{\gamma}^*_{\ell,n,m,t}(\hat{\mu}^1,\ldots\hat{\mu}^K) = U_t^{(\ell,n,m)*}$ for all $t\in\cT$ and such actions are optimal in the sense that
    \begin{equation}
        \frac{1}{K}\sum_{k=1}^K W_{2,\lambda}(\hat{\mu}_T^{k,(U_t^{(\ell,n,m)*})_{t\in\cT}}, R_\ell(\nu^k))^2 = \inf_{(U_t^{(m)})_{t\in\cT}\in\UU_m^T}\frac{1}{K}\sum_{k=1}^K W_{2,\lambda}(\hat{\mu}_T^{k,(U_t^{(m)})_{t\in\cT}}, R_\ell(\nu^k))^2.
    \end{equation}
\end{theorem}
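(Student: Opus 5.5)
The plan is to treat $\mathsf{MDP}^{(\ell,n,m)}$ as a deterministic finite-horizon MDP with \emph{finite} state space $\cP^{(\ell)}(\XX_n)^K$ and \emph{finite} action space $\UU_m$. We then run the dynamic programming recursion by backward induction, using that every infimum over $\UU_m$ is a minimum over finitely many candidates. This makes the argument more elementary than the appeal to \cite[Chapter 2]{hernandez2012discrete} in Theorem \ref{thm:transformers-optimality}. Alternatively, one could invoke that theorem verbatim, since Remark \ref{remark:mdp_ell_n_m} supplies the weak Feller property, the compactness of the discrete spaces, and the continuity and boundedness of the cost.

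First I check that the recursion is well defined. By Remark \ref{remark:mdp_ell_n_m} and the definition \eqref{eq:doubly-quantized-transition} of the Dirac kernel, the integral in the recursion reduces to
\begin{equation*}
\fC_{\ell,n,m,t}(\bm{\hat{\mu}}) = \min_{U\in\UU_m}\fC_{\ell,n,m,t+1}\big(\bm{\Phi}^{(\ell,n,m)}(\bm{\hat{\mu}};U)\big).
\end{equation*}
I must confirm that $\bm{\Phi}^{(\ell,n,m)}$ maps $\cP^{(\ell)}(\XX_n)^K$ into itself, which holds because $R_\ell$ lands in $\cP^{(\ell)}(\XX_n)$. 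The terminal function $\fC_{\ell,n,m,T}$ is real-valued, since $\hat W_{2,\lambda}$ is a minimum of a linear function over the compact polytope $\hat\Pi$. Hence each $\fC_{\ell,n,m,t}$ is a real function on a finite set.

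Next I construct the selectors. For each $t\in\cT$ and each $\bm{\hat\mu}$, I let $\hat\gamma^*_{\ell,n,m,t}(\bm{\hat\mu})$ be an argmin, with ties broken by a fixed enumeration of $\UU_m$. Measurability is automatic, because the domain carries the discrete topology.

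Finally I verify optimality. Starting from $\bm{\hat\mu}_0=(\hat\mu_0^k)_k$, I generate the deterministic trajectory and the actions $U_t^{(\ell,n,m)*}$, and show by backward induction that
\begin{equation*}
\fC_{\ell,n,m,t}(\bm{\hat\mu}_t)=\min_{(U_s)_{s\ge t}\in\UU_m^{T-t}}\frac1K\sum_{k}\hat W_{2,\lambda}\big(\hat\mu_T^{k},R_\ell(\nu^k)\big)^2,
\end{equation*}
with the minimum attained along the selected actions. This is the standard comparison in which any open-loop sequence is at least the DP value at each step. Blackwell's theorem, or directly the determinism of the dynamics, shows that deterministic Markov policies and open-loop sequences give the same set of reachable costs, which identifies the two infima in the statement.

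The main obstacle is a notational mismatch. The statement uses $W_{2,\lambda}$, whereas the recursion is defined through $\hat W_{2,\lambda}$, the discrete optimal transport with cost matrix $\mathbf C$. I would argue that on measures supported in $\XX_n\subset\XX$ the two coincide, with the squaring convention matched. Every coupling in $\Pi$ of measures on $\XX_n$ is supported on $\XX_n\times\XX_n$, so it is exactly a matrix in $\hat\Pi$, and the integral of $c_\lambda^2$ equals $\sum_{a,b}\mathbf C_{a,b}\hat\pi_{a,b}$. This identification must be stated explicitly; after that the argument is routine.
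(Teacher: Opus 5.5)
Your dynamic-programming argument is right, and it is essentially the paper's proof unpacked. The paper simply invokes Remark~\ref{remark:mdp_ell_n_m} together with Theorem 3.2.1 of \cite{hernandez2012discrete}. On the finite state space $\cP^{(\ell)}(\XX_n)^K$ with finite action set $\UU_m$, that theorem reduces to exactly your backward induction with argmin selectors. Prove the inductive identity $\fC_{\ell,n,m,t}(\bm{\hat\mu})=\min_{(U_s)_{s\ge t}}(\cdots)$ for \emph{every} $\bm{\hat\mu}$, not only along the selected trajectory, before specialising to $\bm{\hat\mu}_0$. Your observation that, with deterministic dynamics and a fixed initial state, Markov selectors and open-loop sequences generate the same set of costs is the correct way to match the infimum over $\UU_m^T$.

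The step you single out as the main obstacle is not resolved correctly, however. Your own computation shows that for $\hat\mu,\hat\nu\in\cP(\XX_n)$ one has $\hat W_{2,\lambda}(\hat\mu,\hat\nu)=\min_{\hat\pi\in\hat\Pi}\sum_{a,b}\mathbf C_{a,b}\hat\pi_{a,b}=W_{2,\lambda}(\hat\mu,\hat\nu)^2$, because $\mathbf C_{a,b}=\|X_a-X_b\|_{2,\lambda}^2$. So the two quantities do \emph{not} coincide.

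With the definitions as written, $\fC_{\ell,n,m,T}=\frac1K\sum_k W_{2,\lambda}(\hat\mu^k,\hat\nu^k)^4$. Set $a_k:=W_{2,\lambda}(\hat\mu_T^k,R_\ell(\nu^k))^2$. Your selectors then minimise $\sum_k a_k^2$, whereas the displayed conclusion concerns $\sum_k a_k$. For $K=1$ this is harmless, since $a\mapsto a^2$ is increasing on $[0,\infty)$. For $K\ge 2$ the minimisers can differ in general. Compare $(a_1,a_2)=(0,1)$ with $(0.6,0.6)$: the first is better for $\sum_k a_k$, the second for $\sum_k a_k^2$.

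No ``matching of the squaring convention'' inside the proof removes this. You have to state explicitly that you are correcting the statement: read $\hat W_{2,\lambda}$ as the square root of the discrete transport value. Equivalently, use terminal cost $\frac1K\sum_k\hat W_{2,\lambda}(\hat\mu^k,\hat\nu^k)$ with $\hat W_{2,\lambda}$ as defined, which is what Algorithm~\ref{alg:triply} actually uses. With that reading made explicit, the rest of your argument is complete.
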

\begin{proof}
    By Remark \ref{remark:mdp_ell_n_m} and by Theorem 3.2.1 in the book of \cite{hernandez2012discrete}, we obtain the desired controls.
\end{proof}

Theorem \ref{thm:triply-quantized-optimal-policies} yields us an optimal closed-loop policy from which we again obtain an optimal open-loop policy as we discuss in previous parts. So, the architecture of Transformer corresponding to $\mathsf{MDP}^{(\ell,n,m)}$ is now fixed. Now, we provide a learning algorithm to summarize the procedure.

\begin{algorithm}[H]
\caption{Triply Quantized Dynamic Programming}
\label{alg:triply}
\begin{algorithmic}[1]

\STATE \textbf{Input:} data set $\{(\mathbf{x}^k,\mathbf{y}^k)\}_{k=1}^K$, horizon $T$, quantization levels $(\ell,n,m)$

\STATE Construct state grid $\mathrm{S}_n$ and define
\begin{equation*}
    Q_n(p,x) = \left(p, \arg\min_{s\in \mathrm{S}_n}\|x-s\|\right)
\end{equation*}

\STATE Construct finite action set $\UU_m = \{U^1,\dots,U^M\}$

\STATE Construct measure grid $\mathcal \cP^{(\ell)}(\XX_n)$ and quantizer $R_\ell$

\FOR{$k=1,\ldots,K$}
    \STATE $\hat{\mu}_0^k \leftarrow R_\ell\left(\frac{1}{N} \sum_{i=1}^N \delta_{Q^{(n)}(p_i,x^{i,k})}\right)$
    \STATE $\hat{\nu}^k \leftarrow R_\ell\left(\frac{1}{N} \sum_{i=1}^N \delta_{Q^{(n)}(p_i,y^{i,k})}\right)$
\ENDFOR

\STATE Define dynamics:
\[
\Phi^{(\ell,n,m)}(\hat{\mu},U) = R_\ell\big(\Phi^{(n)}(\hat{\mu},U)\big)
\]

\STATE Define terminal cost:
\[
\fC_{\ell,n,m,T}(\hat{\mu}^1,\dots,\hat{\mu}^K) = \frac1K \sum_{k=1}^K W_{2,\lambda}(\hat{\mu}^k,\hat{\nu}^k)^2
\]

\FOR{$t = T-1$ down to $0$}
    \FORALL{$(\hat{\mu}^1,\dots,\hat{\mu}^K)\in\cP^{\ell}(\XX_n)^K$}
        \STATE
        \[
        \fC_{\ell,n,m,t}(\hat{\mu}^1,\dots,\hat{\mu}^K)
        =
        \min_{U \in \UU_m}
        \fC_{\ell,n,m,t+1}\big(
        \Phi^{(\ell,n,m)}(\hat{\mu}^1,U),\dots,\Phi^{(\ell,n,m)}(\hat{\mu}^K,U)
        \big)
        \]
        \STATE 
        \[
        \gamma_t^*(\hat{\mu}^1,\dots,\hat{\mu}^K) = \argmin_{U \in \UU_m}\fC_{\ell,n,m,t+1}\big(
        \Phi^{(\ell,n,m)}(\hat{\mu}^1,U),\dots,\Phi^{(\ell,n,m)}(\hat{\mu}^K,U)
        \big)
        \]
    \ENDFOR
\ENDFOR

\STATE \textbf{Forward pass:}

\FOR{$t=0,\dots,T-1$}
    \STATE $U_t^\star \leftarrow \gamma_t^*(\hat{\mu}_t^1,\dots,\hat{\mu}_t^K)$
    \FOR{$k=1,\dots,K$}
        \STATE $\hat{\mu}_{t+1}^k \leftarrow \Phi^{(\ell,n,m)}(\hat{\mu}_t^k, U_t^\star)$
    \ENDFOR
\ENDFOR

\STATE \textbf{Output:} $(U_t^\star)_{t=0}^{T-1}$

\end{algorithmic}
\end{algorithm}
We still need to be able to guarantee that any optimal policy for $\mathsf{MDP}^{(\ell,n,m)}$ performs good enough for the original problem. Until the end of this subsection, our aim is to show that this is indeed possible.

\begin{theorem}\label{thm:triply-quantized-efficiency}
    Let Assumptions \ref{assume:compactness}, \ref{assume:lambda} and $\ref{assume:growth-assumptions}$ hold. Then, there exists a sequence of nonnegative real numbers $(\varepsilon^{(\ell,n,m)})_{\ell,n,m\in\NN}$ with
    \begin{equation*}
     \varepsilon^{(\ell,n,m)}\xrightarrow{\ell,n,m\to\infty}0
    \end{equation*} 
    such that for all $\ell,n,m\in\NN$ and all optimal controls $\underline{U}^{(\ell,n,m)*} = (U_{t,\ell,n}^{(\ell,n,m)*})_{t\in\cT} \in \UU_m^T$ for $\mathsf{MDP}^{(\ell,n,m)}$, we have
    \begin{equation*}
        V(\underline{U}^{(\ell,n,m)*}) \leq \inf_{\underline{\gamma}\in\Gamma}V(\underline{\gamma}) + \varepsilon_{\ell,n,m}.
    \end{equation*}
\end{theorem}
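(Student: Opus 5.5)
Since the lifted problem is deterministic with a fixed initial condition $\bm{\mu}_0=(\mu_0^1,\ldots,\mu_0^K)$, the plan is to work entirely with open-loop sequences. By the closed-loop/open-loop equivalence of the previous section, $\inf_{\underline{\gamma}\in\Gamma}V(\underline{\gamma})=\inf_{\underline{U}\in\UU^T}J(\underline{U})$, where $J(\underline{U}):=\frac1K\sum_k W_{2,\lambda}(\mu_T^{k,\underline{U}},\nu^k)^2$ and $\mu_T^{k,\underline{U}}$ is the $T$-fold composition of $\Phi$. Write $\hat J_{\ell,n,m}(\underline{U})$ for the analogous cost of $\mathsf{MDP}^{(\ell,n,m)}$, computed with $\Phi^{(\ell,n,m)}$, initial data $\hat\mu_0^k$ and targets $R_\ell(\nu^k)$; this quantity is defined for any $\underline{U}\in\UU^T$, not only for $\underline{U}\in\UU_m^T$. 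The key claim is a \emph{uniform approximation} estimate
\begin{equation*}
\delta_{\ell,n}:=\sup_{\underline{U}\in\UU^T}\big|J(\underline{U})-\hat J_{\ell,n,m}(\underline{U})\big|\xrightarrow{\ell,n\to\infty}0,
\end{equation*}
together with a \emph{uniform continuity in actions} estimate: $\omega(1/m):=\sup\{|J(\underline{U})-J(\underline{U}')| : \|U_t-U_t'\|_\UU\le 1/m\ \forall t\}\to0$.

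Given these two estimates, the result follows from a standard sandwich. Let $\underline{U}^\circ$ be an optimizer of $J$ over $\UU^T$ (it exists by compactness and continuity, cf.\ Theorem~\ref{thm:transformers-optimality}), and let $\underline{U}^{\circ,m}\in\UU_m^T$ be a componentwise $1/m$-approximation of it. Then
\begin{equation*}
J(\underline{U}^{(\ell,n,m)*})\le \hat J(\underline{U}^{(\ell,n,m)*})+\delta_{\ell,n}\le \hat J(\underline{U}^{\circ,m})+\delta_{\ell,n}\le J(\underline{U}^{\circ,m})+2\delta_{\ell,n}\le J^*+\omega(1/m)+2\delta_{\ell,n}.
\end{equation*}
We therefore set $\varepsilon^{(\ell,n,m)}:=2\delta_{\ell,n}+\omega(1/m)$. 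Since the optimal controls are implemented open-loop as in the previous section, $V(\underline{U}^{(\ell,n,m)*})=J(\underline{U}^{(\ell,n,m)*})$.

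The action-continuity estimate $\omega$ follows from Proposition~\ref{prop:phi-continuity}. Composing jointly continuous maps $T$ times on the compact set $\cP(\XX)\times\UU$ gives a continuous map $\UU^T\to\cP(\XX)^K$. Composing further with the continuous cost of Remark~\ref{remark:properties-of-transformers}(R2) gives continuity of $J$ on the compact set $\UU^T$, and hence uniform continuity.

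The main obstacle is $\delta_{\ell,n}$, which requires a one-step error estimate that is uniform in $U$, followed by propagation of errors across layers. For the one-step estimate I would proceed in three parts. First, for any $\mu\in\cP(\XX)$,
\begin{equation*}
W_{2,\lambda}(\Phi(\mu,U),\Phi^{(n)}(\mu,U))\le\sup_X\|f(X,U,\mu)-Q^{(n)}f(X,U,\mu)\|\le 1/n,
\end{equation*}
obtained by coupling through the identity map. Second, Proposition~\ref{prop:resnik-quantizer-error} bounds the $R_\ell$ error, and Assumption~\ref{assume:growth-assumptions} is used here to make this bound vanish as $n,\ell\to\infty$ jointly: with $|\XX_n|=O(n^d)$ and $\ell\sim n^{d+1}$, the bound is $O(\sqrt{|\XX_n|}/\ell)\to0$. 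Third, for propagation, $\Phi$ is uniformly continuous on the compact set $\cP(\XX)\times\UU$, with modulus $\rho$ in the $W_{2,\lambda}$ metric that metrizes the weak$^*$ topology. Writing $e_t:=\max_k W_{2,\lambda}(\mu_t^k,\hat\mu_t^k)$, this gives
\begin{equation*}
e_{t+1}\le\rho(e_t)+1/n+\varepsilon_R(\ell,n),
\end{equation*}
with $e_0\le 1/n+\varepsilon_R$, uniformly in $\underline{U}$. Iterating $T$ times shows $e_T\to0$. The terminal cost is uniformly continuous, and $W_{2,\lambda}(\nu^k,R_\ell\nu^k)$ is small, which yields $\delta_{\ell,n}\to0$. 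I would also check that $\hat W_{2,\lambda}$ coincides with $W_{2,\lambda}$ restricted to $\cP(\XX_n)$; it does, because it is the same transport problem with cost matrix $\|X_a-X_b\|_{2,\lambda}^2$. The delicate points are that the moduli must be uniform over all action sequences, which compactness supplies, and that $\ell$ must be tied to $n$ through Assumption~\ref{assume:growth-assumptions}.
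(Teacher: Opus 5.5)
Your proof is correct, and it is organized differently from the paper's. The paper splits $|V(\underline{U}^{(\ell,n,m)*})-V^*|$ into four terms through an intermediate state-only-quantized cost $V_n$ (dynamics $\Phi^{(n)}$, no $R_\ell$, actions in $\UU$). It bounds $\sup_{\underline U}|V-V_n|$ and $|V_n-V^{(\ell,n,m)}|$ by iterating Lipschitz bounds for $\Phi$ and for $\Phi^{(n)}$ (Corollaries~\ref{cor:phi-lipschitz} and~\ref{cor:lipschitz-phi_n}), and it handles the action grid through a modulus $\omega_n$ of $V_n$ in $\underline U$.

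You instead compare the triply quantized cost directly with the unquantized one, uniformly over $\UU^T$, and charge the action-grid error to the unquantized $J$ through the sandwich. The gain is that you never need any regularity of the quantized dynamics, and that matters here. The nearest-neighbour quantizer $Q^{(n)}$ is discontinuous across cell boundaries, so $\Phi^{(n)}$ is in general neither Lipschitz in $\bar\mu$ nor continuous in $U$; the non-expansiveness of $Q^{(n)}$ invoked in Corollary~\ref{cor:lipschitz-phi_n} does not hold. The paper's Steps~2 and~3 rely on exactly these properties. Your modulus $\omega$ also does not depend on $n$, whereas the paper needs $\omega_n(1/m)\to0$ jointly in $(n,m)$, i.e.\ an equicontinuity of $(V_n)_n$ that it does not establish.

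In your recursion $e_{t+1}\le\rho(e_t)+1/n+\varepsilon_R(\ell,n)$, the quantizers enter only as additive one-step errors. The only regularity used is that of $\Phi$ and $J$, which Proposition~\ref{prop:phi-continuity} and compactness supply. You are also explicit that the measure-quantization error grows with $|\XX_n|$, which is where Assumption~\ref{assume:growth-assumptions} actually enters.

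What the paper's Lipschitz route offers in exchange is an explicit rate. You can recover one for the state and measure part by replacing $\rho$ with the bound of Corollary~\ref{cor:phi-lipschitz} for the unquantized $\Phi$. That bound is valid when proved by pushing an optimal coupling of $(\mu,\nu)$ forward under $(f(\cdot,U,\mu),f(\cdot,U,\nu))$, and it gives $e_T\le\sum_{s=0}^{T}L_\Phi^s\,(1/n+\varepsilon_R(\ell,n))$.

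Two small reading points:
\begin{itemize}
\item With the paper's definition, $\hat W_{2,\lambda}$ is the optimal transport cost itself, i.e.\ $W_{2,\lambda}^2$ on $\cP(\XX_n)$. The cost of $\mathsf{MDP}^{(\ell,n,m)}$ should therefore be read as $\frac1K\sum_k W_{2,\lambda}(\hat\mu_T^k,\hat\nu^k)^2$, as in Theorem~\ref{thm:triply-quantized-optimal-policies} and (N3).
\item $R_\ell(\nu^k)$ only makes sense after projecting $\nu^k$ onto $\XX_n$. Your terminal term is then $W_{2,\lambda}\bigl(\nu^k,R_\ell(\nu^k\circ(Q^{(n)})^{-1})\bigr)\le 1/n+\varepsilon_R(\ell,n)$, by Lemma~\ref{lemma:approximate-measure} and Proposition~\ref{prop:resnik-quantizer-error}.
\end{itemize}
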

\begin{proof}
    We utilize the following processes that helps in intermediate steps of this proof
    \begin{itemize}
        \item[(N1)] $\mu_{t+1}^k = \Phi(\mu_t^k, U_t)$ with cost functional $V(\underline{U}) := \frac{1}{K}\sum_{k=1}^K W_{2,\lambda}(\mu_T^{k,\underline{U}}, \nu^k)^2$ where $\underline{U}\in\UU^T$ and $V^* = \inf_{\UU^T}V$.
        \item[(N2)] $\bar{\mu}_{t+1}^k = \Phi^{(n)}(\bar{\mu}_{t}^k, U_t)$ with $V_n(\underline{U}) := \frac{1}{K}\sum_{k=1}^K W_{2,\lambda}(\bar{\mu}_{T}^{k,\underline{U}}, \nu^k)^2$ where $\underline{U}\in\UU^T$ and $V_n^* = \inf_{\UU^T}V_n$.
        \item[(N3)] $\hat{\mu}_{t+1}^k = \Phi^{(\ell,n,m)}(\hat{\mu}_{t}^k, U_t^{(m)})$ with $V^{(\ell,n,m)}(\underline{U}^{(m)}) := \frac{1}{K}\sum_{k=1}^K W_{2,\lambda}(\hat{\mu}_{T}^{k,\underline{U}^{(m)}}, R_\ell(\nu^k))^2$ where $\underline{U}^{(m)}\in\UU_m^T$ and $V^{(\ell,n,m)*} = \inf_{\UU_m^T}V^{(\ell,n,m)}$.
    \end{itemize}
    for all $t\in\overline{\cT}$, $k\in\cK$, $(U_t)_{t\in\cT}\in\UU^T$, $(U_t^{(m)})_{t\in\cT}\in\UU_m^T$. Let $\underline{U}^{(\ell,n,m)*} = (U_{t,\ell,n}^{(\ell,n,m)*})_{t\in\cT} \in \UU_m^T$ be a sequence of optimal control actions for (N3). Then, we compose the objective in the following way
    \begin{align*}
        |V(\underline{U}^{(m),*}) - V^*| &\leq |V(\underline{U}^{(m),*}) - V_{n}(\underline{U}^{(m),*})| \\
        &+|V_n(\underline{U}^{(m),*}) - V^{(\ell,n,m)}(\underline{U}^{(m),*})| \\
        &+|V^{(\ell,n,m)*} - V_n^*|\\
        &+|V_n^* - V^*|
    \end{align*}
    \noindent\textbf{Step-1:} In $|V(\underline{U}^{(m),*}) - V_{n}(\underline{U}^{(m),*})|$, we only have error contribution from the state quantization. We prove a more general claim
    \begin{equation*}
        \sup_{\underline{U}\subset\UU^T}|V(\underline{U}) - V_n(\underline{U})| \xrightarrow{n\to\infty} 0.
    \end{equation*}
    Fix any $\underline{U}\subset\UU^T$, then 
    \begin{align*}
        |V(\underline{U}) - V_n(\underline{U})| &= \bigg|\frac{1}{K}\sum_{k=1}^K W_{2,\lambda}(\mu_T^{k,\underline{U}}, \nu^k)^2 - W_{2,\lambda}(\hat{\mu}_T^{k,\underline{U}}, \nu^k)^2\bigg|\\
        &\leq \frac{1}{K}\sum_{k=1}^K |W_{2,\lambda}(\mu_T^{k,\underline{U}}, \nu^k) - W_{2,\lambda}(\hat{\mu}_T^{k,\underline{U}}, \nu^k)|\cdot|W_{2,\lambda}(\mu_T^{k,\underline{U}}, \nu^k) + W_{2,\lambda}(\hat{\mu}_T^{k,\underline{U}}, \nu^k)|
    \end{align*}
    For each $k\in\cK$, using reverse triangle inequality we have
    \begin{equation*}
        |W_{2,\lambda}(\mu_T^{k,\underline{U}}, \nu^k) - W_{2,\lambda}(\hat{\mu}_T^{k,\underline{U}}, \nu^k)| \leq W_{2,\lambda}(\mu_T^{k,\underline{U}}, \hat{\mu}_T^{k,\underline{U}}).
    \end{equation*}
    For the second term
    \begin{align*}
        W_{2,\lambda}(\mu_T^{k,\underline{U}}, \nu^k)^2 &= \inf_{\pi\in\Pi(\mu_T^{k,\underline{U}}, \nu^k)}\int_{\XX\times\XX}\pi(\de (p,x), \de (q,y)) (\|x-y\|_2^2 + \lambda |p-q|^2)\\
        &\leq \inf_{\pi\in\Pi(\mu_T^{k,\underline{U}}, \nu^k)}\int_{\XX\times\XX}\pi(\de (p,x), \de (q,y))(\operatorname{diam}(S)^2 + \lambda)\\
        &= \operatorname{diam}(S)^2 + \lambda < +\infty.
    \end{align*}
    Therefore, we obtain
    \begin{equation*}
        |V(\underline{U}) - V_n(\underline{U})| \leq \frac{2}{K}\sqrt{\operatorname{diam}(S)^2 + \lambda} \sum_{k=1}^K W_{2,\lambda}(\mu_T^{k,\underline{U}}, \hat{\mu}_T^{k,\underline{U}}).
    \end{equation*}
    Now the focus is on $W_{2,\lambda}(\mu_t^{k,\underline{U}}, \hat{\mu}_t^{k,\underline{U}})$ for which the relation
    \begin{align*}
        W_{2,\lambda}(\mu_{t+1}^{k,\underline{U}}, \hat{\mu}_{t+1}^{k,\underline{U}}) &= W_{2,\lambda}(\Phi(\mu_{t}^{k,\underline{U}}, U_t), \Phi^{(n)}(\hat{\mu}_{t}^{k,\underline{U}}, U_t))\\
        &\leq W_{2,\lambda}(\Phi(\mu_{t}^{k,\underline{U}}, U_t), \Phi(\hat{\mu}_{t}^{k,\underline{U}}, U_t)) + W_{2,\lambda}(\Phi(\hat{\mu}_{t}^{k,\underline{U}}, U_t), \Phi^{(n)}(\hat{\mu}_{t}^{k,\underline{U}}, U_t))\\
        &\leq L_\Phi W_{2,\lambda}(\mu_{t}^{k,\underline{U}}, \hat{\mu}_{t}^{k,\underline{U}}) + \frac{1}{n}
    \end{align*}
    holds for all $k\in\cK$. The last inequality is due to the Lipschitz property of $\Phi$ (Corollary \ref{cor:phi-lipschitz}) and the asymptotic stability of the state quantizer $Q^{(n)}$ (Lemma \ref{lemma:approximate-measure}). By induction, we obtain
    \begin{equation*}
        W_{2,\lambda}(\mu_{T}^{k,\underline{U}}, \hat{\mu}_{T}^{k,\underline{U}}) \leq \frac{2}{n}\sum_{s=0}^TL_\Phi^s.
    \end{equation*}
    Hence, 
    \begin{equation*}
        |V(\underline{U}) - V_n(\underline{U})| \leq \frac{2}{n}\sqrt{\operatorname{diam}(S)^2 + \lambda}\sum_{s=0}^TL_\Phi^s.
    \end{equation*}
    We write $\alpha_n := \frac{2}{n}\leq \sqrt{\operatorname{diam}(S)^2 + \lambda}\sum_{s=0}^TL_\Phi^s$ for each $n\in\NN$ for simplicity.
    \medskip

    \noindent\textbf{Step-2:} The term $|V_n(\underline{U}^{(m),*}) - V^{(\ell,n,m)}(\underline{U}^{(m),*})|$ contributes the total error because of measure quantization. To compute a bound, we apply a similar machinery
\begin{align*}
    |V_n(\underline{U}^{(m),*}) - V^{(\ell,n,m)}(\underline{U}^{(m),*})| &= \bigg|\frac{1}{K}\sum_{k=1}^K W_{2,\lambda}(\bar{\mu}_{T}^{k,\underline{U}^{(\ell,n,m)*}}, \nu^k)^2 - W_{2,\lambda}(\hat{\mu}_{T}^{k,\underline{U}^{(\ell,n,m)*}}, R_\ell(\nu^k))^2\bigg|\\
    &\leq \frac{1}{K}\sum_{k=1}^K |W_{2,\lambda}(\bar{\mu}_{T}^{k,\underline{U}^{(\ell,n,m)*}}, \nu^k) - W_{2,\lambda}(\hat{\mu}_{T}^{k,\underline{U}^{(\ell,n,m)*}}, R_\ell(\nu^k))|\\
    &\qquad\qquad\qquad\cdot|W_{2,\lambda}(\bar{\mu}_{T}^{k,\underline{U}^{(\ell,n,m)*}}, \nu^k) + W_{2,\lambda}(\hat{\mu}_{T}^{k,\underline{U}^{(\ell,n,m)*}}, R_\ell(\nu^k))|.
\end{align*}
Since $\cP^{(\ell)}(\XX_n)$ can be embedded into $\cP(\XX)$ isomorphically, $\XX_n\subset\XX$, and $\UU_m\subset\UU$, we have 
\begin{equation*}
    |W_{2,\lambda}(\bar{\mu}_{T}^{k,\underline{U}^{(\ell,n,m)*}}, \nu^k) + W_{2,\lambda}(\hat{\mu}_{T}^{k,\underline{U}^{(\ell,n,m)*}}, R_\ell(\nu^k))| \leq 2\sqrt{\operatorname{diam}(S)^2 + \lambda}.
\end{equation*}
The first term is decomposed into
\begin{align*}
    |W_{2,\lambda}(\bar{\mu}_{T}^{k,\underline{U}^{(\ell,n,m)*}}, \nu^k) - W_{2,\lambda}(\hat{\mu}_{T}^{k,\underline{U}^{(\ell,n,m)*}}, R_\ell(\nu^k))| &\leq |W_{2,\lambda}(\bar{\mu}_{T}^{k,\underline{U}^{(\ell,n,m)*}}, \nu^k) \\
    &\hspace{5em}- W_{2,\lambda}(\bar{\mu}_{T}^{k,\underline{U}^{(\ell,n,m)*}}, R_\ell(\nu^k))|\\
    &+|W_{2,\lambda}(\bar{\mu}_{T}^{k,\underline{U}^{(\ell,n,m)*}}, R_\ell(\nu^k)) \\
    &\hspace{5em}- W_{2,\lambda}(\hat{\mu}_{T}^{k,\underline{U}^{(\ell,n,m)*}}, R_\ell(\nu^k))|.
\end{align*}
Using reverse triangle inequality and Proposition \ref{prop:resnik-quantizer-error}, it follows that
\begin{equation*}
    |W_{2,\lambda}(\bar{\mu}_{T}^{k,\underline{U}^{(\ell,n,m)*}}, \nu^k) - W_{2,\lambda}(\bar{\mu}_{T}^{k,\underline{U}^{(\ell,n,m)*}}, R_\ell(\nu^k))| \leq W_{2,\lambda}(\nu^k, R_\ell(\nu^k)) \leq \rho_\ell \xrightarrow{\ell\to\infty} 0^+.
\end{equation*}
We treat the other term by using reverse triangle inequality and a reasoning similar to the previous case
\begin{equation*}
    |W_{2,\lambda}(\bar{\mu}_{T}^{k,\underline{U}^{(\ell,n,m)*}}, R_\ell(\nu^k)) - W_{2,\lambda}(\hat{\mu}_{T}^{k,\underline{U}^{(\ell,n,m)*}}, R_\ell(\nu^k))| \leq W_{2,\lambda}(\bar{\mu}_{T}^{k,\underline{U}^{(\ell,n,m)*}}, \hat{\mu}_{T}^{k,\underline{U}^{(\ell,n,m)*}}) 
\end{equation*}
for which we have the following recursive relation
\begin{align*}
    W_{2,\lambda}(\bar{\mu}_{t+1}^{k,\underline{U}^{(\ell,n,m)*}}, \hat{\mu}_{t+1}^{k,\underline{U}^{(\ell,n,m)*}}) 
    &= W_{2,\lambda}(\Phi^{(n)}(\bar{\mu}_{t}^{k,\underline{U}^{(\ell,n,m)*}}, U_t^{(\ell,n,m)*}), R_\ell(\Phi^{(n)}(\hat{\mu}_{t}^{k,\underline{U}^{(\ell,n,m)*}}, U_t^{(\ell,n,m)*})))\\
    &\leq W_{2,\lambda}(\Phi^{(n)}(\bar{\mu}_{t}^{k,\underline{U}^{(\ell,n,m)*}}, U_t^{(\ell,n,m)*}), \Phi^{(n)}(\hat{\mu}_{t}^{k,\underline{U}^{(\ell,n,m)*}}, U_t^{(\ell,n,m)*}))\\
    &\qquad+ W_{2,\lambda}(\Phi^{(n)}(\hat{\mu}_{t}^{k,\underline{U}^{(\ell,n,m)*}}, U_t^{(\ell,n,m)*}), R_\ell(\Phi^{(n)}(\hat{\mu}_{t}^{k,\underline{U}^{(\ell,n,m)*}}, U_t^{(\ell,n,m)*})))\\
    &\leq L_{\Phi^{(n)}}W_{2,\lambda}(\bar{\mu}_{t}^{k,\underline{U}^{(\ell,n,m)*}}, \hat{\mu}_{t}^{k,\underline{U}^{(\ell,n,m)*}}) + \rho_\ell
\end{align*}
for all $t\in\cT$ and $k\in\cK$, where we use Proposition \ref{prop:resnik-quantizer-error} and Corollary \ref{cor:lipschitz-phi_n}. Consequently, we have
\begin{equation*}
    |V_n(\underline{U}^{(m),*}) - V^{(\ell,n,m)}(\underline{U}^{(m),*})| \leq 2\rho_\ell\sqrt{\operatorname{diam}(S)^2 + \lambda}\sum_{s=0}^TL_{\Phi^{(n)}}^s =: \beta_{\ell,n}.
\end{equation*}
Note that proof of Corollary \ref{cor:lipschitz-phi_n} asserts that the Lipschitz constant of $\Phi^{(n)}$ is not dependent of $n$. So we simply write $\beta_\ell$ instead of $\beta_{\ell,n}$.
\medskip

\noindent\textbf{Step-3:} We continue with the third estimate $|V^{(\ell,n,m)*} - V_n^*|$ in which we have two types of errors, namely the errors due to measure quantization and action quantization. Let $\underline{U}^{(\ell,n,m)*}\in\UU_m^T$ be optimal for $V^{(\ell,n,m)}$ and $k,\underline{U}^{*}\in\UU^T$ be optimal for $V_n^*$. Then we consider
\begin{equation*}
    |V^{(\ell,n,m)*} - V_n^*| \leq |V^{(\ell,n,m)*}-\inf_{\underline{U}^{(m)*}\in\UU_m^T}V_n(\underline{U}^{(m)*})| + |\inf_{\underline{U}^{(m)*}\in\UU_m^T}V_n(\underline{U}^{(m)*}) - V_n^*|.
\end{equation*}
The first summand on the majorizing side is dominated by $\sup_{\underline{U}^{(m)}\in\UU_m^T}|V^{(\ell,n,m)}(\underline{U}^{(m)}) - V_n(\underline{U}^{(m)})| \leq \beta_\ell$ by the preceding part involving the second term in the main decomposition. Define $V_{n,m}^* := \inf_{\underline{U}^{(m)\prime}\in\UU_m^T}V_n(\underline{U}^{(m)\prime})$. For any $r > 0$, pick $\underline{U}^r\in\UU^T$ so that $V_n(\underline{U}^r) \leq V_n^* + r$ exploiting the approximation property of infimum. Then, there is $\underline{U}^{(m),r}\in\UU_m^T$ such that $\max_{0\leq t\leq T-1}\|U_t^{(m),r} - U_t^r\|_\UU < \frac{1}{m}$. Let $\omega_n$ be the modulus of continuity of $V_n$, i.e. $\omega_n(\delta) := \sup\{|V_n(\underline{U}) - V_n(\underline{U}^\prime)| : \max_{0\leq t\leq T-1}\|U_t - U_t^\prime\| < \delta\}$, $\delta\in\overline{R}_+$. Note that $V_n$ is a composition and sum of Lipschitz functions, see Corollary \ref{cor:lipschitz-phi_n}, thereby, a uniformly continuous function on $\UU^T$, and in the special case $\UU_m^T$. Then, we get
\begin{equation*}
    V_{n,m}^* \leq V_n(\underline{U}^{(m),r}) \leq V_n(\underline{U}^r) + \omega_n\left(\frac{1}{m}\right) \leq V_n^* + r + \omega_n\left(\frac{1}{m}\right).
\end{equation*}
Since $r>0$ is arbitrary, we get $0 \leq V_{n,m}^* - V_n^* \leq \omega_n\left(\frac{1}{m}\right)$. Moreover $\omega_n\left(\frac{1}{m}\right)\to 0$ as $n,m\to\infty$ since $V_n$, $n\in\NN$, is uniformly continuous and so is the limit function $V$. Thus, we get
\begin{equation*}
    |V^{(\ell,n,m)*} - V_n^*| \leq \beta_\ell + \omega_n\left(\frac{1}{m}\right) =: \xi_{\ell,n,m}\xrightarrow{\ell,n,m\to+\infty}0.
\end{equation*}

\noindent\textbf{Step-4:} Lastly, recall that we have $\sup_{\underline{U}\in\UU^T}|V(\underline{U}) - V_n(\underline{U})| \leq \alpha_n \xrightarrow{n\to+\infty}0$. Let $\underline{U}_n^*$ be optimal for $V_n$ and $\underline{U}^*$ be optimal for $V$. Then
\begin{equation*}
    V_n(\underline{U}^*) \leq V(\underline{U}^*) + \alpha_n = V^* + \alpha_n \implies V_n^* \leq V^* + \alpha_n
\end{equation*}
and
\begin{equation*}
    V(\underline{U}_n^*) \leq V_n(\underline{U}_n^*) + \alpha_n \leq V_n^* + \alpha_n \implies V^* \leq V_n^* + \alpha_n.
\end{equation*}
Combining both, we obtain $|V^* - V_n^*| \leq \alpha_n$.

\noindent\textbf{Step-5:}Finally, the original decomposition yields
\begin{equation*}
    |V(\underline{U}^{(m),*}) - V^*| \leq 2\alpha_n + \beta_\ell + \xi_{\ell,n,m}.
\end{equation*}
Defining $\varepsilon_{\ell,n,m} := 2\alpha_n + \beta_\ell + \xi_{\ell,n,m}$ for all $\ell,n,m\in\NN$ concludes this proof.
\end{proof}

We now know that an optimal policy for $\mathsf{MDP}^{(\ell,n,m)}$ is near-optimal for the original problem. Then, we can easily solve the dynamic programming equations provided in Theorem \ref{thm:triply-quantized-optimal-policies} and obtain optimal controls for $\mathsf{MDP}^{(\ell,n,m)}$. We know that they perform well enough for finer quantization, in other words, the model can be more accurate if the quantization levels are chosen higher.

\section{Robustness to Distributional Initialization Errors, Asymptotic 
Consistency and $\Gamma$-convergence to Optimality for the Generalization Problem}\label{sec:robustness}
In training of Transformers, the optimal control actions are chosen for a given data set. This begs the question "how would the trained Transformer perform for unseen data?". This questions is called the \emph{generalization problem}. In this section, we provide an answer to it by proving that the optimal controls are robust under distributional perturbations. In other words, as a sequence of empirical initial laws converges to the true distribution, we prove the continuity of value function under weak\textsuperscript{$*$}-topology.

For this part, we extend our formulation as follows: Given a sequence of data sets $\cD_r := \{(\boldsymbol{x}^{k,r}, \boldsymbol{y}^{k,r}) : k=1,2,\ldots,K_r\}$, where $r\in\NN$, we lift them to the empirical measures of the following form
\begin{equation*}
    \PP_r := \frac{1}{K_r}\sum_{k=1}^{K_r}\delta_{(\mu_0^{k,r}, \nu^{k,r})},
\end{equation*}
where $\mu_0^{k,r} := \frac{1}{N}\sum_{i=1}^N\delta_{x^{i,k,r}}$ and $\nu^{k,r} := \frac{1}{N}\sum_{i=1}^N\delta_{y^{i,k,r}}$ for all $r\in\NN$ and all $k\in\{1,\ldots,K_r\} =: \cK_r$. Hence, we see $\PP_r \in \cP(\cP(\XX)\times\cP(\XX))$. Now, we define the cost functional (with an abuse of notation), for all $\underline{U}\in\UU^T$ and $P\in \cP(\cP(\XX)\times\cP(\XX))$, as
\begin{equation*}
    V(P, \underline{U}) := \int_{\cP(\XX)\times\cP(\XX)}P(\de\mu,\de\nu)W_{2,\lambda}(\Phi_T(\mu, \underline{U}), \nu)^2,
\end{equation*}
where we end up with $\Phi_T:\cP(\XX)\times\UU^T\to\cP(\XX)$ in a recursive fashion, using
\begin{equation*}
    \Phi_{t}(\mu, \underline{U}|_{t}) = \Phi(\Phi_{t-1}(\mu, \underline{U}|_{t-1}), U_{t}), \quad \forall t\in\cT,
\end{equation*}
and $\Phi_{1}(\mu, \underline{U}|_1) = \Phi(\Phi(\mu, U_0), U_1)$. Here we use the notation $\underline{U} = (U_0,U_1,\ldots,U_{T-1})\in\UU^T$ and $\underline{U}|_t = (U_0,\ldots,U_t)$ for all $t\in\overline{\cT}$. In short, $V$ is the value function for the empirical measure of a given data set that passes the first marginal through the Transformer flow using a trajectory of control actions, and compares the resulting distribution with the second marginal. The cost functional here is defined by $\fc((\mu,\nu), \underline{U}^T) := W_{2,\lambda}(\Phi_T(\mu, \underline{U}), \nu)^2$ for all $(\mu,\nu) \in \cP(\XX)\times\cP(\XX)$ and all $\underline{U}\in\UU^T$. For an empirical measure, for example considering $\PP_r$, the value function collapses to
\begin{equation*}
    V(\PP_r, \underline{U}) := \frac{1}{K_r}\sum_{k=1}^{K_r} W_{2,\lambda}(\Phi_T(\mu_0^{k,r}, \underline{U}^T), \nu^{k,r})^2.
\end{equation*}
Therefore, this is indeed the right generalization since $\Phi_T(\mu_0^{k,r}, \underline{U}^T)$ corresponds to the terminal state under the Transformer dynamics as it is illustrated in Figure \ref{fig:phi-bold-map}. Moreover, this generalization is more powerful from the point of our analysis than the previous ensemble formulation we have discuss earlier. However, it is computationally less tractable because of the double lifting. Therefore, we prefer one lifting formulation when discussing the training problem. Because of the intrinsic properties of the problem, we have such nice equivalences that allows us to go back and forth between these formulations.

We endow $\cP(\cP(\XX)\times\cP(\XX))$ with the induced weak\textsuperscript{$*$}-topology. That is, $P_j\to P$ in $\cP(\cP(\XX)\times\cP(\XX))$, if given a continuous and bounded map $\varphi:\cP(\XX)\times\cP(\XX)\to\RR$, we have
\begin{equation*}
    \int_{\cP(\XX)\times\cP(\XX)} P_j(\de\mu,\de\nu)g(\mu,\nu) \to \int_{\cP(\XX)\times\cP(\XX)} P(\de\mu,\de\nu)g(\mu,\nu).
\end{equation*}
Under Assumption \ref{assume:compactness}, the underlying space $\cP(\XX)\times\cP(\XX)$ is compact, thereby, $\cP(\cP(\XX)\times\cP(\XX))$ is also compact, and can be equipped with a metric generating the same topology. Note that we give $\cP(\cP(\XX)\times\cP(\XX))$ the respective Borel $\sigma$-field structure to make it a measurable space.

\begin{lemma}\label{lem:value-function-is-continuous}
    Under Assumption \ref{assume:compactness}, the value function $V$ is jointly continuous on $\cP(\cP(\XX)\times\cP(\XX))\times\UU^T$.
\end{lemma}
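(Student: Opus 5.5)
The plan is to reduce joint continuity of $V$ to joint continuity of the integrand $\fc((\mu,\nu),\underline{U}) = W_{2,\lambda}(\Phi_T(\mu,\underline{U}),\nu)^2$ on the compact space $\cP(\XX)\times\cP(\XX)\times\UU^T$. Then I would apply a standard result on weak$^*$ convergence of measures tested against uniformly convergent continuous functions.

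\textbf{Step 1 (continuity of the flow).} First I show that $\Phi_T:\cP(\XX)\times\UU^T\to\cP(\XX)$ is jointly continuous. This goes by induction on $t$ using Proposition \ref{prop:phi-continuity}. The map $(\mu,\underline{U}|_t)\mapsto \Phi(\Phi_{t-1}(\mu,\underline{U}|_{t-1}),U_t)$ is a composition of the continuous map $(\mu,\underline{U}|_t)\mapsto(\Phi_{t-1}(\mu,\underline{U}|_{t-1}),U_t)$ with the jointly continuous $\Phi$. The base case is Proposition \ref{prop:phi-continuity} itself.

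\textbf{Step 2 (continuity of the terminal cost).} Since $\XX$ is compact under Assumption \ref{assume:compactness}, $W_{2,\lambda}$ metrizes the weak$^*$ topology on $\cP(\XX)$. In particular it is continuous on $\cP(\XX)\times\cP(\XX)$, since $|W_{2,\lambda}(\mu,\nu)-W_{2,\lambda}(\mu',\nu')|\le W_{2,\lambda}(\mu,\mu')+W_{2,\lambda}(\nu,\nu')$. It is also bounded by $\operatorname{diam}(\mathrm{S})^2+\lambda$, as computed before Remark \ref{remark:properties-of-transformers}. Composing with Step 1 shows that $\fc$ is continuous and bounded on the compact metric space $\cP(\XX)\times\cP(\XX)\times\UU^T$. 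Hence $\fc$ is uniformly continuous there.

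\textbf{Step 3 (joint continuity of $V$).} Take $P_j\to P$ weak$^*$ in $\cP(\cP(\XX)\times\cP(\XX))$ and $\underline{U}_j\to\underline{U}$ in $\UU^T$. I split
\begin{equation*}
|V(P_j,\underline{U}_j)-V(P,\underline{U})| \le \sup_{(\mu,\nu)}\big|\fc((\mu,\nu),\underline{U}_j)-\fc((\mu,\nu),\underline{U})\big| + \Big|\int (P_j-P)(\de\mu,\de\nu)\,\fc((\mu,\nu),\underline{U})\Big|.
\end{equation*}
The second term vanishes by the definition of weak$^*$ convergence, since $\fc(\cdot,\underline{U})$ is continuous and bounded. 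The first term vanishes by the uniform continuity of $\fc$ from Step 2.

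The main obstacle is making the uniformity in the first term rigorous. I would argue it through compactness, using uniform continuity on a compact product, rather than through a Lipschitz estimate. This avoids relying on Lipschitz bounds for $\Phi$ that have not yet been established at this point. Measurability of $\fc$ for the integral is immediate from its continuity.
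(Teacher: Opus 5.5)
Your proof is correct. It shares the paper's skeleton: split the variation of $V$ into a measure part and a control part, and handle the measure part by weak$^*$ convergence tested against the continuous bounded integrand $\fc(\cdot,\underline{U})$.

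The difference lies in the control part. The paper fixes $P$, applies dominated convergence to get $V(P,\underline{U}_j)\to V(P,\underline{U})$, and then asserts without argument that both separate convergences are uniform and that this gives joint continuity. Dominated convergence at a single fixed $P$ does not by itself control $V(P_j,\underline{U}_j)-V(P_j,\underline{U})$ when $P_j$ is also moving. You bound that term by $\sup_{(\mu,\nu)}|\fc((\mu,\nu),\underline{U}_j)-\fc((\mu,\nu),\underline{U})|$, which holds uniformly over all probability measures. It tends to zero by uniform continuity of $\fc$ on the compact product $\cP(\XX)\times\cP(\XX)\times\UU^T$, which is exactly the uniformity the paper claims but never justifies.

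Your split also needs uniformity only in the control term. The measure term is evaluated at the fixed limit $\underline{U}$, so you avoid the equicontinuity (Arzel\`a--Ascoli) argument that the paper's claimed uniformity of $V(P_j,\cdot)\to V(P,\cdot)$ would require. In addition, you make explicit the continuity of $\Phi_T$ by induction from Proposition \ref{prop:phi-continuity}, which the paper uses implicitly. The paper's version is shorter, but yours is the one that actually closes the joint-continuity step, using compactness alone.
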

\begin{proof}
    In metric spaces, the continuity is characterized by sequential continuity. So, let $(P_j)_{j\in\NN}$ be a sequence of measures on $\cP(\cP(\XX)\times\cP(\XX))$ converging to $P\in\cP(\cP(\XX)\times\cP(\XX))$ in the weak\textsuperscript{$*$}-topology of $\cP(\cP(\XX)\times\cP(\XX))$, and let $(\underline{U}_j)_{j\in\NN}$ be a sequence of sequences of control actions in $\UU^T$, converging to $\underline{U}\in\UU^T$. Under the same assumptions, we know that $\Phi$ is uniformly continuous in each of its arguments, see the proof of Proposition \ref{prop:phi-continuity}. 

    First, we fix $\underline{A}\in\UU^T$ and consider
    \begin{equation*}
        V(P_j, \underline{A}) = \int_{\cP(\XX)\times\cP(\XX)}P_j(\de\mu,\de\nu)W_{2,\lambda}(\Phi_T(\mu, \underline{A}), \nu)^2 \qquad \forall j\in\NN.
    \end{equation*}
    We know that $W_{2,\lambda}$ is Lipschitz continuous in each of its arguments, and, under Assumption \ref{assume:compactness}, it is bounded. Therefore, the cost function $\fc$ is a continuous and bounded function on its domain. By the definition of weak\textsuperscript{$*$}-convergence, we have
    \begin{equation*}
        V(P_j, \underline{A}) \to V(P, \underline{A})
    \end{equation*}
    and this convergence is uniform. Second, we fix $P\in\cP(\cP(\XX)\times\cP(\XX))$ and consider
    \begin{equation*}
        V(P, \underline{U}_j) = \int_{\cP(\XX)\times\cP(\XX)}P(\de\mu,\de\nu)W_{2,\lambda}(\Phi_T(\mu, \underline{U}_j), \nu)^2 \qquad \forall j\in\NN.
    \end{equation*}
    The function $\fc$ is bounded and measurable since we use Borel $\sigma$-fields and $\fc$ is continuous. Using dominated convergence theorem \cite[Theorem 2.4.5]{cohn2013measure},
    we get
    \begin{equation*}
        V(P, \underline{U}_j) \to V(P, \underline{U}).
    \end{equation*}
    Note that this convergence is uniform as well. Finally, since $V$ is uniformly convergent in both of its arguments, we have the joint continuity on its domain.
\end{proof}

This lemma is sufficient for us to prove the robustness of value function to varying distributions.

\begin{theorem}\label{thm:robustness}
    Let $\PP_r$ be the empirical distribution associated to the data set $\cD_r$ for all $r\in\NN$. Assume that $\PP_r \xrightarrow{\mathrm{w}^*} \PP$. Let $\underline{U}_r^*$ be a sequence of optimal control actions for $\PP_r$ in the sense that if we train the Transformer with the underlying data set, $\underline{U}_r^*$ is optimal. Also, let $\underline{U}^*$ be a sequence of optimal control actions for $\PP$ in the sense that $V^*(\PP) := \inf_{\underline{A}\in\UU^T}V(\PP, \underline{A}) = V(\PP, \underline{U})$. Then $V(\PP, \underline{U}_r^*) \to V^*(\PP)$.
\end{theorem}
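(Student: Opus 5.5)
The plan is to use Lemma \ref{lem:value-function-is-continuous} together with the compactness of $\UU^T$ (Assumption \ref{assume:compactness}) in a standard $\Gamma$-convergence / Berge-type argument. Since $\cP(\cP(\XX)\times\cP(\XX))\times\UU^T$ is a compact metrizable space and $V$ is jointly continuous on it, $V$ is in fact uniformly continuous there. This gives the uniform convergence
\begin{equation*}
    \sup_{\underline{A}\in\UU^T}|V(\PP_r,\underline{A}) - V(\PP,\underline{A})| \xrightarrow{r\to\infty} 0 .
\end{equation*}
To prove this, suppose it fails. Then there are $\epsilon>0$, a subsequence, and actions $\underline{A}_r$ with $|V(\PP_r,\underline{A}_r)-V(\PP,\underline{A}_r)|\geq\epsilon$. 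Extract a further subsequence with $\underline{A}_r\to\underline{A}$ by compactness of $\UU^T$. Joint continuity then forces both $V(\PP_r,\underline{A}_r)$ and $V(\PP,\underline{A}_r)$ to converge to $V(\PP,\underline{A})$, which is a contradiction. Denote the supremum above by $\delta_r$.

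Next I will compare the optimal values, exactly as in Step-4 of the proof of Theorem \ref{thm:triply-quantized-efficiency}. Existence of the minimizers $\underline{U}_r^*$ and $\underline{U}^*$ follows from continuity of $V(\PP_r,\cdot)$ and $V(\PP,\cdot)$ on the compact set $\UU^T$ via the Weierstrass theorem. Writing $V^*(\PP_r)=V(\PP_r,\underline{U}_r^*)$, we get
\begin{equation*}
    V^*(\PP_r)\le V(\PP_r,\underline{U}^*)\le V^*(\PP)+\delta_r, \qquad V^*(\PP)\le V(\PP,\underline{U}_r^*)\le V(\PP_r,\underline{U}_r^*)+\delta_r = V^*(\PP_r)+\delta_r .
\end{equation*}
Hence $|V^*(\PP_r)-V^*(\PP)|\le\delta_r$.

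Finally, I will write
\begin{equation*}
    0\le V(\PP,\underline{U}_r^*)-V^*(\PP)\le |V(\PP,\underline{U}_r^*)-V(\PP_r,\underline{U}_r^*)| + |V^*(\PP_r)-V^*(\PP)| \le 2\delta_r \to 0 ,
\end{equation*}
which is the claim. A byproduct is that every accumulation point of $(\underline{U}_r^*)$ is optimal for $\PP$, obtained by passing to the limit in $V(\PP_r,\underline{U}_r^*)\le V(\PP_r,\underline{A})$ using joint continuity; this is the $\Gamma$-convergence statement.

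The main obstacle is the uniformity in $\underline{A}$. Pointwise convergence $V(\PP_r,\underline{A})\to V(\PP,\underline{A})$ for each fixed $\underline{A}$ is not enough, because the evaluation point $\underline{U}_r^*$ moves with $r$. I will get the uniformity from genuine joint continuity plus compactness, via the subsequence argument above. This requires that the $\mathrm{w}^*$-convergence of $\PP_r$ is taken on the compact space $\cP(\cP(\XX)\times\cP(\XX))$, which holds under Assumption \ref{assume:compactness}. It also requires that the integrand $(\mu,\nu,\underline{A})\mapsto W_{2,\lambda}(\Phi_T(\mu,\underline{A}),\nu)^2$ be jointly continuous and bounded. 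That follows from Proposition \ref{prop:phi-continuity}, by iterating the jointly continuous map $\Phi$ $T$ times, together with continuity and boundedness of $W_{2,\lambda}$ on the compact space $\cP(\XX)$. If one prefers not to rely on the joint-continuity claim of Lemma \ref{lem:value-function-is-continuous} as stated, this bound on the integrand gives it directly: the integrand is uniformly continuous in $\underline{A}$ uniformly over $(\mu,\nu)$, which controls $V(P,\underline{A}_j)-V(P,\underline{A})$ uniformly in $P$, and the remaining term is handled by weak$^*$ convergence at the fixed limit $\underline{A}$.
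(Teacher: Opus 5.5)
Your proof is correct and has the same skeleton as the paper's. Both split $V(\PP,\underline{U}_r^*)-V^*(\PP)$ into $|V(\PP,\underline{U}_r^*)-V(\PP_r,\underline{U}_r^*)|$, which is controlled by convergence uniform in the actions, and the optimal-value gap $|V^*(\PP_r)-V^*(\PP)|$. The difference lies in how you obtain the uniformity.

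\textbf{How uniformity is obtained.} The paper treats $\cG=\{\fc_{\underline{U}}:\underline{U}\in\UU^T\}$ as an equicontinuous, uniformly bounded family on the compact space $\cP(\XX)\times\cP(\XX)$. It uses Arzel\`a--Ascoli to extract a finite sup-norm $\epsilon$-net, which reduces the supremum over actions to finitely many weak$^*$ limits. You instead combine joint continuity of $V$ with compactness of $\UU^T$ in a subsequence/contradiction argument. Both routes rest on the uniform continuity of $((\mu,\nu),\underline{U})\mapsto W_{2,\lambda}(\Phi_T(\mu,\underline{U}),\nu)^2$ on a compact set.

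\textbf{What each route buys.} The paper's finite-net form is a uniform law of large numbers over $\cG$. It could be made quantitative from covering numbers of $\UU^T$ and Lipschitz bounds on $\Phi_T$ in the actions. Your argument is shorter but purely qualitative.

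\textbf{Where your version is more complete.} First, your direct proof of joint continuity uses uniform continuity of the integrand in $\underline{A}$, uniformly over $(\mu,\nu)$, plus weak$^*$ convergence at the fixed limit action. This supplies the uniformity in $P$ that the proof of Lemma \ref{lem:value-function-is-continuous} asserts after only a fixed-$P$ dominated convergence argument. Second, your two-sided sandwich $|V^*(\PP_r)-V^*(\PP)|\le\delta_r$ makes explicit the convergence of optimal values. The paper attributes this to ``continuity of the value function and the weak Feller property'', but pointwise continuity in $\PP$ would not control the infima. What is needed is exactly the uniformity, or joint continuity plus compactness, that you provide, and it yields the clean bound $0\le V(\PP,\underline{U}_r^*)-V^*(\PP)\le 2\delta_r$.
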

\begin{proof}
    For any $r\in\NN$, we can perform the following decomposition
    \begin{equation*}
        |V(\PP_r, \underline{U}_r^*)-V^*(\PP)| \leq |V(\PP, \underline{U}_r^*) - V(\PP_r,\underline{U}_r^*)| + |V(\PP_r,\underline{U}_r^*) - \underbrace{V(\PP, \underline{U}^*)}_{V^*(\PP)}|
    \end{equation*} 
    The second term on the dominating side goes to zero as $r\to+\infty$ since the value function is continuous (Lemma \ref{lem:value-function-is-continuous}) and since the model is weak Feller (Corollary \ref{cor:transition-kernel}). For the first term, construct the family of functions 
    \begin{equation*}
        \cG := \{\fc_{\underline{U}} : \underline{U} \in \UU^T\} \subset \cC(\cP(\XX)\times\cP(\XX)),
    \end{equation*}
    where $\fc_{\underline{U}}(\cdot) := \fc(\cdot, \underline{U})$. Since $\fc$ is continuous on a compact space, it is uniformly continuous. Meaning that for every $\epsilon > 0$, there exists $\delta > 0$ such that for all $\underline{U} \in \UU^T$ and all $P_1,P_2\in\cP(\cP(\XX)\times\cP(\XX))$, we have
    \begin{equation*}
        W_1(P_1, P_2) < \delta \implies |\fc(P_1, \underline{U}) - \fc(P_2, \underline{U})| < \epsilon.
    \end{equation*}
    Here, $W_1$ is, with an abuse of notation, the product $1$-Wasserstein distance on $\cP(\XX)\times\cP(\XX)$, which generates the weak\textsuperscript{$*$}-topology on the same space. However, the above expression can be equivalently written as for every $\epsilon > 0$, there exists $\delta > 0$ such that for all $\fc_{\underline{U}}\in\cG$ and all $P_1,P_2\in\cP(\XX)\times\cP(\XX)$, we have
    \begin{equation*}
        W_1(P_1, P_2) < \delta \implies |\fc_{\underline{U}}(P_1) - \fc_{\underline{U}}(P_2)| < \epsilon.
    \end{equation*}
    By definition, $\cG$ is equicontinuous. Moreover, by also using the continuity on compact space, we conclude that
    \begin{equation*}
        \sup_{\fc_{\underline{U}}\in\cG}\sup_{P\in\cP(\XX)\times\cP(\XX)}|\fc_{\underline{U}}(P)|=\sup_{\underline{U}\in\UU^T}\sup_{P\in\cP(\XX)\times\cP(\XX)}|\fc(P, \underline{U})| < +\infty.
    \end{equation*}
    Hence, $\cG$ is also uniformly bounded. By Arzelà-Ascoli theorem \cite[Theorem VI-3.8]{conway1994course}, it follows that $\cG$ is totally bounded in $\cC(\cP(\XX)\times\cP(\XX))$ with sup norm. Hence, for any $\epsilon > 0$, there exists $\underline{U}^1,\ldots,\underline{U}^M$ in $\UU^T$ such that for all $\underline{U}\in\UU^T$ there is $i\in\{1,\ldots, M\}$ satisfying $\|\fc_{\underline{U}} - \fc_{\underline{U}^i}\|_{\infty} < \epsilon$.

    Fix any $\underline{U}\in\UU^T$ and let $\underline{U}^i\in\{\underline{U}^j : j=1,\ldots, M\}$ be the closest one to $\underline{U}$ in sup-metric, then consider the following
    \begin{align*}
        |V(\PP, \underline{U}) - V(\PP_r,\underline{U})| &= \bigg|\int\fc_{\underline{U}}\de\PP - \int \fc_{\underline{U}}\de\PP_r\bigg|\\
        &\leq \underbrace{\bigg|\int\fc_{\underline{U}}\de\PP - \int\fc_{\underline{U}^i}\de\PP \bigg|}_{<\epsilon} + \bigg|\int\fc_{\underline{U}^i}\de\PP - \int\fc_{\underline{U}^i}\de\PP_r \bigg| + \underbrace{\bigg|\int\fc_{\underline{U}^i}\de\PP_r - \int\fc_{\underline{U}}\de\PP_r\bigg|}_{<\epsilon}\\
        &<2\epsilon + \bigg|\int\fc_{\underline{U}^i}\de\PP - \int\fc_{\underline{U}^i}\de\PP_r \bigg|.
    \end{align*}
    Then we have 
    \begin{equation*}
        \sup_{\underline{U}\in\UU^T}\bigg|\int\fc_{\underline{U}}\de\PP - \int \fc_{\underline{U}}\de\PP_r\bigg| < 2\epsilon + \max_{1 \leq i\leq M}\bigg|\int\fc_{\underline{U}^i}\de\PP - \int\fc_{\underline{U}^i}\de\PP_r \bigg|.
    \end{equation*}
    By the uniform continuity of the value function, again Lemma \ref{lem:value-function-is-continuous}, the second term on the right hand side goes to zero as $r$ goes to infinity. From here, it follows that
    \begin{equation*}
        |V(\PP, \underline{U}_r^*) - V(\PP_r,\underline{U}_r^*)|\xrightarrow{r\to+\infty}0.
    \end{equation*}
    This concludes the proof of the statement using the decomposition given before.
\end{proof}
In principle, given a sequence of data sets $\cD_r$ that are qualitatively getting better, in the sense that they get close to the true distribution, then the resulting control actions are near optimal for the true distribution. If the data used for training comes from an i.i.d. sequence under a fixed probability measure; then the policies obtained for finitely many samples are asymptotically near optimal for the limit problem (with infinite training data). Furthermore, the sequence of generated actions converges, possibly along a subsequence, to an optimal policy for the limit problem, which is often termed as $\Gamma$-convergence.
\begin{corollary}
    Let $\PP_r\xrightarrow{\mathrm{w}^*} \PP$ in $\cP(\cP(\XX))$. With the notation of Theorem \ref{thm:robustness} and under Assumption \ref{assume:compactness}, if $\underline{U}_r^* = (U_{t,r}^*)_{t\in\cT} \in \argmin V(\PP_r, \cdot)$ for all $r\in\NN$, then $\{\underline{U}_r^* : r\in\NN\}$ has a limit point $\underline{U}^* = (U_t^*)_{t\in\cT}$ such that $\underline{U}^* \in \argmin V(\PP, \cdot)$.
\end{corollary}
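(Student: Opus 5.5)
The argument combines compactness of the action space with uniform convergence of the value functions, the standard route to $\Gamma$-convergence of minimizers. Since $\UU$ is compact by Assumption \ref{assume:compactness}, the product $\UU^T$ is compact, hence sequentially compact as a finite product of compact metric spaces. The sequence $(\underline{U}_r^*)_{r\in\NN}$ therefore admits a subsequence $(\underline{U}_{r_j}^*)_{j\in\NN}$ converging to some $\underline{U}^*\in\UU^T$. This $\underline{U}^*$ is the candidate limit point, and it remains to show $\underline{U}^*\in\argmin V(\PP,\cdot)$.

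The key tool is the uniform estimate obtained inside the proof of Theorem \ref{thm:robustness}: by the Arzel\`a--Ascoli argument on the family $\cG=\{\fc_{\underline{U}}:\underline{U}\in\UU^T\}$,
\begin{equation*}
    \sup_{\underline{U}\in\UU^T}|V(\PP_r,\underline{U})-V(\PP,\underline{U})|\xrightarrow{r\to\infty}0.
\end{equation*}
Fix an arbitrary $\underline{A}\in\UU^T$. Optimality of $\underline{U}_{r_j}^*$ for $\PP_{r_j}$ gives $V(\PP_{r_j},\underline{U}_{r_j}^*)\le V(\PP_{r_j},\underline{A})$. The right-hand side converges to $V(\PP,\underline{A})$ by Lemma \ref{lem:value-function-is-continuous}, or directly by weak$^*$ convergence, since $\fc_{\underline{A}}$ is continuous and bounded. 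For the left-hand side we write
\begin{equation*}
    |V(\PP_{r_j},\underline{U}_{r_j}^*)-V(\PP,\underline{U}^*)|\le \sup_{\underline{U}}|V(\PP_{r_j},\underline{U})-V(\PP,\underline{U})| + |V(\PP,\underline{U}_{r_j}^*)-V(\PP,\underline{U}^*)|.
\end{equation*}
The first term vanishes by the uniform estimate. The second vanishes by joint continuity of $V$ (Lemma \ref{lem:value-function-is-continuous}), applied with the measure argument fixed at $\PP$ and $\underline{U}_{r_j}^*\to\underline{U}^*$. Passing to the limit yields $V(\PP,\underline{U}^*)\le V(\PP,\underline{A})$ for every $\underline{A}$, so $\underline{U}^*$ is a minimizer. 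The argmin is nonempty by compactness of $\UU^T$ and continuity of $V(\PP,\cdot)$, so the statement is consistent.

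Alternatively, the uniform estimate can be avoided altogether. Joint continuity of $V$ on the compact product $\cP(\cP(\XX)\times\cP(\XX))\times\UU^T$ gives $V(\PP_{r_j},\underline{U}_{r_j}^*)\to V(\PP,\underline{U}^*)$ directly, because $(\PP_{r_j},\underline{U}_{r_j}^*)\to(\PP,\underline{U}^*)$ jointly. I would present this shorter route as the main argument and keep the uniform bound as a remark. The only point needing care is that joint continuity was established in Lemma \ref{lem:value-function-is-continuous} via separate uniform continuity. This is the main obstacle, and if in doubt I would fall back on the equicontinuity-based uniform convergence, which handles the moving measure argument rigorously. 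Either way, the passage $\PP\in\cP(\cP(\XX))$ in the statement should be read as $\cP(\cP(\XX)\times\cP(\XX))$, consistent with Theorem \ref{thm:robustness}.
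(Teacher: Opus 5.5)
Your proof is correct. Its skeleton is the same as the paper's: sequential compactness of $\UU^T$, then joint continuity of $V$ (Lemma \ref{lem:value-function-is-continuous}) to get $V(\PP_{r_j},\underline U_{r_j}^*)\to V(\PP,\underline U^*)$. Where you differ is in how you show the limit is a minimizer.

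The paper closes the chain $V(\PP,\underline U^*)=\lim_j V(\PP_{r_j},\underline U_{r_j}^*)=\lim_j V^*(\PP_{r_j})=V^*(\PP)$ by citing Theorem \ref{thm:robustness}, i.e.\ convergence of the optimal values. Strictly, the fact $V^*(\PP_r)\to V^*(\PP)$ is asserted inside that proof, not in its stated conclusion $V(\PP,\underline U_r^*)\to V^*(\PP)$.

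You instead compare with an arbitrary fixed competitor $\underline A$. For it, $V(\PP_{r_j},\underline A)\to V(\PP,\underline A)$ is just weak$^*$ convergence against the bounded continuous function $\fc_{\underline A}$. This is the standard $\Gamma$-convergence template: a liminf inequality along the diagonal and a constant recovery sequence for the limsup. What this buys you:
\begin{itemize}
\item It is self-contained and never uses Theorem \ref{thm:robustness}.
\item It needs only joint lower semicontinuity of $V$ together with upper semicontinuity in the measure for fixed controls.
\item It yields $V^*(\PP_{r_j})\to V^*(\PP)$ as a byproduct.
\end{itemize}
The paper's version is shorter once the theorem is available.

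On the point you flag as the main obstacle: joint continuity of $V$ follows in one line from uniform continuity of $\fc$ on the compact set $\cP(\XX)\times\cP(\XX)\times\UU^T$. Indeed, $|V(P_j,\underline U_j)-V(P,\underline U)|\le\|\fc_{\underline U_j}-\fc_{\underline U}\|_\infty+|\int\fc_{\underline U}\,\de(P_j-P)|$. Your Arzel\`a--Ascoli fallback rests on exactly this uniform continuity of $\fc$, since that is where the equicontinuity of $\cG$ comes from. So the fallback does not bypass the issue, but it is also unnecessary, and your shorter joint-continuity route is sound.
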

\begin{proof}
Since $\UU$ is compact, so is $\UU^T$. Therefore, there is a subsequence $(\underline{U}_{r_j}^*)_{j=1}^\infty$ converging to some $\underline{U} = (U_t)_{t\in\cT} \in \UU^T$. Lemma \ref{lem:value-function-is-continuous} asserts that the value function is jointly continuous. Therefore, we have $V(\PP, \underline{U}) = \lim_{j\to\infty}V(\PP_{r_j}, \underline{U}_{r_j}^*) = \lim_{j\to\infty}V^*(\PP_{r_j}) = V^*(\PP)$, where the last equality is due to Theorem \ref{thm:robustness}.
\end{proof}

\section{Numerical Experiment}
This section is an evaluation of the proposed triply quantized training scheme on a toy problem of approximating the following function
\begin{equation*}
    T((x^i)_{i=1}^N) := \left(\dfrac{1}{\sum_{s=1}^N\e^{0.3\langle x^i, x^s\rangle}}\sum_{j=1}^{N}\e^{0.3\langle x^i, x^j\rangle}x^j\right)_{i=1}^N.
\end{equation*}
In short, we try to approximate the self-attention layer with identity weights of a Transformer with another Transformers with data we generate using $T$. Here is the experiment setup:
\begin{itemize}
    \item The inputs $(x^i)_{i=1}^N$ live in the space $([-1,1]^2)^N$. Using the previous notation, it corresponds to $\XX = [-1,1]^2$.
    \item We restricted our actions with $k = l = 2$. Recall that $W_t\in\RR^{d\times l}$, $A_t\in\RR^{l\times d}$, $b_t\in\RR^l$, $Q_t, K_t\in\RR^{k\times d}$, and $V_t\in\RR^{d\times d}$. We also applied the following constraint: Elements of any of these matrices and vectors are limited between $[-1,1]$.
    \item In this experiment, we used sequences of length $N=4$ and $\beta$ was chosen as $0.5$.
    \item The activation function was chosen as $\mathrm{ReLU}$.
    \item The time horizon (the number of layers) is $T=2$. 
    \item The number of data points generated for training is $K_{\text{train}} = 35$.
    \item The number of data points generated for testing is $K_{\text{test}} = 15$.
\end{itemize}

The distortion metric used in this example is the quadratic loss, that is after generating $K$-many data points $\bm{x}^k := (x^{i,k})_{i=1}^N$, $k=1,2,\ldots,K$. These pairs of points then constitute the data set $\cD := \{(\mathbf{x}^k, T(\mathbf{x}^k)) : k =1,2,\ldots,K\}$. We then train the Transformer structure defined in Section \ref{sec:formulation} using Algorithm \ref{alg:triply}, to obtain a hypothesis function $\hat{T}$ using which we compare $\|T(\mathbf{x}^k) - \hat{T}(\mathbf{x}^k)\|^2$ for each $k=1,2,\ldots,K$.

For the quantization levels, we fixed the state quantization level $n = 10$ and $\ell = 20$. The choice of the measure quantization level is mainly compatibility with the sequence length $N=4$ as we do not want to encounter with any floating-point-arithmetic errors. However, we let the action quantization level to grow. The reason is that we precomputed all possible states, to which Transformers can reach with the given initial data and the action set, before the dynamic programming for computational tractability. Therefore, the number of states is directly determined by the action quantization level. Again for computational tractability, we sampled $10$ actions each step and added them to the action set used in the previous step in order to adapt a faithful implementation of the triply quantized training framework we present. For $M$-many actions, this procedure results in $\sum_{t=0}^T M^t$ many actions, which is also growing very fast as $T$ or $M$ become larger. 

After the dynamic programming recursions, we ended up with a closed-loop policy, which is represented by a list of dictionaries whose key and value pairs are given by ensembles and actions, respectively. We then converted these closed-loop policies to an open-loop policy by computing the trajectories for initial ensembles. Using the open-loop policies, we tested the performance of actions for the data that was not used for the training procedure. 

In the next figure, we share the error-action level graph for training and test costs followed by a table illustrating the improvement.
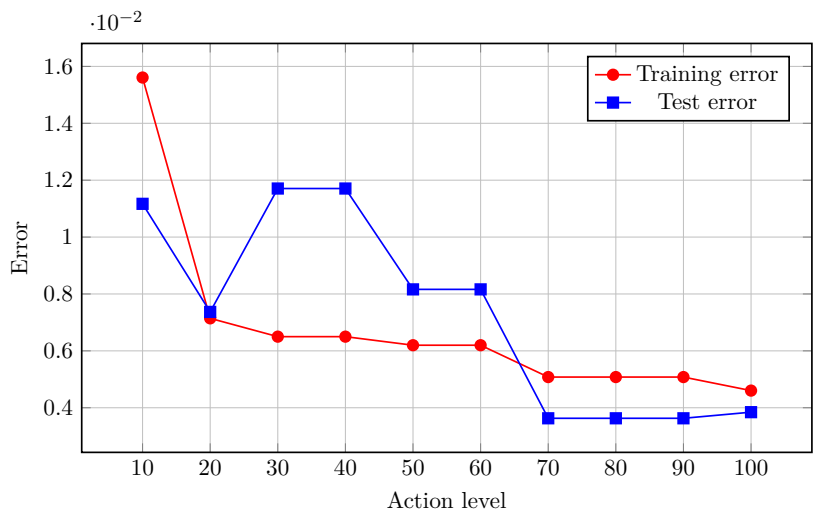
\begin{figure}[H]
\centering
\begin{tikzpicture}[scale=0.8]
\begin{axis}[
    width=0.9\textwidth,
    height=0.55\textwidth,
    xlabel={Action level},
    ylabel={Error},
    grid=major,
    legend pos=north east,
    mark size=2.5pt,
    thick,
]

\addplot[
    mark=*, red
]
coordinates {
(10, 0.01560839561553183)
(20, 0.007143234577789102)
(30, 0.0064993987438149185)
(40, 0.0064993987438149185)
(50, 0.0061983356366646526)
(60, 0.0061983356366646526)
(70, 0.005079052840165)
(80, 0.005079052840165)
(90, 0.005079052840165)
(100, 0.004602862316874408)
};
\addlegendentry{Training error}

\addplot[
    mark=square*, blue
]
coordinates {
(10, 0.011168871624469456)
(20, 0.0073691533606085905)
(30, 0.011706859986466693)
(40, 0.011706859986466693)
(50, 0.008161420560959528)
(60, 0.008161420560959528)
(70, 0.003629703727390215)
(80, 0.003629703727390215)
(90, 0.003629703727390215)
(100, 0.0038447366811493067)
};
\addlegendentry{Test error}

\end{axis}
\end{tikzpicture}
\caption{Training and test errors as a function of action level.}
\end{figure}

\begin{table}[H]
\centering
\begin{tabular}{ccccc}
\toprule
Number of actions & Train error & Train improv. (\%) & Test error & Test improv. (\%) \\
\midrule
10  & 0.01561 & 0.0  & 0.01117 & 0.0 \\
20  & 0.00714 & 54.2 & 0.00737 & 34.0 \\
30  & 0.00650 & 58.3 & 0.01171 & -4.8 \\
40  & 0.00650 & 58.3 & 0.01171 & -4.8 \\
50  & 0.00620 & 60.3 & 0.00816 & 27.0 \\
60  & 0.00620 & 60.3 & 0.00816 & 27.0 \\
70  & 0.00508 & 67.5 & 0.00363 & 67.5 \\
80  & 0.00508 & 67.5 & 0.00363 & 67.5 \\
90  & 0.00508 & 67.5 & 0.00363 & 67.5 \\
100 & 0.00460 & 70.5 & 0.00384 & 65.6 \\
\bottomrule
\end{tabular}
\caption{Relative improvement compared to number of actions.}
\end{table}

As it is seen, the errors tend to decrease as the number of actions grows. In early stages, we see perturbation in the error of test scores, which is mostly likely due to the random sampling of actions. As a validation of the previous discussion on the limiting effect of actions on states, the next table illustrates the correlation between the duration of training and the action level.

\begin{table}[H]
\centering
\begin{tabular}{cccc}
\toprule
Action level & Training error & Test error & Time (min:sec) \\
\midrule
10  & 0.01561 & 0.01117 & 0:06 \\
20  & 0.00714 & 0.00737 & 0:17 \\
30  & 0.00650 & 0.01171 & 0:35 \\
40  & 0.00650 & 0.01171 & 1:03 \\
50  & 0.00620 & 0.00816 & 1:40 \\
60  & 0.00620 & 0.00816 & 2:17 \\
70  & 0.00508 & 0.00363 & 3:09 \\
80  & 0.00508 & 0.00363 & 4:12 \\
90  & 0.00508 & 0.00363 & 5:27 \\
100 & 0.00460 & 0.00384 & 6:43 \\
\bottomrule
\end{tabular}
\medskip

\caption{Performance versus action level with runtime.}
\end{table}

Moreover, we plot the time-action levels using linear interpolation.

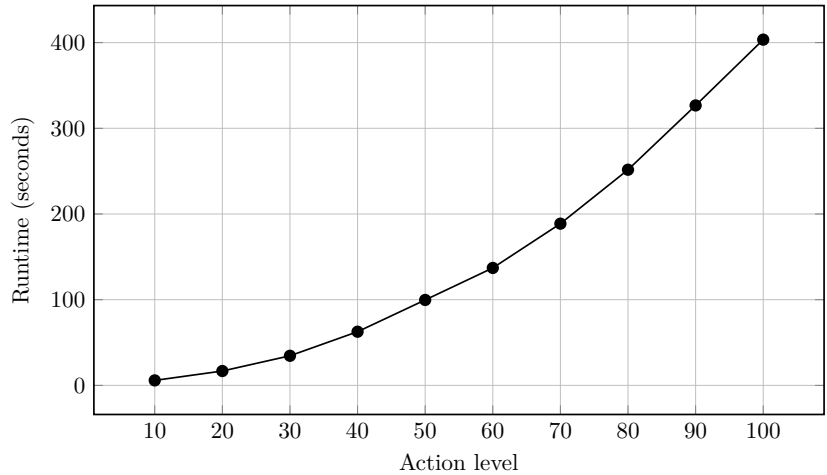
\begin{figure}[H]
\centering
\begin{tikzpicture}[scale=0.8]
\begin{axis}[
    width=0.9\textwidth,
    height=0.55\textwidth,
    xlabel={Action level},
    ylabel={Runtime (seconds)},
    grid=major,
    mark size=2.5pt,
    thick,
]

\addplot[
    mark=*,
]
coordinates {
(10, 5.809951066970825)
(20, 16.74672508239746)
(30, 34.53178596496582)
(40, 62.63665294647217)
(50, 99.69152998924255)
(60, 137.04569816589355)
(70, 188.7130250930786)
(80, 251.65863609313965)
(90, 326.66973209381104)
(100, 403.46485805511475)
};
\end{axis}
\end{tikzpicture}
\caption{Runtime as a function of action level.}
\end{figure}

The best-fitting quadratic function, in the least squares sense, is $P(M) = 0.0437611M^2-0.420042M+7.318885$, where $M$ is the number of actions, with the goodness of fit score $R^2=0.999673$. Therefore, it is possible to say that runtime of training is asymptotically and approximately $0.044M^2$.

To summarize, this toy example exemplifies that the triply quantized training procedure described in this paper leads to near-convergence to optimal controls as the quantization becomes finer.

\section{Conclusion}
In this paper, we formulated Transformers as a discrete-time McKean---Vlasov dynamics system and lifted this dynamics to a measure-valued discrete-time Markov decision process. This lifting yielded the Markov property for the ensemble of probability measures that are obtained from initial data, which allowed us to use the dynamic programming principle to establish near-optimality. Although the optimal policies obtained from dynamic programming principle are of closed-loop nature, we argued that they should be seem as a theoretical tool, but not a practical one. Then we showed that from the deterministic flow of measure-valued MDP and the deterministic nature of closed-loop policies, we can get initial-data-dependent open-loop policies, which are compatible with the orthodox training methods of Transformers by fixing neural network weights after training. Therefore, such policies are the correct abstraction of neural network weights from the perspective of optimal control theory. After that, we proposed a quantization-based learning algorithm, which is compatible with the theory presented in this paper, and we showed that it is possible to get near-optimal policies that are computationally more feasible than running dynamic programming on the infinite-dimensional measure-valued space.

Although, by quantizing, we reduced the infinite-dimensional measure-valued space to a finite-dimensional probability simplex, then to a finite set, the training-by-quantization is not designed as a scalable solver of this problem, but as a helper to conduct numerical experiment that is compatible with the theory. The main contribution of the framework presented is providing another perspective on Transformers to understand its structure better and guarantee the existence of optimal weights for it, rather than providing an efficient and scalable algorithm. Also, the subjects of this paper are not a competitor to gradient-descent-based techniques, however; since many applications of Transformers use gradient-descent methods for, in general, a non-convex and non-differentiable system, which indicates that the underlying structure of Transformers is not really understood. 

The present work so far is mainly theoretical and aiming to provide a structural comprehension of Transformers from the viewpoint of optimal control theory. As for future research directions, it is possible to consider mean-field case ($N\to\infty$) for this formulation and seek connections with mean-field games. Moreover, making the learning process with dynamics programming more tractable is critical since it has problems with high-dimensional data as many MDP-based systems do. We believe that these directions may have a contribution to understand the connections between Transformers and optimal control theory.

\appendix

\section{Auxiliary Results used in Section 4}
In this part, we provide all of the necessary lemmata and correlated results that are used to prove the main results of this section.
\begin{lemma}\label{lemma:f-lipschitz}
    Under Assumption \ref{assume:compactness}, $f$ satisfies the following Lipschitz condition for all $(p,x),(q,y)\in\XX$, $U\in\UU$, and all $\mu,\nu\in\cP(\XX)$
    \begin{equation*}
        \|f((p,x);U;\mu)-f((q,y);U;\nu)\|_{2,\lambda} \leq L_f(\|(p,x)-(q,y)\|_{2,\lambda} + W_{2,\lambda}(\mu,\nu))
    \end{equation*}
    for some $L_f \geq 0$.
\end{lemma}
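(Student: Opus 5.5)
We split $f$ into its positional component, its feed-forward part and its attention part, and bound each against $\|(p,x)-(q,y)\|_{2,\lambda}$ and $W_{2,\lambda}(\mu,\nu)$. The positional component is $p$, so it contributes $\sqrt{\lambda}|p-q|\le\|(p,x)-(q,y)\|_{2,\lambda}$. For the feed-forward part, $\sigma$ is $1$-Lipschitz, so
\[
\|W\sigma(Ax+b)-W\sigma(Ay+b)\|_2\le\|W\|\|A\|\|x-y\|_2 .
\]
By Assumption~\ref{assume:compactness}, the operator norms are bounded uniformly over $U\in\UU$ by some $C_\UU$. Hence it remains to treat the attention map
\[
g(x,\mu):=\frac{\int\mu_\mathrm{S}(\de z)\,\e^{\beta\inner{Qx}{Kz}}Vz}{\int\mu_\mathrm{S}(\de z)\,\e^{\beta\inner{Qx}{Kz}}}.
\]
We estimate it in two steps, $\|g(x,\mu)-g(y,\mu)\|+\|g(y,\mu)-g(y,\nu)\|$.

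\textbf{Dependence on $x$ for fixed $\mu$.} Set $h(x,z):=\e^{\beta\inner{Qx}{Kz}}$. On the compact set $\mathrm{S}\times\mathrm{S}$ this function satisfies $0<m\le h\le M$, where $m$ and $M$ depend only on $\beta$, $\operatorname{diam}$-type bounds for $\mathrm{S}$, and the bound on $\UU$. It is also Lipschitz in $x$, uniformly in $z$ and $U$, with constant $\beta M\|Q\|\|K\|\sup_{\mathrm{S}}\|z\|$. The map $(a,b)\mapsto a/b$ is Lipschitz on $\{\|a\|\le M\sup\|Vz\|,\ b\ge m\}$. Consequently $g(\cdot,\mu)$ is Lipschitz with a constant independent of $\mu$ and $U$.

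\textbf{Dependence on $\mu$ for fixed $y$.} Both numerator and denominator have the form $\int\mu(\de(p,z))\,\psi(z)$, where $\psi(z)=h(y,z)$ or $\psi(z)=h(y,z)Vz$. Each such $\psi$ is bounded and Lipschitz in $z$, with constant $L_\psi$ uniform in $y$ and $U$; viewed as a function on $\XX$ it is Lipschitz for $\|\cdot\|_{2,\lambda}$ because $\|z-z'\|_2\le\|(p,z)-(p',z')\|_{2,\lambda}$. For any coupling $\pi\in\Pi(\mu,\nu)$ we have
\[
\Big|\int\psi\,\de\mu-\int\psi\,\de\nu\Big|\le L_\psi\int\|X-Y\|_{2,\lambda}\,\de\pi\le L_\psi\Big(\int\|X-Y\|_{2,\lambda}^2\,\de\pi\Big)^{1/2}
\]
by Kantorovich duality or directly via Jensen. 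Taking the infimum over couplings gives the bound $L_\psi W_{2,\lambda}(\mu,\nu)$. Combining this with the quotient Lipschitz bound (denominators $\ge m$) yields $\|g(y,\mu)-g(y,\nu)\|\le C\,W_{2,\lambda}(\mu,\nu)$.

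Summing the pieces gives the claim with $L_f:=1+C_\UU+C'$, using $\|x-y\|_2\le\|(p,x)-(q,y)\|_{2,\lambda}$.

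The main obstacle is ensuring that every constant is uniform in $U\in\UU$, $x\in\mathrm{S}$ and the measures. This needs compactness to bound $\|Q\|,\|K\|,\|V\|,\|W\|,\|A\|$ and $\sup_{\mathrm{S}}\|z\|$. Most importantly, it needs the lower bound $m=\exp(-\beta\,\sup\|Q\|\|K\|\sup_\mathrm{S}\|z\|^2)>0$ on the softmax denominators, since this is what keeps the quotient Lipschitz. The resulting constant $L_f$ is in general exponential in $\beta$, but it is finite.
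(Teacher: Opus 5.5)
Your proof is correct and follows essentially the same route as the paper: separate the positional and feed-forward parts, bound the attention numerator and denominator in $x$ for a fixed measure and in the measure for a fixed point, use the uniform positive lower bound on the softmax denominator to make the quotient Lipschitz, and pass from a $W_1$-type estimate to $W_{2,\lambda}$. Your direct coupling-plus-Jensen step for the measure dependence is normalized by the Lipschitz constant $L_\psi$ and works for vector-valued integrands, so it is actually a cleaner version of the paper's duality step, which normalizes by the sup bound $B$ rather than by a Lipschitz constant.
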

\begin{proof}
    Since the estimations are tedious, we go step-by-step. First, for fixed $U\in\UU$ and for any $(r,z)\in\XX$ and
    \begin{equation*}
        f((r,z);U,\omega) = \left(r, W(\sigma(Az+b)) + \frac{N(z,\omega)}{D(z,\omega)}\right)
    \end{equation*}
    where $N(z,\omega) := \int_{\mathrm{S}}\omega_{\mathrm S}(\de z^\prime)\e^{\beta\langle Qz, Kz^\prime\rangle}Vz^\prime$ and $D(z, \omega) := \int_{\mathrm S} \omega_{\mathrm S}(\de z^\prime)\e^{\beta\langle Qz, Kz^\prime\rangle}$. By Assumption \ref{assume:compactness}, we have
    \begin{itemize}
        \item $\|N(z,\omega)\| \leq \e^{\beta\|Q\|\|K\|\|z\|^2} \leq \e^{\beta(\operatorname{diam}(\UU)\operatorname{diam}(\mathrm{S}))^2}\operatorname{diam}(\UU)\operatorname{diam}(\mathrm{S}) =: B_N$.
        \item $|D(z,\omega)| \leq \e^{\beta(\operatorname{diam}(\UU)\operatorname{diam}(\mathrm{S}))^2} =: B_D$.
    \end{itemize}
    Since $x\mapsto \e^{\beta\langle Qx, Kz^\prime\rangle}Vz^\prime$ is a smooth function for any $z^\prime\in\mathrm{S}$, it is also a globally Lipschitz function. Write $L_1$ for the corresponding Lipschitz constant. From here, it follows that $N$ and $D$ satisfy the following Lipschitz conditions:
    \begin{align*}
        \|N(x,\mu) - N(y,\nu)\| &= \|N(x,\mu) - N(y, \mu) + N(y,\mu) - N(y,\nu)\| \\
        &\leq \|N(x,\mu) - N(y, \mu)\| + \|N(y,\mu) - N(y,\nu)\|\\
        &= \left\|\int_{\mathrm{S}}\mu_{\mathrm S}(\de z^\prime)(\e^{\beta\langle Qx, Kz^\prime\rangle}- \e^{\beta\langle Qy, Kz^\prime\rangle})Vz^\prime\right\| \\
        &\qquad+ \left\|\int_{\mathrm{S}}\mu_{\mathrm S}(\de z^\prime)\e^{\beta\langle Qy, Kz^\prime\rangle}Vz^\prime - \int_{\mathrm{S}}\nu_{\mathrm S}(\de z^\prime)\e^{\beta\langle Qy, Kz^\prime\rangle}Vz^\prime\right\| \\
        &\leq L_1\operatorname{diam}(\mathrm S)\operatorname{diam}(\UU)\|x-y\| + B \cdot W_{2,\lambda}(\mu,\nu) 
\end{align*}
where the last inequality is obtained as follows: The first summand is a consequence of Lipschitz nature of the map $x\mapsto \e^{\beta\langle Qx, Kz^\prime\rangle}Vz^\prime$. Since both $\mathrm{S}$ and $\UU$ are compact, the map $(Q,K,V,z,z^\prime)\mapsto\e^{\beta\langle Qz, Kz^\prime\rangle}Vz^\prime$ is bounded and say $B$ is the minimal bound. Then,
\begin{align*}
    \left\|\int_{\mathrm{S}}\mu_{\mathrm S}(\de z^\prime)\e^{\beta\langle Qy, Kz^\prime\rangle}Vz^\prime - \int_{\mathrm{S}}\nu_{\mathrm S}(\de z^\prime)\e^{\beta\langle Qy, Kz^\prime\rangle}Vz^\prime\right\| &= B\cdot\bigg\|\int_{\mathrm{S}}\mu_{\mathrm S}(\de z^\prime)\frac{\e^{\beta\langle Qy, Kz^\prime\rangle}Vz^\prime}{B} \\&\hspace{5em}- \int_{\mathrm{S}}\nu_{\mathrm S}(\de z^\prime)\frac{\e^{\beta\langle Qy, Kz^\prime\rangle}Vz^\prime}{B}\bigg\|  \\
    &\leq B\cdot W_1(\mu_\mathrm{S},\nu_\mathrm{S}) \leq B \cdot W_{2,\lambda}(\mu,\nu).
\end{align*}
Here, we normalize the inner function to obtain a function with unit Lipschitz constant. This maneuver enables us to use the dual form of $W_1$ \cite[Remark 6.5]{villani2008optimal}. In addition, we have $W_1(\mu_\mathrm{S},\nu_\mathrm{S}) \leq W_{2,\lambda}(\mu,\nu)$ since $p$-Wasserstein distances increase as $p$ increases \cite[Remark 6.6]{villani2008optimal} and $W_{2}(\mu_\mathrm{S},\nu_\mathrm{S}) \leq W_{2,\lambda}(\mu,\nu)$ as we include the distance between positional encodings to the cost structure.

So, by writing 
\begin{equation*}
    \|N(x,\mu) - N(y,\nu)\| \leq \max\{L_1\operatorname{diam}(\mathrm S)\operatorname{diam}(\UU), B\}(\|x-y\| + W_{2,\lambda}(\mu,\nu))
\end{equation*}
we establish Lipschitz property of $N$. We write $L_N = \max\{L_1\operatorname{diam}(\mathrm S)\operatorname{diam}(\UU), B\}$ in order to be a little bit more concise. With a similar argument, we also have
\begin{equation*}
    |D(x,\mu) - D(y,\nu)| \leq L_D(\|x-y\| + W_{2,\lambda}(\mu,\nu)).
\end{equation*}

Before moving on to estimating $f$, we remind that the compactness assumptions force $D(z,\omega)$ to be bounded away from zero. Write $C_{\text{min}} := \min_{z,\omega}D(z,\omega)$. Then
\begin{align*}
    \left\|\left(W\sigma(Ax + b) - W\sigma(Ay + b) + \frac{N(x,\mu)}{D(x,\mu)} - \frac{N(y,\nu)}{D(y,\nu)}\right)\right\|_{2} &\leq \bigg\| W\sigma(Ax + b) - W\sigma(Ay + b)\bigg\|_{2} \\
    &+\bigg\|\frac{D(y,\mu)N(x,\mu) - D(x,\mu)N(y,\nu)}{D(x,\mu)D(y,\nu)}\bigg\|_{2}\\
    &\leq \|W\|\cdot\|A\|\cdot\operatorname{Lip}(\sigma)\cdot\|x-y\|_{2}\\
    &\qquad+\frac{1}{C_{\text{min}}^2}\left\|(D(y,\nu) - D(x,\mu))N(x,\mu)\right\|_{2}\\
    &\qquad+\frac{1}{C_{\text{min}}^2}\left\|D(x,\mu)(N(x,\mu) - N(y,\nu))\right\|_{2}\\
    &\leq \operatorname{diam}(\UU)^2\|x-y\|_{2}\\
    &+ \frac{1}{C_{\text{min}}^2}(L_DB_N + L_NB_D)(\|x-y\|_2 \\
    &\hspace{10em}+ W_{2,\lambda}(\mu,\nu)).
\end{align*}
Letting $L_2 := \operatorname{diam}(\UU)^2(L_DB_N + L_NB_D)C_{\text{min}}^{-2}$, we get
\begin{equation*}
    \left\|\left(W\sigma(Ax + b) - W\sigma(Ay + b) + \frac{N(x,\mu)}{D(x,\mu)} - \frac{N(y,\nu)}{D(y,\nu)}\right)\right\|_{2} \leq L_2(\|x-y\|_2 + W_{2,\lambda}(\mu,\nu)).
\end{equation*}
Consequently,
\begin{align*}
    \|f(p,x;U,\mu)-f(p,y;,U,\nu)\|_{2,\lambda} &= \left\|\left(W\sigma(Ax + b) - W\sigma(Ay + b) + \frac{N(x,\mu)}{D(x,\mu)} - \frac{N(y,\nu)}{D(y,\nu)}\right)\right\|_{2}\\
    &\leq L_2(\|x-y\|_2 + W_{2,\lambda}(\mu,\nu)).
\end{align*}
A simple application of quadratic mean-arithmetic mean inequality and sub-additivity of square root gives us:
    \begin{align*}
        \|f(X;U;\mu) - f(Y; U; \nu)\|_{2,\lambda} &\leq \sqrt{\lambda|p-q|^2 + L_2(\|x-y\|_2 + W_{2,\lambda}(\mu,\nu))^2}\\
        &\leq \sqrt{\lambda|p-q|^2 + 2L_2(\|x-y\|_2^2 + W_{2,\lambda}(\mu,\nu)^2)}\\
        &\leq \max\{1, \sqrt{2L_2}\}\sqrt{(\lambda|p-q|^2 + \|x-y\|_2^2) + W_{2,\lambda}(\mu,\nu)^2}\\
        &\leq \max\{1, \sqrt{2L_2}\}(\|(p,q)-(q,y)\|_{2,\lambda} + W_{2,\lambda}(\mu,\nu)).
    \end{align*}
\end{proof}
From now on, we write $L_f$ for $\max\{1, \sqrt{2L_2}\}$ in order to make the notation easier to follow. The Lipschitz nature of $f$ and the compositional structure of the flow map $\Phi$ yield us the following Lipschitz type estimate, which comes handy while showing the convergence of the quantized problem to the original one.
\begin{corollary}\label{cor:phi-lipschitz}
    For any $\bm{\mu},\bm{\nu}\in\cP(\XX)^K$ and $U\in\UU$, we have
    \begin{equation*}
        \sum_{k=1}^K W_{2,\lambda}(\Phi(\mu^k,U), \Phi(\nu^k, U)) \leq L_\Phi \sum_{k=1}^KW_{2,\lambda}(\mu^k,\nu^k)
    \end{equation*}
    for some $L_\Phi \geq 0$. We here write $\bm{\mu} = (\mu^1,\ldots,\mu^K)$ and $\bm{\nu} = (\nu^1,\ldots,\nu^K)$.
\end{corollary}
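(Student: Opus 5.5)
The estimate is per $k$, so it suffices to show that for every $\mu,\nu\in\cP(\XX)$ and $U\in\UU$
\begin{equation*}
    W_{2,\lambda}(\Phi(\mu,U),\Phi(\nu,U)) \leq L_\Phi\, W_{2,\lambda}(\mu,\nu).
\end{equation*}
Summing over $k\in\cK$ then gives the corollary with the same constant. I will use a coupling argument based on the push-forward representation \eqref{eq:measure-evolution}, $\Phi(\mu,U)=\mu\circ T_{\mu,U}^{-1}$, together with the Lipschitz estimate of Lemma \ref{lemma:f-lipschitz}.

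First fix an optimal coupling $\pi^*\in\Pi(\mu,\nu)$ for $W_{2,\lambda}(\mu,\nu)$. It exists by compactness of $\Pi(\mu,\nu)$ and continuity of $c_\lambda^2$, as noted after \eqref{eq:p-wasserstein}. Define $\tilde\pi := \pi^*\circ (T_{\mu,U}\times T_{\nu,U})^{-1}$. Its marginals are $\mu\circ T_{\mu,U}^{-1}=\Phi(\mu,U)$ and $\nu\circ T_{\nu,U}^{-1}=\Phi(\nu,U)$, so $\tilde\pi\in\Pi(\Phi(\mu,U),\Phi(\nu,U))$. Measurability of $T_{\mu,U}$ and $T_{\nu,U}$ follows from Proposition \ref{prop:continuity-of-mv}. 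Hence
\begin{equation*}
    W_{2,\lambda}(\Phi(\mu,U),\Phi(\nu,U))^2 \leq \int_{\XX\times\XX}\pi^*(\de X,\de Y)\,\|f(X;U;\mu)-f(Y;U;\nu)\|_{2,\lambda}^2 .
\end{equation*}

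Next, apply Lemma \ref{lemma:f-lipschitz} pointwise inside the integral, then use $(a+b)^2\le 2a^2+2b^2$:
\begin{equation*}
    \|f(X;U;\mu)-f(Y;U;\nu)\|_{2,\lambda}^2 \le 2L_f^2\big(\|X-Y\|_{2,\lambda}^2 + W_{2,\lambda}(\mu,\nu)^2\big).
\end{equation*}
Since $\pi^*$ is optimal, $\int \|X-Y\|_{2,\lambda}^2\,\de\pi^* = W_{2,\lambda}(\mu,\nu)^2$. This gives $W_{2,\lambda}(\Phi(\mu,U),\Phi(\nu,U))^2\le 4L_f^2 W_{2,\lambda}(\mu,\nu)^2$, so $L_\Phi := 2L_f$ works. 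This $L_\Phi$ is independent of $U$, $\mu$ and $\nu$, because $L_f$ depends only on $\operatorname{diam}(\mathrm S)$, $\operatorname{diam}(\UU)$, $\beta$ and $\lambda$.

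The main point needing care is that Lemma \ref{lemma:f-lipschitz} must hold uniformly in $U\in\UU$ and for arbitrary pairs $\mu,\nu$, including the positional coordinate. Here $f$ keeps the positional label $p$ unchanged, so the term $\lambda|p-q|^2$ passes through untouched. I will check that the lemma's constant does not depend on the particular $U$; the compactness in Assumption \ref{assume:compactness} gives this, and in particular gives the lower bound $C_{\min}>0$ on the softmax denominator. Everything else is routine.
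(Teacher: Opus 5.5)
Your proposal is correct and follows the same route as the paper: push an optimal coupling forward through $T_{\mu,U}\times T_{\nu,U}$, then apply Lemma~\ref{lemma:f-lipschitz} pointwise under the integral. Your execution is actually the more careful one, for two reasons. First, the paper's write-up takes $\pi^*$ optimal for the image measures $\Phi(\mu,U),\Phi(\nu,U)$ and "pulls it back", but its final identity $\int\|X-Y\|_{2,\lambda}^2\,\de\pi^*=W_{2,\lambda}(\mu,\nu)^2$ requires $\pi^*\in\Pi(\mu,\nu)$ optimal, exactly as in your version. Second, the paper drops the factor $2$ from $(a+b)^2\le 2a^2+2b^2$, so your $L_\Phi=2L_f$ is what the argument really gives, not $\sqrt{2}L_f$.
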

\begin{proof}
    We prove the simple case $K=1$, from which the general case follows. Indeed, let $\mu,\nu\in\cP(\XX)$ and $U\in\UU$, then let $\pi^*$ be an $W_{2,\lambda}$-optimal coupling of $\Phi(\mu,U)$ and $\Phi(\nu, U)$. Define
    \begin{equation*}
        \tilde{\pi}(\de(\tilde{p},\tilde{x}),\de(\tilde{q}, \tilde{y})) := \pi^*(f^{-1}(\de(\tilde{p},\tilde{x}), U, \mu), f^{-1}(\de(\tilde{q}, \tilde{y}), U, \nu)).
    \end{equation*}
    By a change of variables argument and by Lemma \ref{lemma:f-lipschitz}, we get
    \begin{align*}
        W_{2,\lambda}(\Phi(\mu,U), \Phi(\nu, U))^2 &\leq \int_{\XX\times\XX} \tilde{\pi}(\de(\tilde{p},\tilde{x}),\de(\tilde{q},\tilde{y})) \|(\tilde{p},\tilde{x})-(\tilde{q}, \tilde{y})\|_{2,\lambda}^2\\
        &\leq\int_{\XX\times\XX}\pi^*(\de(p,x), \de(q,y))\|f((p,x);U;\mu)-f((q,y);U;\nu)\|_{2,\lambda}^2\\
        &\leq L_f^2\int_{\XX\times\XX}\pi^*(\de(p,x), \de(q,y))(\|(p,x)-(q,y)\|_{2,\lambda}^2 + W_{2,\lambda}(\mu,\nu)^2)\\
        &=2L_f^2 W_{2,\lambda}(\mu,\nu)^2.
    \end{align*}
    Thus, writing $L_\Phi := \sqrt{2}L_f$,
    \begin{equation*}
        W_{2,\lambda}(\Phi(\mu,U), \Phi(\nu, U)) \leq L_\Phi W_{2,\lambda}(\mu,\nu).
    \end{equation*}
\end{proof}

\begin{lemma}\label{lemma:approximate-measure}
    Let $\mu\in\cP(\XX)$. Then $W_{2,\lambda}(\mu,\mu\circ (Q^{(n)})^{-1}) < \frac{1}{n}$.
\end{lemma}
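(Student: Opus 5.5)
The natural coupling is the one induced by the quantizer itself. Set $\pi := \mu\circ(\mathrm{id}, Q^{(n)})^{-1}$, the law of the pair $(X, Q^{(n)}(X))$ when $X\sim\mu$. Its first marginal is $\mu$ and its second marginal is $\mu\circ(Q^{(n)})^{-1}$, so $\pi\in\Pi(\mu,\mu\circ(Q^{(n)})^{-1})$. This coupling gives an upper bound on $W_{2,\lambda}$ without solving any transport problem. Measurability of $\pi$ follows because ties in the nearest-neighbour rule are broken measurably, as stipulated when $\tilde Q_n$ was defined.

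Next I compute the cost pointwise. For $X=(p,x)\in\XX$ we have $Q^{(n)}(X)=(p,s(x))$, where $s(x)\in\operatorname{argmin}_{s\in\mathrm{S}_n}\|x-s\|$. The positional component is left unchanged, so the $\lambda|p-q|^2$ term vanishes identically. Therefore
\begin{equation*}
\|X-Q^{(n)}(X)\|_{2,\lambda}^2=\|x-s(x)\|_2^2=\min_{j\in J(n)}\|x-s^{(n),j}\|^2<\frac{1}{n^2},
\end{equation*}
where the last inequality is the defining $\frac1n$-net property of $\mathrm{S}_n$.

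Integrating this bound against $\pi$ gives
\begin{equation*}
W_{2,\lambda}(\mu,\mu\circ(Q^{(n)})^{-1})^2\le\int_\XX\mu(\de X)\,\|X-Q^{(n)}(X)\|_{2,\lambda}^2.
\end{equation*}
It remains to pass from the pointwise strict bound to a strict bound on the integral. I would argue that the function $x\mapsto\min_j\|x-s^{(n),j}\|$ is continuous on the compact set $\mathrm{S}$. It therefore attains its maximum $r_n$, and $r_n<\frac1n$ because the strict inequality holds at every point. The integral is then at most $r_n^2<\frac1{n^2}$, and taking square roots gives $W_{2,\lambda}(\mu,\mu\circ(Q^{(n)})^{-1})\le r_n<\frac1n$.

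The only delicate point is this strictness, which the compactness argument handles. If only the non-strict covering inequality were available, the same argument would give $\le\frac1n$, which suffices for the subsequent use in Theorem \ref{thm:triply-quantized-efficiency}.
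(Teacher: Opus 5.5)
Your proof is correct and follows the paper's route. Like the paper, you bound $W_{2,\lambda}$ by the cost of the coupling $\mu\circ(\mathrm{id},Q^{(n)})^{-1}$, note that the positional term vanishes, and apply the $\frac1n$-net property pointwise. Your extra compactness step, in which $x\mapsto\min_j\|x-s^{(n),j}\|$ attains a maximum $r_n<\frac1n$ on $\mathrm{S}$, is worth keeping: the paper's proof only concludes $W_{2,\lambda}(\mu,\mu\circ(Q^{(n)})^{-1})\le\frac1n$, although the lemma asserts the strict inequality.
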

\begin{proof}
    Let $\pi$ be a coupling for the law of $((p,x), Q^{(n)}((p,x)))$ with $(p,x)\sim\mu\in\cP(\XX)$. Then,
    \begin{align*}
        W_{2,\lambda}(\mu,\mu\circ (Q^{(n)})^{-1})^2 &\leq \int_{\XX\times\XX}\pi(\de(p,x), \de (q,y))\|(p,x)-(q,y)\|_{2,\lambda}^2\\
        &\leq \EE_{(p,x)\sim\mu}\|(p,x)-Q^{(n)}((p,x))\|_{2,\lambda}^2\\
        &\leq \int_{\XX\times\XX}\pi(\de (p,x), \de (q,y))\frac{1}{n^2}.
    \end{align*}
    In the last inequality, we use $\|(p,x)-Q^{(n)}((p,x))\|_{2,\lambda} \leq \frac{1}{n}$, given by the definition of $(Q^{(n)})^{-1}$. Thus, $W_{2,\lambda}(\mu,\mu\circ (Q^{(n)})^{-1}) \leq \frac{1}{n}$.
\end{proof}

\begin{corollary}\label{cor:lipschitz-phi_n}
    For all $n\in\NN$, there is $L_{\Phi^{(n)}} \geq 0$ such that for all $\bar{\mu},\bar{\nu}\in\cP(\XX_n)$ and $U\in\UU$, we have
    \begin{equation*}
        W_{2,\lambda}(\Phi^{(n)}(\bar{\mu}, U), \Phi^{(n)}(\bar{\nu}, U)) \leq L_{\Phi^{(n)}}W_{2,\lambda}(\bar{\mu},\bar{\nu}).
    \end{equation*}
\end{corollary}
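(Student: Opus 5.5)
The plan is to exploit that $\XX_n=\mathrm{PE}_N\times\mathrm{S}_n$ is a \emph{finite} set, so that on $\cP(\XX_n)$ the metric $W_{2,\lambda}$ is comparable to the total variation distance with constants depending only on $n$. The constant $L_{\Phi^{(n)}}$ is then allowed to depend on $n$, as the statement permits. Let $\delta_n:=\min\{\|X_a-X_b\|_{2,\lambda}: X_a\neq X_b\in\XX_n\}>0$ and $D:=\sqrt{\operatorname{diam}(\mathrm{S})^2+\lambda}$. For $\bar\mu,\bar\nu\in\cP(\XX_n)$, any coupling must move at least mass $\|\bar\mu-\bar\nu\|_{TV}$ off the diagonal, and moving it costs at most $D^2$ per unit. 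This gives
\begin{equation*}
\delta_n^2\,\|\bar\mu-\bar\nu\|_{TV}\le W_{2,\lambda}(\bar\mu,\bar\nu)^2\le D^2\,\|\bar\mu-\bar\nu\|_{TV}.
\end{equation*}
The first step is to record this two-sided comparison.

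Next I split into two regimes according to a threshold $r_n>0$ to be fixed. If $W_{2,\lambda}(\bar\mu,\bar\nu)\ge r_n$, I use the trivial bound $W_{2,\lambda}(\Phi^{(n)}(\bar\mu,U),\Phi^{(n)}(\bar\nu,U))\le D$, which gives the claim with constant $D/r_n$. If $W_{2,\lambda}(\bar\mu,\bar\nu)<r_n$, I write $\bar\mu=c+\mu'$ and $\bar\nu=c+\nu'$, where $c=\bar\mu\wedge\bar\nu$ is the common part and $\mu'(\XX_n)=\nu'(\XX_n)=\|\bar\mu-\bar\nu\|_{TV}\le W_{2,\lambda}(\bar\mu,\bar\nu)^2/\delta_n^2$. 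The pushforwards then split as $c\circ f_n(\cdot;U,\bar\mu)^{-1}+\mu'\circ f_n(\cdot;U,\bar\mu)^{-1}$ and similarly for $\bar\nu$. I couple the primed parts arbitrarily, at cost at most $D^2\,\|\bar\mu-\bar\nu\|_{TV}$. I couple the common parts atom by atom, sending $f_n(X;U,\bar\mu)$ to $f_n(X;U,\bar\nu)$. On each atom this costs $\|Q^{(n)}(f(X,U,\bar\mu))-Q^{(n)}(f(X,U,\bar\nu))\|_{2,\lambda}^2$.

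The main obstacle is this last term, because the nearest-neighbour quantizer $Q^{(n)}$ is discontinuous. What I would try first is the following. By Lemma \ref{lemma:f-lipschitz}, $\|f(X,U,\bar\mu)-f(X,U,\bar\nu)\|_{2,\lambda}\le L_f W_{2,\lambda}(\bar\mu,\bar\nu)$. Whenever the two quantized points differ, their distance is at most $D$, while the unquantized distance is at least $L_f W$. Hence a jump across a Voronoi boundary costs at most $D^2$ against an unquantized displacement of order $L_f W$, and the local ratio must be controlled using the $2/n$ bound from Lemma \ref{lemma:approximate-measure}. Concretely, I would bound each atom's cost by $\bigl(L_f W_{2,\lambda}(\bar\mu,\bar\nu)+\tfrac{2}{n}\bigr)^2\mathbf 1_{\{\text{jump}\}}$. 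I would then argue that for $r_n$ small (depending on $n$, $U$ ranging over the compact $\UU$, and the finitely many atoms), jumps can only occur when the displacement is comparable to $\delta_n$. That comparability would make the cost proportional to $W^2$ with an $n$-dependent factor.

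Summing the two couplings would then give the square of the claimed bound. I expect the uniformity of this ``no jump for small $W$'' argument over $U$ and over the tie-breaking rule to be the delicate point, and it is where the choice of $r_n$ has to be made. Finally I take $L_{\Phi^{(n)}}$ to be the maximum of the constants from the two regimes.
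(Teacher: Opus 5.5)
You correctly identify the discontinuity of $Q^{(n)}$ as the real difficulty, and your two-sided TV comparison and large-distance regime are fine. The step you hope to establish in the small-distance regime, however, is false. You need that for $W_{2,\lambda}(\bar\mu,\bar\nu)<r_n$ no atom of the common part changes its quantization cell, and this fails for reasons unrelated to uniformity in $U$ or tie-breaking. Because $\bar\mu$ ranges over the whole simplex $\cP(\XX_n)$, the unquantized point $f(X,U,\bar\mu)$ can lie arbitrarily close to a cell boundary of $\mathrm{S}_n$. A displacement of size $L_f W_{2,\lambda}(\bar\mu,\bar\nu)$, however small, then moves it into the neighbouring cell, and the whole mass of that atom is transported a distance of at least $\delta_n$. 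In fact no constant $L_{\Phi^{(n)}}$ exists in general, even for a single fixed $U$.

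Here is a counterexample. Take $d=1$, $\mathrm{S}=[-1,1]$, and an action with $W=0$, $Q=0$, $V=1$; this is admissible, for instance, when entries range over $[-1,1]$ as in the numerical section. Then $f((p,x),U,\mu)=(p,\int z\,\mu_{\mathrm S}(\de z))$, so every atom is sent to the mean. Let $a<b$ be adjacent points of $\mathrm{S}_n$, set $h=b-a$, and for small $\epsilon>0$ put
\begin{equation*}
\bar\mu_\epsilon=(\tfrac{1}{2}+\epsilon)\delta_{(p_1,a)}+(\tfrac{1}{2}-\epsilon)\delta_{(p_1,b)},\qquad \bar\nu_\epsilon=(\tfrac{1}{2}-\epsilon)\delta_{(p_1,a)}+(\tfrac{1}{2}+\epsilon)\delta_{(p_1,b)}.
\end{equation*}
Their means are $\tfrac{a+b}{2}\mp\epsilon h$, which lie strictly on opposite sides of the cell boundary $\tfrac{a+b}{2}$. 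Hence $\Phi^{(n)}(\bar\mu_\epsilon,U)=\delta_{(p_1,a)}$ and $\Phi^{(n)}(\bar\nu_\epsilon,U)=\delta_{(p_1,b)}$, which are at $W_{2,\lambda}$-distance $h$. On the other hand, $W_{2,\lambda}(\bar\mu_\epsilon,\bar\nu_\epsilon)=h\sqrt{2\epsilon}$, so the ratio $1/\sqrt{2\epsilon}$ is unbounded as $\epsilon\to 0$.

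So the statement as written is false in general, and no route, yours or another, can prove it. The paper's proof only appears to succeed because it uses $\|Q^{(n)}(X)-Q^{(n)}(Y)\|_{2,\lambda}\le\|X-Y\|_{2,\lambda}$. That inequality fails for a nearest-neighbour quantizer (take $X,Y$ on either side of a cell boundary), which is exactly the mechanism of the example above. The paper's proof also starts from an optimal coupling of the outputs, instead of pushing forward an optimal coupling of $\bar\mu$ and $\bar\nu$.

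What is true, with an $n$-independent constant, is a Lipschitz bound up to an additive quantization error. Since $\Phi^{(n)}(\bar\mu,U)=\Phi(\bar\mu,U)\circ(Q^{(n)})^{-1}$, Lemma \ref{lemma:approximate-measure}, Corollary \ref{cor:phi-lipschitz} and the triangle inequality give
\begin{equation*}
W_{2,\lambda}(\Phi^{(n)}(\bar\mu,U),\Phi^{(n)}(\bar\nu,U))\le L_\Phi W_{2,\lambda}(\bar\mu,\bar\nu)+\frac{2}{n}.
\end{equation*}
This weaker bound is all that the recursion in Step 2 of Theorem \ref{thm:triply-quantized-efficiency} actually needs. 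The extra $\frac{2}{n}$ per layer only adds a term of order $\frac{1}{n}\sum_{s=0}^{T}L_\Phi^s$ to the error bound, which still vanishes as $n\to\infty$.
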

\begin{proof}
    Let $\pi^*$ be an optimal coupling between $\Phi^{(n)}(\bar{\mu}, U)$ and $\Phi^{(n)}(\bar{\nu}, U)$. We define
    \begin{equation*}
        \tilde{\pi}(\de(\tilde{p},\tilde{x}), \de(\tilde{q},\tilde{y})) := \pi^*(f_n^{-1}(\de(\tilde{p},\tilde{x}), U, \bar{\mu}), f_n^{-1}(\de(\tilde{q},\tilde{y}), U, \bar{\nu})).
    \end{equation*}
    Then
    \begin{align*}
        W_{2,\lambda}(\Phi^{(n)}(\bar{\mu}, U), \Phi^{(n)}(\bar{\nu}, U))^2 &\leq \int_{\XX\times\XX}\tilde{\pi}(\de(\tilde{p},\tilde{x}), \de(\tilde{q},\tilde{y}))\|(\tilde{p},\tilde{x})-(\tilde{q},\tilde{y})\|_{2,\lambda}^2\\
        &\leq\int_{\XX\times\XX}\pi^*(\de(p,x), \de (q,y))\|f_n((p,x); U, \bar{\mu})- f_n((q,y); U, \bar{\nu})\|_{2,\lambda}^2\\
        &\leq \int_{\XX\times\XX}\pi^*(\de(p,x), \de (q,y))\|f((p,x); U, \bar{\mu})-f((q,y); U, \bar{\nu})\|_{2,\lambda}^2\\
        &\leq L_f^2\int_{\XX\times\XX}\pi^*(\de(p,x), \de (q,y))(\|(p,x)-(q,y)\|_{2,\lambda}^2 + W_{2,\lambda}(\bar{\mu},\bar{\nu})^2)\\
        &\leq 2L_f^2W_{2,\lambda}(\bar{\mu},\bar{\nu})^2.
    \end{align*}
    In these estimations, we use Lemma \ref{lemma:f-lipschitz} and use the fact that $\|Q^{(n)}(p,x) - Q^{(n)}(q,y)\|_{2,\lambda} \leq \|(p,x) - (q,y)\|_{2,\lambda}$, which is evident by the definition of $Q^{(n)}$ and the reverse triangle inequality.
\end{proof}

\acks{This research has been partially supported by the Turkish Academy of Sciences through the 2025 GEBIP Award, and by the Natural Sciences and Engineering Research Council of Canada (NSERC)}

\end{document}